\documentclass[11pt]{article}

\usepackage[utf8]{inputenc}
\usepackage[T1]{fontenc}
\usepackage{hyperref}
\usepackage{url}
\usepackage{booktabs}
\usepackage{amsfonts}
\usepackage{nicefrac}
\usepackage{microtype}
\usepackage{xcolor}
\usepackage{amsmath,amssymb,amsthm,mathtools}
\usepackage{enumitem}
\usepackage[margin=1in]{geometry}
\usepackage[square,sort&compress]{natbib}
\allowdisplaybreaks

\newtheorem{theorem}{Theorem}[section]
\newtheorem{proposition}[theorem]{Proposition}
\newtheorem{lemma}[theorem]{Lemma}
\newtheorem{corollary}[theorem]{Corollary}
\newtheorem{definition}[theorem]{Definition}
\newtheorem{remark}[theorem]{Remark}

\newcommand{\R}{\mathbb{R}}

\newcommand{\dd}{\mathrm{d}}
\newcommand{\M}{\mathcal{M}}
\newcommand{\E}{\mathcal{E}}
\newcommand{\Lcal}{\mathcal{L}}

\title{\bf Large-Step Training Dynamics of a \\Two-Factor Linear Transformer Model}
\author{Krishnakumar Balasubramanian \\ Department of Statistics, University of California, Davis\\
\texttt{kbala@ucdavis.edu}}
\date{\today}
\begin{document}
\maketitle

\begin{abstract}
Gradient-flow analyses show that simplified linear transformers can learn the in-context linear-regression algorithm, but they do not explain the finite-step behavior of gradient descent at large learning rates. Motivated by empirical work on high-learning-rate transformer instabilities and by the cubic-map phase diagram for quadratic regression, we study an exactly reducible one-prompt linear-transformer training problem. After normalization, the dynamics reduce to a two-factor product map with an effective step-size parameter \(\mu\). On the balanced slice, this map recovers the known scalar cubic transition from monotone convergence to catapult convergence, periodic and chaotic bounded nonconvergence, and divergence. We then analyze the full two-dimensional system and show that, for \(0<\mu<2\), it has an explicit invariant Chebyshev ellipse separating forward-invariant regions; this ellipse carries off-balanced chaotic dynamics but is transversely repelling, while balanced scalar attractors can be transversely attracting. These results show that large constant learning rates can change the training attractor of the learned transformer rather than merely accelerating convergence: beyond sharp stability thresholds, finite-step training may settle into cycles, bounded chaos, or divergence instead of a single in-context linear-regression solution. We also discuss the consequences for mini-batch gradient descent based training methods.
\end{abstract}

\section{Introduction}

Transformers trained on sequences of examples can display in-context learning (ICL): at test time, they use the prompt itself as the data from which to infer the prediction rule. For linear regression prompts, \cite{zhang2024trained} gave a rigorous gradient-flow analysis of a single-layer linear self-attention model, proving that from suitable initialization the flow reaches a global minimizer competing with the best linear predictor. This provides a clean theoretical account of how a trained transformer can implement an in-context linear-regression algorithm; see also~\cite{wu2024how,zhang2024context} for additional related works in this context. However, gradient flow is the infinitesimal-step limit. Practical transformer training uses finite learning rates, often large enough that various forms of training instabilities occur. 

Empirically, \cite{wortsman2024small} show that many such training instabilities seen in large-scale transformers can be reproduced in smaller models by increasing the learning rate, with diagnostics such as attention-logit growth and output-logit divergence stabilized by warmup, weight decay, qk-layernorm, and \(\mu\)Param; see also~\cite{gilmer2022a}. A useful theory of transformer training should therefore not only identify global minima but also explain how finite-step training reaches or fails to reach them as the learning rate varies. A closely related instability phenomenon appears in quadratic regression: \cite{lobacheva2021periodic,agarwala2023second,chen2023beyond,wang2022large, song2023trajectory,zhu2023understanding, chen2023stability,zhu2024quadratic, liang2026gradient} show that large-step gradient descent reduces to a cubic map with five phases---monotone convergence, catapult convergence, periodic nonconvergence, chaotic nonconvergence, and divergence. In the transformers, because attention logits are bilinear in the learned factors, large-step training of bilinear parameterizations can produce analogous discrete-time bifurcations that are invisible to gradient flow.

\paragraph{Setup and main map.} Starting from the one-prompt linear self-attention reduction of \cite{zhang2024trained}, finite-step gradient descent reduces, after rescaling, to the two-dimensional map
\begin{equation}
\label{eq:main-map-intro}
    \Phi_\mu(a,b)=\bigl(a-(ab-\mu)b,\; b-(ab-\mu)a\bigr),\qquad \mu>0.
\end{equation}
The parameter \(\mu\) is the effective step-size: if the original learning rate is \(\eta\), the prompt-response scale is \(y\), and the two scalar factor updates have relative geometry constant \(c>0\), then \(\mu=2\eta|y|\sqrt{c}\), so sweeping \(\eta\) is equivalent to sweeping \(\mu\). The zero-training-error set is the curve of fixed points \(\M_\mu=\{ab=\mu\}\), and the local normal multiplier there is \(1-a^2-b^2\); the stable segment shrinks as \(\mu\) increases. In error/imbalance coordinates \(e=ab-\mu\), \(w=a-b\), the map becomes
\begin{equation}
\label{eq:error-imbalance-intro}
    e^+=e^3+(\mu-2)e^2+(1-2\mu-w^2)e,\qquad w^+=(1+e)w.
\end{equation}
The balanced line \(w=0\) is invariant and gives exactly the cubic map of \cite{chen2023stability}, \(F_\mu(e)=e\bigl((e+\mu)(e-2)+1\bigr)\), identifying the balanced transformer dynamics with the scalar quadratic-regression phase diagram. With \(D=w^2-(2-\mu)(2-e)\), \(q_\mu(e)=e^2+\mu e+1\), and \(C(e)=e^3-3e\), the full map has the exact normal form
\begin{equation}
\label{eq:normal-form-intro}
    e^+=C(e)-eD,\qquad D^+=q_\mu(e)D.
\end{equation}
For \(0<\mu<2\), \(q_\mu>0\), so the sign of \(D\) is invariant. The curve \(D=0\) is the ellipse
\begin{equation}
\label{eq:ellipse-intro}
    \E_\mu=\left\{(a,b):\frac{(a+b)^2}{2+\mu}+\frac{(a-b)^2}{2-\mu}=4\right\},
\end{equation}
and on \(\E_\mu\), \(e^+=C(e)\); writing \(e=2\cos\theta\) gives \(e^+=2\cos(3\theta)\). Thus the two-dimensional map contains an off-balanced Chebyshev chaotic set for every \(0<\mu<2\), a repelling separatrix rather than an attractor. For \(1<\mu<2\), balanced scalar attracting sets have negative transverse Lyapunov exponent for all non-endpoint invariant measures, so balanced behavior can attract nearby imbalanced trajectories. We provide a basic introduction to dynamical systems relevant for this work in Section~\ref{app:dynamical-systems-background}.

\paragraph{Contributions.} (i) We derive \eqref{eq:main-map-intro} from finite-step gradient descent on a one-prompt linear self-attention objective. (ii) We identify the balanced invariant line with the cubic family of \cite{chen2023stability}, giving explicit learning-rate thresholds for monotone and catapult convergence, period-two bifurcation, chaos and divergence. (iii) We prove exact structural results for the genuine two-dimensional system: local stability of the zero-error curve, the normal form \eqref{eq:normal-form-intro}, the invariant Chebyshev ellipse, the sign-invariant decomposition, exact one-step landing sets, and rigidity of the interior \(D<0\). (iv) We analyze transverse stability (\(\E_\mu\) is transversely repelling; balanced invariant measures are transversely attracting for \(1<\mu<2\)), and extend the framework to mini-batch GD, showing that mini-batching acts as random switching between maps carrying its own Chebyshev separatrix.

\textbf{Other related works.} The training instabilities studied here are related to the edge-of-stability phenomenon in neural-network optimization, where gradient descent operates near or beyond the classical local stability threshold and exhibits nonmonotone loss, oscillatory motion, and sharpness growth; see, for example,~\cite{cohen2021gradient,arora2022understanding,damian2023self}.  Our setting differs in that the reduced transformer dynamics are low-dimensional enough to expose the bifurcations explicitly: the loss of stability of the zero-error set leads to period-two behavior, higher-period dynamics, chaos, or divergence.  Large step-sizes can also have beneficial effects, and have been shown to accelerate convergence in settings such as logistic regression~\citep{wu2024large,zhang2025minimax,wu2025large}; this is consistent with the catapult regime in our model, where training is nonmonotone but still converges.  More broadly, instability and nonconvergence phenomena for deterministic and stochastic gradient methods have been studied in several settings, including large-learning-rate training, stochasticity-induced effects, chaotic optimization dynamics, and neural-network loss landscapes~\citep{lewkowycz2020large,kong2020stochasticity,kodryan2022training,herrmann2022chaotic,zhang2022neural}.  

This work contributes to this line by giving an exact discrete-time mechanism, derived from a linear self-attention training problem, in which large transformer step-sizes create new dynamical attractors rather than merely perturbing the gradient-flow limit.

\section{From a one-prompt linear transformer to the two-parameter product map}
\label{sec:transformer-to-two-parameter-map}

Fix one linear-regression prompt
\[
    E=
    \begin{pmatrix}
        x_1 & \cdots & x_N & x_{\rm q}\\
        y_1 & \cdots & y_N & 0
    \end{pmatrix}
    \in\R^{(d+1)\times(N+1)},
\]
with \(x_i,x_{\rm q}\in\R^d\), \(y_i=\langle w,x_i\rangle\), and query response \(y_{\rm q}=\langle w,x_{\rm q}\rangle\). A single linear self-attention layer is
\[
    f_{\rm LSA}(E;\theta)
    =E+W^{PV}E\frac{E^\top W^{KQ}E}{\rho};
\]
following \cite{zhang2024trained} we take \(\rho=N\), so that the attention term has the scale of an empirical second moment. The one-prompt reduction freezes all but one scalar sector of the trainable matrices and collects the active degrees of freedom into \(U=(u_1,\ldots,u_{d+1})\in\R^{(d+1)\times(d+1)}\), with column-wise vectorization \(u=\operatorname{vec}(U)\). After absorbing fixed constants from the prompt and the normalization \(\rho=N\), the query prediction is the quadratic form \(\widehat y(U)=u^\top H u\), where
\[
    H=X_{\rm q}\otimes G,
    \qquad
    G=\frac{1}{2N}EE^\top,
    \qquad
    X_{\rm q}=
    \begin{pmatrix}
        0_{d\times d} & x_{\rm q}\\
        x_{\rm q}^\top & 0
    \end{pmatrix}.
\]
Thus the one-prompt squared prediction loss is
\begin{equation}
\label{eq:one-prompt-transformer-loss}
    \mathcal L(U)=\tfrac12\bigl(u^\top H u-y_{\rm q}\bigr)^2.
\end{equation}

\begin{proposition}[Two-parameter product map from one-prompt linear self-attention]
\label{prop:transformer-to-two-parameter-map}
Let \(g_{d+1}=Ge_{d+1}\). On the single active scalar sector, write \(u_{d+1}^{(k)}=b_k e_{d+1}\), and define \(a_k=\sum_{i=1}^d x_{{\rm q},i}\langle u_i^{(k)},g_{d+1}\rangle\) and \(\kappa=\sum_{i=1}^d x_{{\rm q},i}^2\|g_{d+1}\|^2>0\). Gradient descent with step-size \(\eta\) on \eqref{eq:one-prompt-transformer-loss} induces
\begin{equation}
\label{eq:scalar-transformer-recurrence}
    a_{k+1}=a_k-2\eta\kappa\,E_k b_k,\qquad b_{k+1}=b_k-2\eta E_k a_k,\qquad E_k=2a_kb_k-y_{\rm q}.
\end{equation}
Equivalently, with \(\alpha,\beta>0\) satisfying \(\kappa=\alpha^2/\beta^2\), \(a_k=\alpha\widetilde a_k\), \(b_k=\beta\widetilde b_k\), \(x=y_{\rm q}/(2\alpha\beta)\), \(\theta=4\eta\alpha^2\),
\begin{equation}
\label{eq:two-parameter-product-map}
    \widetilde a_{k+1}=\widetilde a_k-\theta(\widetilde a_k\widetilde b_k-x)\widetilde b_k,\qquad \widetilde b_{k+1}=\widetilde b_k-\theta(\widetilde a_k\widetilde b_k-x)\widetilde a_k.
\end{equation}
\end{proposition}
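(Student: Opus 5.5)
The plan is a direct computation in three steps: evaluate the prediction \(u^\top Hu\) on the active sector, write out one gradient step block by block, and then rescale. For the first step, write \(u=\operatorname{vec}(U)\) with column blocks \(u_1,\dots,u_{d+1}\). The Kronecker structure \(H=X_{\rm q}\otimes G\) gives
\[
u^\top Hu=\sum_{i,j}(X_{\rm q})_{ij}\,u_i^\top Gu_j .
\]
Since \(X_{\rm q}\) is nonzero only in its last row and column, off the \((d+1,d+1)\) entry, and \(G\) is symmetric, this reduces to \(2\sum_{i\le d}x_{{\rm q},i}\,u_i^\top Gu_{d+1}\). On the sector \(u_{d+1}=b\,e_{d+1}\) we have \(Gu_{d+1}=b\,g_{d+1}\), so the prediction is \(2ab\) with \(a=\sum_{i\le d}x_{{\rm q},i}\langle u_i,g_{d+1}\rangle\). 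This gives \(E_k=2a_kb_k-y_{\rm q}\). The absence of a \(u_{d+1}^\top Gu_{d+1}\) term, because \((X_{\rm q})_{d+1,d+1}=0\), is what keeps the prediction bilinear.

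For the second step, \(\nabla\mathcal L=2E\,Hu\), with blocks \((Hu)_i=\sum_j(X_{\rm q})_{ij}Gu_j\).

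For \(i\le d\), this gives \(\nabla_{u_i}\mathcal L=2E\,x_{{\rm q},i}\,b\,g_{d+1}\). Hence
\[
u_i^{+}=u_i-2\eta E\,b\,x_{{\rm q},i}\,g_{d+1}.
\]
Pairing with \(g_{d+1}\), multiplying by \(x_{{\rm q},i}\) and summing over \(i\le d\) yields \(a^+=a-2\eta\kappa Eb\), with \(\kappa=\sum_i x_{{\rm q},i}^2\|g_{d+1}\|^2\). Only the projection of each \(u_i\) onto \(g_{d+1}\) matters, so the other components of the \(u_i\) never enter the recursion.

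For \(u_{d+1}\), the gradient is \(2E\,G\sum_{i\le d}x_{{\rm q},i}u_i\). Its \(e_{d+1}\)-coordinate is \(2E\sum_i x_{{\rm q},i}\langle g_{d+1},u_i\rangle=2Ea\), so \(b^+=b-2\eta Ea\).

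I expect this last point to be the main obstacle. The full gradient in \(u_{d+1}\) is generally not parallel to \(e_{d+1}\), so the ansatz \(u_{d+1}=b\,e_{d+1}\) is not automatically preserved. I would handle it exactly as the one-prompt reduction prescribes: the remaining coordinates of \(u_{d+1}\) are frozen, so only the \(e_{d+1}\)-coordinate is updated. The gradient step restricted to the active sector is then precisely \eqref{eq:scalar-transformer-recurrence}. I would state this restriction explicitly, since the recursion is only exact under it.

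The third step is the rescaling, which is routine. Substitute \(a=\alpha\widetilde a\), \(b=\beta\widetilde b\), and write \(E=2\alpha\beta(\widetilde a\widetilde b-x)\) with \(x=y_{\rm q}/(2\alpha\beta)\).

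For the first coordinate, the \(a\)-update gives
\[
\widetilde a^+=\widetilde a-4\eta\kappa\beta^2(\widetilde a\widetilde b-x)\widetilde b
=\widetilde a-4\eta\alpha^2(\widetilde a\widetilde b-x)\widetilde b,
\]
using \(\kappa\beta^2=\alpha^2\).

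For the second coordinate, the \(b\)-update gives
\[
\widetilde b^+=\widetilde b-4\eta\alpha^2(\widetilde a\widetilde b-x)\widetilde a.
\]

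Both coordinates therefore share the same step \(\theta=4\eta\alpha^2\), which is exactly \eqref{eq:two-parameter-product-map}. The only role of the constraint \(\kappa=\alpha^2/\beta^2\) is to balance the two factor updates so that a single \(\theta\) appears in both.
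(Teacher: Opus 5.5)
Your proposal is correct and follows essentially the same route as the paper's proof. Both arguments compute \(\nabla\mathcal L=2E\,Hu\) blockwise (the paper writes \(Hu\) as \(\operatorname{vec}(GUX_{\rm q})\), while you write \((Hu)_i=\sum_j (X_{\rm q})_{ij}Gu_j\)), obtain the \(a\)- and \(b\)-recursions by pairing with \(g_{d+1}\) and taking the \(e_{d+1}\)-coordinate, check \(u^\top Hu=2ab\), and rescale using \(\kappa\beta^2=\alpha^2\). Keep your explicit remark that the \(u_{d+1}\)-gradient is generally not parallel to \(e_{d+1}\), so the recursion is exact only for the sector-restricted (frozen or projected) update: the paper's proof handles this only implicitly, by taking the \(e_{d+1}\)-coordinate.
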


\begin{remark}[Single effective step-size]
\label{rem:two-to-one-parameter}
Setting
\[
    A_k=\sqrt\theta\,\widetilde a_k,
    \qquad
    B_k=\sqrt\theta\,\widetilde b_k,
    \qquad
    \mu=\theta x,
\]
reduces \eqref{eq:two-parameter-product-map} to
\[
    A_{k+1}=A_k-(A_kB_k-\mu)B_k,
    \qquad
    B_{k+1}=B_k-(A_kB_k-\mu)A_k,
\]
with
\[
    \mu=2\eta y_{\rm q}\frac{\alpha}{\beta}=2\eta y_{\rm q}\sqrt\kappa.
\]
Thus increasing the original learning rate \(\eta\), increasing the query target magnitude, or increasing the prompt-dependent geometry factor \(\sqrt\kappa\) all move the normalized dynamics along the same one-parameter family.
\end{remark}

The variables \(A_k,B_k\) are rescaled active coordinates of the two learned factors; their product records the normalized prediction residual. Indeed,
\[
    u_k^\top H u_k-y_{\rm q}
    =2\alpha\beta(\widetilde a_k\widetilde b_k-x)
    =\frac{2\alpha\beta}{\theta}(A_kB_k-\mu).
\]
Consequently, the original one-prompt loss restricted to this sector is a positive constant multiple of
\[
    \ell_\mu(A,B)=\frac12(AB-\mu)^2.
\]
Moreover,
\[
    \nabla\ell_\mu(A,B)=\bigl((AB-\mu)B,(AB-\mu)A\bigr),
\]
so the normalized recursion is exactly unit-step gradient descent on \(\ell_\mu\). The hyperbola \(AB=\mu\) is the normalized zero-training-error set. The origin is also stationary, but when \(\mu\ne0\) it has loss \(\mu^2/2\); it is a spurious stationary point created by the factorized bilinear parameterization. In the rest of the paper we relabel \(A,B\) as \(a,b\).

\paragraph{Connection to transformer training.} The factors \(a,b\) represent the two sides of a bilinear attention product, such as query/key or value/projection factors in the linear self-attention reduction. Every statement about \(\Phi_\mu\) below should therefore be read as a statement about how this active attention mode evolves as the learning rate varies. In particular, the same loss landscape can have different finite-step attractors as \(\eta\) changes: the minima are fixed, but the discrete update map changes its stability and bifurcation structure.

\section{Geometry and the balanced scalar system}
\label{sec:geometry-balanced}

Let
\begin{equation}
\label{eq:Phi-def}
    \Phi_\mu(a,b)=\bigl(a-(ab-\mu)b,\; b-(ab-\mu)a\bigr)=(a,b)-\nabla\ell_\mu(a,b),\qquad \ell_\mu(a,b)=\tfrac12(ab-\mu)^2.
\end{equation}
The zero-error set is the hyperbola
\begin{equation}
\label{eq:zero-error-hyperbola}
    \M_\mu=\{(a,b)\in\R^2:ab=\mu\}.
\end{equation}
Every point of \(\M_\mu\) is a fixed point of \(\Phi_\mu\). The origin is also a fixed point, but for \(\mu\ne 0\) it is a nonzero-loss stationary point induced by the factorized parameterization.

\begin{proposition}[Local stability of the zero-error curve]
\label{prop:local-stability-zero-error}
The Jacobian of \(\Phi_\mu\) is
\begin{equation}
\label{eq:Jacobian-ab}
    D\Phi_\mu(a,b)=
    \begin{pmatrix}
        1-b^2 & \mu-2ab\\
        \mu-2ab & 1-a^2
    \end{pmatrix}.
\end{equation}
At a point \((a,b)\in\M_\mu\), the tangent direction \((a,-b)\) has multiplier \(1\), and the normal direction \((b,a)\) has multiplier \(1-a^2-b^2\). Hence \(\M_\mu\) is normally attracting at \((a,b)\) if and only if \(|1-a^2-b^2|<1\). Equivalently, in terms of \(w=a-b\) and using \(ab=\mu\), this condition is
\begin{equation}
\label{eq:normal-attracting-condition-w}
    2\mu+w^2<2.
\end{equation}
In particular, the zero-error curve has a normally attracting segment only when \(0<\mu<1\).
\end{proposition}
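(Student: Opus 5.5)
The plan is a direct computation in three steps. First I differentiate $\Phi_\mu$ componentwise. Write $a^+=a-(ab-\mu)b=a-ab^2+\mu b$. Then
\[
\partial_a a^+=1-b^2,\qquad \partial_b a^+=-2ab+\mu.
\]
By the symmetry $a\leftrightarrow b$, the second row is $(\mu-2ab,\,1-a^2)$. This gives \eqref{eq:Jacobian-ab}. Note that it is symmetric, since it equals $I-\nabla^2\ell_\mu$.

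Second, I restrict to a point of $\M_\mu$, where $ab=\mu$ and so $\mu-2ab=-ab$. The Jacobian becomes $I-\begin{pmatrix} b^2 & ab\\ ab & a^2\end{pmatrix}=I-vv^\top$ with $v=(b,a)$. Hence $(a,-b)$, which is orthogonal to $v$, has multiplier $1$. It is tangent to $\M_\mu$ because $\nabla(ab)=(b,a)$ is normal to the curve. The vector $v$ itself has multiplier $1-\|v\|^2=1-a^2-b^2$.

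Since every point of $\M_\mu$ is fixed, the linearization at a fixed point on a curve of fixed points has the neutral tangent eigenvalue $1$. Normal attraction is therefore decided by the remaining eigenvalue alone. By the symmetry of the Jacobian, its eigenvector is exactly the orthogonal normal direction. So the curve is normally attracting iff $|1-a^2-b^2|<1$. (The boundary case of equality is not attracting at the linear level, consistent with the strict "iff" in terms of the linear multiplier.)

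Third, I rewrite the condition. Because $a^2+b^2>0$ on $\M_\mu$ when $\mu>0$, we have $1-a^2-b^2<1$ automatically. The condition thus reduces to $a^2+b^2<2$. Using $a^2+b^2=(a-b)^2+2ab=w^2+2\mu$, this becomes exactly \eqref{eq:normal-attracting-condition-w}.

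For the final claim, \eqref{eq:normal-attracting-condition-w} is solvable in $w$ iff $2\mu<2$, i.e. $\mu<1$. Every $w$ is realized on $\M_\mu$: the map $a\mapsto a-\mu/a$ is onto $\R$ on each branch. So the set of admissible $w$ is the nonempty interval $|w|<\sqrt{2-2\mu}$ precisely when $0<\mu<1$. For $\mu\ge1$ no point of $\M_\mu$ is normally attracting.

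The only step needing care is the justification that "normally attracting" is governed solely by the normal multiplier. I would state it as the standard normal-hyperbolicity criterion for a curve of fixed points, with the tangent eigenvalue identically $1$. I do not expect any computational obstacle.
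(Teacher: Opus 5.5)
Your proposal is correct and follows the paper's proof essentially step for step: componentwise Jacobian, evaluation at $ab=\mu$, the multipliers on $(a,-b)$ and $(b,a)$, and then $a^2+b^2=w^2+2\mu$. Writing the restricted Jacobian as $I-vv^\top$ with $v=(b,a)$ is a tidier form of the paper's direct matrix--vector check, and your remark that every $w\in\R$ is attained on $\M_\mu$ makes explicit what the paper uses implicitly when it invokes the minimum value $w=0$.
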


\emph{Transformer interpretation.} Gradient flow is attracted to the entire zero-error manifold under \cite{zhang2024trained}. Finite-step gradient descent is locally attracted only to a portion of that manifold, and the attracting portion shrinks as \(\mu\uparrow 1\). A larger imbalance \(|w|=|a-b|\) between the two factors---the two sides of a bilinear attention product---reduces the maximum stable learning rate, giving a mechanistic explanation for why qk-layernorm enlarges the stable learning-rate range. The condition \(2\mu+w^2<2\) decouples two sources of instability: effective step-size and factor imbalance. In practice, an initialization that balances the two factors (\(w\approx0\)) allows the largest stable learning rate, whereas an imbalanced initialization forces \(\eta\) to be reduced even though the loss landscape has not changed. The same condition also predicts that weight-tying or factor-symmetrizing interventions that shrink \(w\) enlarge the stable region.

The following coordinates separate the prediction error from factor imbalance.

\begin{definition}[Error and imbalance]
\label{def:error-imbalance}
Let \(e=ab-\mu\) be the normalized training error and \(w=a-b\) the factor imbalance.
\end{definition}
The imbalance variable measures the antisymmetric part of the factorization: \(w=0\) means that the two learned factors are balanced, while \(|w|>0\) measures how far the update has moved away from this balanced representation. It is not itself a prediction error, but it changes the error dynamics through the term \(-w^2e\) below and therefore changes the finite-step stability threshold.

\begin{proposition}[Error/imbalance dynamics]
\label{prop:error-imbalance-dynamics}
The variables \((e,w)\) satisfy
\begin{equation}
\label{eq:ew-map}
    e^+=e^3+(\mu-2)e^2+(1-2\mu-w^2)e,\qquad w^+=(1+e)w,
\end{equation}
and the balanced line \(\Lcal=\{w=0\}\) is invariant.
\end{proposition}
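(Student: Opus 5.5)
The proof is a direct substitution, and I would organize it around the two symmetric combinations of the update. Write \(a^+=a-eb\) and \(b^+=b-ea\), where \(e=ab-\mu\); this is exactly \eqref{eq:Phi-def}.

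\textbf{Imbalance.} Subtracting the two updates gives
\[
w^+=a^+-b^+=(a-b)-e(b-a)=(1+e)w,
\]
which is the second equation in \eqref{eq:ew-map}. It also yields invariance of \(\Lcal\) at once: if \(w=0\) then \(w^+=0\) for any value of \(e\).

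\textbf{Error.} Next I expand the product:
\[
a^+b^+=(a-eb)(b-ea)=ab-e(a^2+b^2)+e^2ab.
\]
The key step is to eliminate \(a,b\) in favour of \((e,w)\), using two identities:
\begin{itemize}
\item \(ab=e+\mu\);
\item \(a^2+b^2=(a-b)^2+2ab=w^2+2(e+\mu)\).
\end{itemize}
Substituting these gives
\[
e^+=a^+b^+-\mu=e-e\bigl(w^2+2e+2\mu\bigr)+e^2(e+\mu).
\]
Collecting powers of \(e\):
\begin{itemize}
\item cubic term: \(e^3\);
\item quadratic terms: \(\mu e^2-2e^2=(\mu-2)e^2\);
\item linear terms: \((1-2\mu-w^2)e\).
\end{itemize}
This matches the first equation in \eqref{eq:ew-map}.

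The only step that needs care is the bookkeeping in this final expansion, in particular writing \(a^2+b^2\) through \(w\) and \(e\) so that every \(a,b\) dependence disappears. There is no conceptual obstacle.

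As a consistency check, setting \(w=0\) should recover the scalar cubic \(F_\mu(e)=e\bigl((e+\mu)(e-2)+1\bigr)\) quoted in the introduction. Expanding, \((e+\mu)(e-2)+1=e^2+(\mu-2)e+1-2\mu\), which agrees with the expression above.
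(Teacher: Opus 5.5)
Your proposal is correct and follows essentially the same route as the paper's proof. Both write the update as $a^+=a-eb$, $b^+=b-ea$, subtract to get $w^+=(1+e)w$, expand $a^+b^+$, and eliminate $a,b$ via $a^2+b^2=w^2+2(e+\mu)$ to obtain the cubic in $e$.
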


With \(s=a+b\), \(u=s^2\), \(v=w^2\) one has \(e=(u-v)/4-\mu\) and
\begin{equation}
\label{eq:uv-map}
    u^+=(1-e)^2u,\qquad v^+=(1+e)^2v.
\end{equation}
The imbalance mode is amplified whenever \(|1+e|>1\), i.e.\ whenever \(e>0\); the symmetric mode can also grow when \(|1-e|>1\).

\subsection{Balanced line: the scalar cubic}

When \(w_0=0\), the imbalance recursion gives \(w_k=0\) for all \(k\). The error then obeys the one-dimensional cubic map
\begin{equation}
\label{eq:F-def}
    e^+=F_\mu(e),\qquad F_\mu(e)=e^3+(\mu-2)e^2+(1-2\mu)e=e\bigl((e+\mu)(e-2)+1\bigr),
\end{equation}
exactly the cubic family appearing in \cite{chen2023stability} with the effective step-size parameter identified as \(a=\mu\).

\begin{proposition}[Invariant interval]
\label{prop:invariant-interval}
For \(0<\mu\le 2\), the interval \(I_\mu=[-\mu,2]\) is invariant under \(F_\mu\), i.e.\ \(F_\mu(I_\mu)\subseteq I_\mu\). The endpoints \(-\mu\) and \(2\) are fixed points, and the interior fixed point \(0\) is the zero-error solution.
\end{proposition}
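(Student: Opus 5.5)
Fixed points first: factor \(F_\mu(e)-e = e\bigl((e+\mu)(e-2)\bigr)\), since \(F_\mu(e)=e\bigl((e+\mu)(e-2)+1\bigr)\). Its roots are \(e=0,-\mu,2\), so the endpoints \(-\mu\), \(2\) and the interior point \(0\) are fixed. Also \(e=0\) corresponds to \(ab=\mu\), the zero-error set \(\M_\mu\) restricted to the balanced line.

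For invariance, the plan is to show \(-\mu\le F_\mu(e)\le 2\) for all \(e\in[-\mu,2]\) by factoring the two differences. The upper bound uses
\[
2-F_\mu(e)=-(e-2)\,e\,(e+\mu)+(2-e)=(2-e)\bigl(1+e(e+\mu)\bigr).
\]
On \(I_\mu\) we have \(2-e\ge0\), so it remains to check \(1+e^2+\mu e\ge0\), i.e.\ that \(q_\mu(e)\ge 0\). This follows from the discriminant \(\mu^2-4\le0\) for \(0<\mu\le2\). The lower bound uses
\[
F_\mu(e)+\mu=(e+\mu)\bigl(e(e-2)+1\bigr)+(\mu-(e+\mu)\cdot 0)\ \text{?}
\]
More carefully, \(F_\mu(e)+\mu = e(e+\mu)(e-2)+e+\mu=(e+\mu)\bigl(e(e-2)+1\bigr)=(e+\mu)(e-1)^2\). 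This is \(\ge0\) on \(I_\mu\) since \(e+\mu\ge0\). Together the two factorizations give \(F_\mu(I_\mu)\subseteq I_\mu\).

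I expect no real obstacle. The only points needing care are finding the two factorizations and noting that the hypothesis \(\mu\le2\) is used exactly for \(q_\mu\ge0\), which matches the role of \(q_\mu\) in the normal form \eqref{eq:normal-form-intro}. I would also remark that for \(\mu>2\), \(q_\mu\) has negative values on \((-\mu,0)\), so \(F_\mu\) exceeds \(2\) somewhere in \(I_\mu\). This shows the range \(0<\mu\le 2\) is sharp and foreshadows divergence.
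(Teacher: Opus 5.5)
Your proof is correct and follows the paper's argument. The paper likewise proves the two factorizations \(F_\mu(e)+\mu=(e-1)^2(e+\mu)\) and \(2-F_\mu(e)=(2-e)\,q_\mu(e)\) (Lemma~\ref{lem:endpoint-factorizations}) and uses nonnegativity of \(q_\mu\) for \(0<\mu\le 2\), with \(q_2=(e+1)^2\) at the endpoint, exactly as you do. Your identity \(F_\mu(e)-e=e(e+\mu)(e-2)\) is a slightly cleaner way to get the fixed points, since it also shows they are the only ones. You should delete the stray false-start line ending in ``?'' before the corrected computation.
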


The monotone training regime is determined by the inequality \(|F_\mu(e)|\le|e|\) on \(I_\mu\).

\begin{theorem}[Monotone error threshold]
\label{thm:monotone-threshold}
Let \(\mu>0\) and \(I_\mu=[-\mu,2]\). If \(0<\mu\le 2\sqrt2-2\), then \(|F_\mu(e)|\le|e|\) for every \(e\in I_\mu\). If \(\mu>2\sqrt2-2\), there exists \(e\in I_\mu\) with \(|F_\mu(e)|>|e|\). Consequently, \(2\sqrt2-2\) is the exact boundary between uniform one-step error contraction and possible one-step catapult growth on \(I_\mu\). For \(0<\mu\le2\), this statement is on the invariant interval of Proposition~\ref{prop:invariant-interval}.
\end{theorem}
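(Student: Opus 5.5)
Since \(F_\mu(0)=0\), the inequality \(|F_\mu(e)|\le|e|\) holds trivially at \(e=0\). For \(e\neq0\) we write \(F_\mu(e)=e\,g_\mu(e)\) with
\[
g_\mu(e)=(e+\mu)(e-2)+1=e^2+(\mu-2)e+1-2\mu .
\]
The claim \(|F_\mu(e)|\le|e|\) then becomes \(-1\le g_\mu(e)\le 1\) on \(I_\mu\setminus\{0\}\). We check the two sides separately.

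\emph{Upper bound.} We have \(g_\mu(e)\le1\) iff \((e+\mu)(e-2)\le0\), which holds exactly on \([-\mu,2]=I_\mu\). So the upper inequality holds on all of \(I_\mu\) for every \(\mu>0\). It is tight only at the endpoints, which are the fixed points of Proposition~\ref{prop:invariant-interval}.

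\emph{Lower bound.} Here \(g_\mu\ge-1\) iff \((e+\mu)(e-2)\ge-2\). The parabola \(h(e)=(e+\mu)(e-2)\) has vertex at \(e^*=(2-\mu)/2\) with minimum value
\[
h(e^*)=-\frac{(\mu+2)^2}{4}.
\]
Since \(e^*\in[-\mu,2]\) for every \(\mu>0\), the minimum of \(g_\mu\) over \(I_\mu\) is attained at an interior point. That minimum is \(\ge -1\) iff \((\mu+2)^2\le 8\), i.e.\ \(\mu\le2\sqrt2-2\). This proves the first assertion.

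For the converse, take \(\mu>2\sqrt2-2\). Then \(g_\mu(e^*)<-1\). We must check that \(e^*\neq0\), so that \(|F_\mu(e^*)|=|e^*|\,|g_\mu(e^*)|>|e^*|\) is a genuine violation. Now \(e^*=0\) only when \(\mu=2\). In that case we use continuity instead: \(g_2(e)=e^2-3\), so \(g_2(e)<-1\) for small \(e\neq0\), and any such \(e\) in \(I_2\) works. In general, continuity gives a whole neighborhood of \(e^*\) inside \(I_\mu\) on which \(g_\mu<-1\), and we pick any \(e\neq0\) from it. For \(\mu>2\) the interval \(I_\mu\) need not be invariant, but the statement only asks for existence of a point in \(I_\mu\), so the same argument applies.

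The only step needing any care is this converse, specifically ensuring that the witness point is nonzero and lies in \(I_\mu\). The rest is the direct factorization of \(F_\mu\). The final sentence of the theorem then follows from Proposition~\ref{prop:invariant-interval}.
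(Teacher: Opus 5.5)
Your proof is correct and follows essentially the same route as the paper's. You factor \(F_\mu(e)=e\,g_\mu(e)\), get the upper bound from \((e+\mu)(e-2)\le 0\) on \(I_\mu\), and read the threshold \((\mu+2)^2\le 8\) off the vertex \(e^*=1-\mu/2\); this is equivalent to the paper's \(1-\mu-\mu^2/4\ge 0\). For the converse you use \(e^*\) as the witness, with \(\mu=2\) handled separately via \(g_2(e)=e^2-3\), exactly as the paper does, and your observation that \(e^*\in(-\mu,2)\) for every \(\mu>0\) correctly covers \(\mu>2\).
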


\emph{Transformer interpretation.} Nonmonotone loss spikes can occur before the zero-error solution becomes unstable: for \(2\sqrt2-2<\mu\le 1\) the error can increase in a single step but still converges to zero overall. This is the discrete-time analogue of the ``catapult'' regime observed empirically in neural-network training, where loss spikes are followed by renewed convergence. The sharp constant \(2\sqrt2-2\approx0.828\) marks the boundary of this regime in the reduced model, and the nonmonotonicity is visible even below the flip-bifurcation threshold---so nonmonotone loss curves are not themselves a reliable indicator of impending divergence.

\subsection{Loss of stability and the first two-cycle}

Since \(F_\mu'(0)=1-2\mu\), the zero-error fixed point is locally attracting for \(0<\mu<1\), neutral at \(\mu=1\), and repelling for \(\mu>1\). The first period-two orbit appears at this flip bifurcation, and we give its stability in the full two-dimensional system.

\begin{proposition}[First period-two orbit, two-dimensional stability]
\label{prop:period-two}
For \(\mu\ge 1\), the points \(e_\pm=\tfrac12(1-\mu\pm\sqrt{\mu^2+2\mu-3})\) satisfy \(F_\mu(e_+)=e_-\), \(F_\mu(e_-)=e_+\). For \(\mu>1\) they are distinct, with longitudinal multiplier \((F_\mu^2)'(e_\pm)=7-4\mu-2\mu^2\) and, in the full map \eqref{eq:ew-map}, transverse two-step multiplier \((1+e_+)(1+e_-)=3-2\mu\). The two-cycle is locally attracting both inside the balanced line and in the full system iff \(1<\mu<\sqrt5-1\).
\end{proposition}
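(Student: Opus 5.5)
The plan is to compute the two-cycle by factoring \(F_\mu^2(e)-e\) by \(F_\mu(e)-e\). Then I will get both multipliers from Vieta's formulas for the resulting quadratic, never substituting the explicit roots.

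\textbf{Step 1: the two-cycle.} Write \(F_\mu(e)-e=e\bigl(e^2+(\mu-2)e-2\mu\bigr)=e(e-2)(e+\mu)\), so the fixed points are \(0\), \(2\) and \(-\mu\). The genuine period-two points solve \(F_\mu(e)+e=\text{const}\) type relations. Concretely, for a two-cycle \(\{p,q\}\) one has \(F(p)=q\) and \(F(q)=p\). Subtracting and dividing by \(p-q\) gives the symmetric relation
\[
p^2+pq+q^2+(\mu-2)(p+q)+(1-2\mu)=-1.
\]
Adding the two equations gives \(p^3+q^3+(\mu-2)(p^2+q^2)+(1-2\mu)(p+q)=p+q\).

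I set \(s=p+q\) and \(t=pq\) and solve these two relations. The expected solution is \(s=1-\mu\) and \(t=(1-\mu)^2/4-(\mu^2+2\mu-3)/4=(1-\mu)\), i.e. \(t=1-\mu\). This is consistent with \(e_\pm\) being the roots of \(e^2-(1-\mu)e+(1-\mu)=0\), whose discriminant is \((1-\mu)^2-4(1-\mu)=\mu^2+2\mu-3\). As a shortcut, I can simply verify that \(F_\mu(e)\equiv (1-\mu)-e\) modulo this quadratic, by polynomial division, which I expect to give a linear remainder equal to \(s-e\). Distinctness for \(\mu>1\) is immediate, since the discriminant \((\mu+3)(\mu-1)\) is positive.

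\textbf{Step 2: the multipliers.} The longitudinal multiplier is \(F'(e_+)F'(e_-)\) with \(F'(e)=3e^2+2(\mu-2)e+1-2\mu\). I reduce \(F'\) modulo the quadratic using \(e^2=se-t\), which gives a linear expression \(\alpha e+\beta\). The product then equals \(\alpha^2t+\alpha\beta s+\beta^2\), which should simplify to \(7-4\mu-2\mu^2\).

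For the transverse direction, \(w^+=(1+e)w\) on \(w=0\) makes the Jacobian of \eqref{eq:ew-map} lower triangular along the balanced line. The cross derivative \(\partial e^+/\partial w=-2we\) vanishes there, so the eigenvalues are \(F'(e)\) and \(1+e\). The transverse multiplier is therefore \(1+s+t=1+(1-\mu)+(1-\mu)=3-2\mu\). The two-step Jacobian is again triangular, so its eigenvalues are exactly the two products computed above.

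\textbf{Step 3: the stability conditions.} Transverse stability requires \(|3-2\mu|<1\), i.e. \(1<\mu<2\). Longitudinal stability requires \(-1<7-4\mu-2\mu^2<1\). The upper inequality reduces to \(\mu^2+2\mu-3>0\), i.e. \(\mu>1\). The lower inequality reduces to \(\mu^2+2\mu-4<0\), i.e. \(\mu<\sqrt5-1\approx1.236\). Since \(\sqrt5-1<2\), intersecting these conditions gives \(1<\mu<\sqrt5-1\). Both sides of the claimed equivalence are strict, so the boundary values, which give multiplier \(\pm1\), are excluded.

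The only real obstacle is Step 1, confirming the symmetric functions \(s,t\). The cleanest route is the polynomial-remainder check, after which everything else is Vieta arithmetic.
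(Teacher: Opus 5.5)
Your proposal is correct and follows essentially the same route as the paper. The paper verifies \(F_\mu(e)-(1-\mu-e)=(e-1)\bigl(e^2+(\mu-1)e+1-\mu\bigr)\), which is exactly your polynomial-remainder check; it then reduces \(F_\mu'\) modulo the quadratic to \(-(\mu+1)e+(\mu-2)\), uses Vieta with \(e_++e_-=e_+e_-=1-\mu\) for both multipliers, and intersects the same inequalities \(|7-4\mu-2\mu^2|<1\) and \(|3-2\mu|<1\). Your detour through the \(s,t\) symmetric relations is unnecessary once that identity is checked, and, like the paper, you read ``locally attracting'' as linear stability, meaning both multipliers lie strictly inside the unit disk.
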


\emph{Transformer interpretation.} \(1<\mu<\sqrt5-1\) is the first regime in which large-step training is stable but does not converge to a single zero-error transformer: the trajectory approaches a stable two-cycle, so consecutive checkpoints implement different predictors even though the loss remains bounded. This has a concrete consequence for model selection: near such a two-cycle, selecting a single checkpoint is phase-dependent, whereas averaging consecutive checkpoints interacts with the two-cycle structure. Beyond \(\mu=\sqrt5-1\) the two-cycle loses longitudinal stability and the balanced dynamics enter the later period-doubling and chaotic regimes. The transverse multiplier remains inside the stable range throughout \(1<\mu<2\), so nearby imbalanced trajectories can still be attracted toward the balanced cyclic dynamics even after the first two-cycle has lost its internal stability.

\subsection{The full scalar phase diagram}

\begin{theorem}[Balanced large-step phase diagram]
\label{thm:chen-balanced}
On \(w=0\), the dynamics reduce to the cubic family of \cite{chen2023stability} with parameter \(\mu\): monotone convergence for \(0<\mu\le 2\sqrt2-2\), catapult convergence for \(2\sqrt2-2<\mu\le 1\), bounded periodic nonconvergence after the flip bifurcation, Li--Yorke chaotic bounded dynamics in the chaotic window, and generic divergence for \(\mu>2\).
\end{theorem}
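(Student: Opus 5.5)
The plan is to reduce everything to the scalar cubic and then verify each phase either from results already proved here or by matching with \cite{chen2023stability}. By Proposition~\ref{prop:error-imbalance-dynamics}, \(w_0=0\) forces \(w_k=0\) for all \(k\), and the error evolves by \(F_\mu\) in \eqref{eq:F-def}. Writing \(F_\mu(e)=e((e+\mu)(e-2)+1)\) and substituting \(a=\mu\), this is literally the cubic family studied in \cite{chen2023stability}. So the identification is a polynomial identity and needs no further work. The theorem then splits into five regimes, which I handle in order of increasing \(\mu\).

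\textbf{Monotone and catapult regimes.} For \(0<\mu\le1\) the interval \(I_\mu=[-\mu,2]\) is invariant by Proposition~\ref{prop:invariant-interval}. Theorem~\ref{thm:monotone-threshold} gives \(|F_\mu(e)|\le|e|\) on \(I_\mu\) exactly when \(\mu\le2\sqrt2-2\).
\begin{itemize}[leftmargin=*]
\item For monotone convergence, \(|e_k|\) is nonincreasing. Any limit point \(e^*\) must satisfy \(|F_\mu(e^*)|=|e^*|\). I would rule out such points other than \(0\) and the endpoint fixed points by factoring \(F_\mu(e)\mp e\). Starting from the interior of \(I_\mu\), this gives \(e_k\to0\).
\item For catapult convergence with \(2\sqrt2-2<\mu\le1\), Theorem~\ref{thm:monotone-threshold} already supplies a one-step increase. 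Convergence itself needs a different argument: \(0\) is the unique interior fixed point, and \(F_\mu'(0)=1-2\mu\in[-1,1)\). I would show there is no two-cycle in the interior. The two-cycle points \(e_\pm\) of Proposition~\ref{prop:period-two} are real only when \(\mu\ge1\), and at \(\mu=1\) they coincide with \(0\).
\item With no interior two-cycle, a standard interval-map argument applies. An interval map whose only periodic points are fixed points has every orbit converging to a fixed point (Sharkovskii together with the convergence theorem for maps without period two). This gives convergence to \(0\) off a null exceptional set, namely preimages of the endpoints.
\end{itemize}

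\textbf{Periodic, chaotic and divergent regimes.} For \(\mu>1\), Proposition~\ref{prop:period-two} gives an attracting two-cycle on \(1<\mu<\sqrt5-1\), so the dynamics are bounded and nonconvergent on an open set. For the later period-doubling cascade I would cite the analysis in \cite{chen2023stability}. For Li--Yorke chaos I would exhibit, for \(\mu\) in the chaotic window, a period-three orbit in \(I_\mu\) and invoke Li--Yorke's ``period three implies chaos''. Near \(\mu=2\) this is explicit: \(F_2(e)=e^3-3e\) is conjugate to \(2\cos\theta\mapsto2\cos3\theta\), which has period-three points. Because the existence of a transverse period-three orbit persists under perturbation, the chaotic window contains an interval below \(2\). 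The threshold where period three first appears I would take from \cite{chen2023stability}. For divergence with \(\mu>2\), a sign check shows \(F_\mu(-\mu)=-\mu\) and \(F_\mu(2)=2\), but \(F_\mu\) now exceeds the interval at interior critical points, so \(I_\mu\) is no longer invariant. Then:
\begin{itemize}[leftmargin=*]
\item For \(|e|\) large, \(|F_\mu(e)|\ge2|e|\), so orbits leaving a bounded region escape to infinity.
\item The set of non-escaping points is a Cantor-type set of Lebesgue measure zero, by the usual expansion argument for cubic maps with escaping critical values. This is what ``generic divergence'' means.
\end{itemize}

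The main obstacle is making the chaotic-window and generic-divergence claims rigorous. Both measure-zero escape and precise window boundaries need expansion estimates that I would import from \cite{chen2023stability} rather than reprove. By contrast, the monotone, catapult and first two-cycle regimes follow directly from the results already established in this paper.
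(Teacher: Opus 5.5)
Your backbone matches the paper's. The paper gives no self-contained proof of this theorem: its appendix note only identifies \(F_\mu\) with the cubic \(f_a\) of \cite{chen2023stability} at \(a=\mu\) and cites that phase diagram. The explicit thresholds come from Theorem~\ref{thm:monotone-threshold}, Proposition~\ref{prop:invariant-interval} and Proposition~\ref{prop:period-two}. Whatever you defer to that citation therefore stands on the same footing as the paper. The problems are in the extra self-contained arguments you add; three are wrong or incomplete as written.

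(i) In the monotone bullet, the claim that every interior start gives \(e_k\to0\) is false. By \(F_\mu(e)+\mu=(e-1)^2(e+\mu)\), the interior point \(e=1\) (that is, \(a=b=\sqrt{1+\mu}\), cf.\ Proposition~\ref{prop:landing-sets}(b)) is sent in one step to the fixed endpoint \(-\mu\), i.e.\ the spurious origin, and so are all its preimages. Since \(F_\mu'(-\mu)=(1+\mu)^2\) and \(F_\mu'(2)=5+2\mu\), both endpoints are repelling and are reached only by landing. So the countable backward orbit of \(-\mu\) must be excluded here too; the only preimage of \(2\) in \(I_\mu\) is \(2\) itself, since \(q_\mu>0\). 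Also, at \(\mu=2\sqrt2-2\) the factor \(F_\mu(e)+e=e\,(e^2+(\mu-2)e+2-2\mu)\) has the interior root \(2-\sqrt2\), so factoring alone does not exclude it. It is simpler to drop this bullet and run your Coppel-type argument on all of \(0<\mu\le1\).

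(ii) That argument needs the absence of \emph{every} period-two point, but Proposition~\ref{prop:period-two} exhibits only one two-cycle. The quotient \((F_\mu^2(e)-e)/(F_\mu(e)-e)\) has degree six, and \(p_\mu\) leaves a quartic cofactor. You can close this gap. For a two-cycle \(\{x,y\}\) with \(s=x+y\), \(p=xy\), the conditions \(F_\mu(x)+F_\mu(y)=s\) and \((F_\mu(x)-F_\mu(y))/(x-y)=-1\) give \(p=s^2+(\mu-2)s+2-2\mu\) and \((s-1+\mu)(s^2+(\mu-3)s+4-2\mu)=0\). The second factor has discriminant \(\mu^2+2\mu-7<0\) for \(\mu<2\sqrt2-1\). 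Hence \(e_\pm\) is the only real two-cycle there, and for \(\mu\le1\) there is none.

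(iii) For \(\mu>2\), your measure-zero justification does not apply. The critical points of \(F_\mu\) are \(e=1\) and \(e=(1-2\mu)/3\). The second escapes, since \(q_\mu((1-2\mu)/3)=-(2\mu+5)(\mu-2)/9<0\) forces \(F_\mu>2\) there. But \(F_\mu(1)=-\mu\) is the fixed endpoint. So the non-escaping set contains a critical point and is not uniformly expanding, and the ``escaping critical values'' Cantor-set argument fails. Measure zero is still true, but it needs a Misiurewicz-type argument (a non-recurrent critical orbit landing on a repelling fixed point), or simply the citation the paper relies on.

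Your period-three persistence argument near \(\mu=2\) is sound, but it yields chaos only on some \((2-\varepsilon,2]\). The chaotic window itself, like the later period-doubling cascade, still has to come from \cite{chen2023stability}, exactly as in the paper.
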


Beyond the phase theorem above, the balanced restriction is a real cubic map, so classical results on iterated cubic maps apply to the organization of critical orbits, periodic windows, and parameter-space bifurcations~\citep{milnor1992remarks}.  In particular, entropy and monotonicity results for real cubic families provide a principled way to interpret the growth of dynamical complexity across the large-step regime~\citep{milnor2000entropy}.

\section{The genuine two-dimensional system}
\label{sec:genuine-2d}

Let \(D=w^2-(2-\mu)(2-e)\), \(C(e)=e^3-3e\), and \(q_\mu(e)=e^2+\mu e+1\).

\begin{theorem}[Exact normal form]
\label{thm:normal-form}
For the two-dimensional map \eqref{eq:ew-map},
\begin{equation}
\label{eq:normal-form}
    e^+=C(e)-eD,
    \qquad
    D^+=q_\mu(e)D.
\end{equation}
\end{theorem}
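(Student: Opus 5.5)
Both identities are polynomial in $(e,w)$, so the plan is to verify them directly from the error/imbalance map \eqref{eq:ew-map} of Proposition~\ref{prop:error-imbalance-dynamics}.

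\emph{First identity.} Eliminate $w^2$ using $w^2=D+(2-\mu)(2-e)$, i.e.\ write $w^2=D+4-2\mu-2e+\mu e$. Substituting into the linear coefficient gives
\[
1-2\mu-w^2=-3+2e-\mu e-D.
\]
Hence
\[
e^+=e^3+(\mu-2)e^2+(-3+2e-\mu e)e-eD .
\]
The $e^2$ terms cancel, since $(\mu-2)e^2+(2-\mu)e^2=0$. This leaves $e^+=e^3-3e-eD=C(e)-eD$.

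\emph{Second identity.} By definition,
\[
D^+=(w^+)^2-(2-\mu)(2-e^+)=(1+e)^2w^2-(2-\mu)(2-e^+).
\]
We substitute $w^2=D+(2-\mu)(2-e)$ and the first identity $e^+=e^3-3e-eD$, and collect terms.

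The coefficient of $D$ is $(1+e)^2-(2-\mu)e=e^2+\mu e+1=q_\mu(e)$, which is already the claimed multiplier. It therefore remains to check that the $D$-free part vanishes, namely
\[
(2-\mu)\bigl[(1+e)^2(2-e)-2+e^3-3e\bigr]=0.
\]
Expanding the bracket, $(1+2e+e^2)(2-e)=2+3e-e^3$. The bracket is thus $2+3e-e^3-2+e^3-3e=0$, and $D^+=q_\mu(e)D$ follows.

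There is no genuine obstacle; the only care needed is the bookkeeping in the $D$-free cancellation of the second identity. The key observation there is that $(1+e)^2(2-e)=2+3e-e^3$ is exactly $2-C(e)$. This is what makes the ellipse $D=0$ invariant, and it is consistent with the Chebyshev restriction $e^+=C(e)$ on $\E_\mu$.
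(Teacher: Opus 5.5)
Your proof is correct and follows essentially the same route as the paper. Both eliminate $w^2$ via $w^2=D+(2-\mu)(2-e)$ to obtain $e^+=C(e)-eD$, and then use the factorization $2-C(e)=(2-e)(1+e)^2$ to cancel the $D$-free part, which leaves the multiplier $(1+e)^2-(2-\mu)e=q_\mu(e)$.
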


The variable \(D\) is a signed separatrix coordinate.  The set \(D=0\) will be the Chebyshev ellipse, while \(D<0\) and \(D>0\) are its two sides. For \(0<\mu<2\), \(q_\mu\) has discriminant \(\mu^2-4<0\) and is strictly positive, so the sign of \(D\) cannot change under the deterministic full-batch map.

\begin{corollary}[Forward-invariant regions]
\label{cor:sign-D}
If \(0<\mu<2\), the sets \(\{D<0\}\), \(\{D=0\}\), \(\{D>0\}\) are each forward-invariant under \(\Phi_\mu\).
\end{corollary}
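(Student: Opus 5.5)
The corollary follows directly from the exact normal form of Theorem~\ref{thm:normal-form}. The plan is to regard \(D\) as a function on the \((a,b)\)-plane, namely \(D(a,b)=(a-b)^2-(2-\mu)(2-(ab-\mu))\). Because \(e=ab-\mu\) and \(w=a-b\) are polynomial functions of \((a,b)\), and \(\Phi_\mu\) induces the map \eqref{eq:ew-map} on \((e,w)\) by Proposition~\ref{prop:error-imbalance-dynamics}, the normal form gives the pointwise identity
\[
D\bigl(\Phi_\mu(a,b)\bigr)=q_\mu\bigl(e(a,b)\bigr)\,D(a,b)\qquad\text{for all }(a,b)\in\R^2 .
\]
This is the only input needed from the dynamics. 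The sets \(\{D<0\}\), \(\{D=0\}\), \(\{D>0\}\) are therefore preimages of sign classes under a function that the map multiplies by a known factor.

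The second step is to show that the factor is strictly positive. For \(0<\mu<2\), the quadratic \(q_\mu(e)=e^2+\mu e+1\) has leading coefficient \(1\) and discriminant \(\mu^2-4<0\). Hence \(q_\mu(e)>0\) for every real \(e\). Equivalently, completing the square gives \(q_\mu(e)=(e+\mu/2)^2+1-\mu^2/4\ge 1-\mu^2/4>0\). Multiplying \(D\) by a strictly positive real number preserves its sign, including the value zero. So \(\operatorname{sign}D(\Phi_\mu(a,b))=\operatorname{sign}D(a,b)\), and each of the three sets is mapped into itself. This is exactly forward invariance.

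No step is a real obstacle once Theorem~\ref{thm:normal-form} is granted. The one point to state explicitly is that the identity \(D^+=q_\mu(e)D\) holds for all \((a,b)\), not only on some subset. It holds globally because it is a polynomial identity in \((e,w)\), obtained by substituting \(e^+\) and \(w^+=(1+e)w\) into \(D^+=(w^+)^2-(2-\mu)(2-e^+)\). A remark on sharpness is also worth adding. Strictness of \(\mu<2\) is used only for positivity of \(q_\mu\): at \(\mu=2\), \(q_2(e)=(e+1)^2\) vanishes at \(e=-1\), so \(\{D\neq0\}\) could be mapped into \(\{D=0\}\). For \(\mu>2\), \(q_\mu\) takes negative values, so the sign of \(D\) can flip.
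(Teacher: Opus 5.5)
Your proposal is correct and matches the paper's proof: both take the identity \(D^+=q_\mu(e)D\) from Theorem~\ref{thm:normal-form}, use the negative discriminant \(\mu^2-4<0\) to get \(q_\mu>0\), and conclude that the sign of \(D\) is preserved. Your sharpness remark at \(\mu=2\) is a correct addition the paper does not make: there \(D=w^2\) and \(q_2=(e+1)^2\), so \(\{D>0\}\) can be sent into \(\{D=0\}\).
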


\begin{theorem}[Invariant Chebyshev ellipse]
\label{thm:invariant-ellipse}
Let \(0<\mu<2\). The set \(D=0\) is the ellipse
\begin{equation}
\label{eq:ellipse}
    \E_\mu=\left\{(a,b):\frac{(a+b)^2}{2+\mu}+\frac{(a-b)^2}{2-\mu}=4\right\}.
\end{equation}
It is invariant under \(\Phi_\mu\). On \(\E_\mu\), the error satisfies \(e^+=C(e)=e^3-3e\). Moreover the physical error range on \(\E_\mu\) is \([-2,2]\); writing \(e=2\cos\theta\) gives
\begin{equation}
\label{eq:angle-tripling}
    e^+=2\cos(3\theta).
\end{equation}
Thus the genuine two-dimensional map contains an off-balanced invariant set carrying Chebyshev, equivalently angle-tripling, dynamics for every \(0<\mu<2\).
\end{theorem}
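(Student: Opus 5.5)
We will prove the statement in four steps: convert \(D=0\) into \((a,b)\) coordinates, read invariance off Corollary~\ref{cor:sign-D}, specialize the normal form of Theorem~\ref{thm:normal-form}, and then bound the range of \(e\).

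\emph{Step 1: identify the set \(D=0\).} Write \(s=a+b\) and \(w=a-b\). Then \(ab=(s^2-w^2)/4\), so
\[
e=\frac{s^2-w^2}{4}-\mu .
\]
Substituting into \(D=w^2-(2-\mu)(2-e)\) gives
\[
D=w^2-(2-\mu)\Bigl(2+\mu-\frac{s^2-w^2}{4}\Bigr).
\]
Expanding, the coefficient of \(w^2\) is \(1-(2-\mu)/4=(2+\mu)/4\), and the \(s^2\) term is \((2-\mu)s^2/4\). Hence
\[
D=\tfrac14\bigl[(2+\mu)w^2+(2-\mu)s^2\bigr]-(4-\mu^2).
\]
Dividing by \((4-\mu^2)/4>0\), which is valid for \(0<\mu<2\), we get
\[
D=0 \iff \frac{s^2}{2+\mu}+\frac{w^2}{2-\mu}=4 .
\]
This is \eqref{eq:ellipse}. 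Both denominators are positive, so the set is a genuine nondegenerate ellipse.

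\emph{Step 2: invariance.} Forward invariance of \(\{D=0\}\) is already contained in Corollary~\ref{cor:sign-D}, and also follows directly from \(D^+=q_\mu(e)D\). For invariance in the sense \(\Phi_\mu(\E_\mu)=\E_\mu\), we will additionally check surjectivity onto the ellipse; see Step 4. Setting \(D=0\) in \eqref{eq:normal-form} gives \(e^+=C(e)=e^3-3e\) immediately.

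\emph{Step 3: error range.} Parametrize the ellipse by \(s=2\sqrt{2+\mu}\cos\phi\) and \(w=2\sqrt{2-\mu}\sin\phi\). Then
\[
e=(2+\mu)\cos^2\phi-(2-\mu)\sin^2\phi-\mu=2\cos^2\phi-2\sin^2\phi=2\cos 2\phi .
\]
So \(e\) ranges over exactly \([-2,2]\), and setting \(\theta=2\phi\) writes \(e=2\cos\theta\). (Equivalently, \(D=0\) forces \(w^2=(2-\mu)(2-e)\ge0\), so \(e\le2\). Likewise \(s^2=(2+\mu)(2+e)\ge0\) can be checked from \(e=(s^2-w^2)/4-\mu\), giving \(e\ge-2\).) The identity \(4\cos^3\theta-3\cos\theta=\cos3\theta\) then yields \(e^+=8\cos^3\theta-6\cos\theta=2\cos3\theta\), which is \eqref{eq:angle-tripling}.

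\emph{Step 4: surjectivity onto the ellipse.} This step needs care. We want to show that \(\Phi_\mu\) acts on the ellipse as angle tripling in \(\phi\), or at least is onto. Using \eqref{eq:uv-map}, we have \(s^+=(1-e)s\) and \(w^+=(1+e)w\). With \(e=2\cos2\phi\), we will compute \((1-2\cos2\phi)\cos\phi=-\cos3\phi\) and \((1+2\cos2\phi)\sin\phi=\sin3\phi\). Hence \(\phi\mapsto 3\phi+\pi\) (up to sign conventions), which is surjective on the circle. The only real obstacle is this last trigonometric verification, which we expect to be routine.
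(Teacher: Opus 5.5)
Your proof is correct. Steps 1--2 coincide with the paper's argument: the paper substitutes \(e=(u-v)/4-\mu\), with \(u=s^2\), \(v=w^2\), into \(D=0\), and reads invariance off \(D^+=q_\mu(e)D\). Steps 3--4 take a different and somewhat stronger route.

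For the error range, the paper derives \(v=(2-\mu)(2-e)\) and \(u=(2+\mu)(e+2)\) on \(\{D=0\}\), then reconstructs admissible \((u,v)\) for each \(e\in[-2,2]\). Your parametrization \(s=2\sqrt{2+\mu}\cos\phi\), \(w=2\sqrt{2-\mu}\sin\phi\) gives \(e=2\cos2\phi\) in one line and covers both inclusions at once.

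More substantively, the paper's proof only yields \(\Phi_\mu(\E_\mu)\subseteq\E_\mu\), although its own appendix defines ``invariant'' as \(F(S)=S\). Positivity of \(q_\mu\) also gives \(\Phi_\mu^{-1}(\E_\mu)\subseteq\E_\mu\), but not that every point of \(\E_\mu\) has a preimage. Your Step 4 supplies exactly this. The identities \((1-2\cos2\phi)\cos\phi=-\cos3\phi\) and \((1+2\cos2\phi)\sin\phi=\sin3\phi\) are correct. Hence the restriction of \(\Phi_\mu\) to \(\E_\mu\) is the circle map \(\phi\mapsto\pi-3\phi\), which is onto. This describes the dynamics on the ellipse itself, not only on its generically four-to-one shadow \(e\). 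It is consistent with the paper: \(\theta=2\phi\mapsto-3\theta\), and \(2\cos(-3\theta)=2\cos3\theta\).

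Two small corrections. First, the map is \(\phi\mapsto\pi-3\phi\), of degree \(-3\), not \(\phi\mapsto3\phi+\pi\), which would flip the sign of \(w^+\). Since degree is a conjugacy invariant, this is not a matter of orientation convention, though surjectivity is unaffected. Second, the unsquared relations \(s^+=(1-e)s\) and \(w^+=(1+e)w\) come directly from \(a^+=a-eb\), \(b^+=b-ea\) (as in Propositions~\ref{prop:error-imbalance-dynamics} and~\ref{prop:landing-sets}). Equation \eqref{eq:uv-map} records only their squares, which would lose the sign information your Step 4 needs.
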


\emph{Transformer interpretation.} Even where local training converges, the full finite-step map contains a compact off-balanced chaotic invariant set, so the instability landscape is not captured by local linearization around the minimizer alone. Angle-tripling \(\theta\mapsto 3\theta\pmod{2\pi}\) is the canonical model of topological chaos on an interval; the reduced transformer dynamics inherit chaos from the bilinear factor geometry itself. The ellipse also sets a precise geometric scale: in physical variables, its semi-axes along \(a+b\) and \(a-b\) are \(2\sqrt{2+\mu}\) and \(2\sqrt{2-\mu}\), so the stable region of phase space shrinks and becomes anisotropic as \(\mu\) approaches \(2\). At \(\mu=2\), \(\E_\mu\) degenerates to \(w=0\) and \(F_2=C\). For \(\mu>2\), \(D=0\) becomes the noncompact branch \(w^2=(\mu-2)(e-2)\), \(e\ge 2\); since \(C(e)-e=e(e-2)(e+2)>0\) for \(e>2\), the error diverges along this branch.

\subsection{Transverse dynamics: repelling separatrix, attracting balanced line}

Transverse dynamics along \(\E_\mu\) in the \(D\)-direction have multiplier \(q_\mu(e)\); for a \(C\)-invariant probability \(\nu\) with suitable integrability, the transverse Lyapunov exponent is \(\Lambda_\E(\nu)=\int\log q_\mu\,\dd\nu\).

\begin{theorem}[Transverse repulsion of the Chebyshev ellipse]
\label{thm:ellipse-repelling}
Fix \(0<\mu<2\). Let \(\nu\) be a \(C\)-invariant probability measure on \([-2,2]\) for which \(\log|1+e|\) and \(\log|1-e|\) are \(\nu\)-integrable. Then \(\Lambda_\E(\nu)=\int\log q_\mu(e)\,\dd\nu(e)\ge 0\), with equality only if \(\nu=\delta_0\). The endpoint measures \(\delta_{\pm 2}\) have strictly positive exponent.
\end{theorem}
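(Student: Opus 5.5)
The plan is to write \(q_\mu\) as a convex combination of the two squares \((1\pm e)^2\). This lets concavity of \(\log\) reduce the claim to two "coboundary" identities, \(\int\log|1+e|\,\dd\nu=0\) and \(\int\log|1-e|\,\dd\nu=0\), which come from the factorizations of \(C\mp2\).

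\textbf{Step 1 (convex splitting).} With \(\lambda=(2+\mu)/4\in(\tfrac12,1)\) for \(0<\mu<2\), one checks
\[
q_\mu(e)=\lambda(1+e)^2+(1-\lambda)(1-e)^2 .
\]
By concavity of \(\log\) we get, pointwise for \(e\ne\pm1\),
\[
\log q_\mu(e)\;\ge\;2\lambda\log|1+e|+2(1-\lambda)\log|1-e|.
\]
Equality holds iff \((1+e)^2=(1-e)^2\), i.e.\ iff \(e=0\); at \(e=\pm1\) the right side is \(-\infty\) and the inequality is strict. Since \(q_\mu\) is bounded above on \([-2,2]\), and the right side is \(\nu\)-integrable by hypothesis, \(\log q_\mu\) is \(\nu\)-integrable.

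\textbf{Step 2 (coboundary identities).} The Chebyshev factorizations are
\[
C(e)-2=(e+1)^2(e-2),\qquad C(e)+2=(e-1)^2(e+2).
\]
Hence, wherever \(e\neq 2\), with \(f_+(e)=\log|e-2|\),
\[
f_+(C(e))-f_+(e)=2\log|1+e|,
\]
and similarly \(f_-(C(e))-f_-(e)=2\log|1-e|\) with \(f_-=\log|e+2|\), wherever \(e\ne-2\).

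I would first dispose of the endpoints. Invariance gives \(\nu(\{-1\})=0\), because \(C^{-1}(2)=\{2,-1\}\). It also gives \(\nu(\{-2\})=\nu(\{-2,1\})\), so \(\nu(\{1\})=0\). In addition \(C^{-1}(-2)=\{-2,1\}\) and \(C(-2)=2\), so an atom at \(-2\) cannot persist: \(\nu(\{-2\})=0\). Then I decompose \(\nu=p\,\delta_2+(1-p)\nu'\), where \(\nu'\) is \(C\)-invariant and gives no mass to \(\pm2\). This uses that \(C^{-1}(2)\setminus\{2\}=\{-1\}\) is \(\nu\)-null.

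For \(\nu'\), the functions \(f_\pm\) are finite \(\nu'\)-a.e. but possibly non-integrable, while their coboundaries are integrable by hypothesis. I would invoke the standard fact that an integrable measurable coboundary has zero integral for an invariant probability measure. To prove it, pass to ergodic components and use Birkhoff: \(\frac1n\sum_{k<n}h(C^kx)=\frac1n\bigl(f(C^nx)-f(x)\bigr)\). Since \(f(C^nx)\) has the same distribution as \(f(x)\), which is a.e. finite, this tends to \(0\) in probability, so \(\int h\,\dd\nu'=0\). This yields \(\int\log|1+e|\,\dd\nu'=\int\log|1-e|\,\dd\nu'=0\).

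\textbf{Step 3 (conclusion).} Integrating Step 1 against \(\nu'\) gives \(\int\log q_\mu\,\dd\nu'\ge0\). Equality forces \(e=0\) \(\nu'\)-a.e., that is \(\nu'=\delta_0\), and indeed \(q_\mu(0)=1\). For the atom at \(2\), and likewise for \(\delta_{-2}\) as an invariant orbit-mass statement, compute directly: \(q_\mu(2)=5+2\mu>1\) and \(q_\mu(-2)=5-2\mu>1\). This gives the strictly positive exponents claimed for \(\delta_{\pm2}\), and shows \(\Lambda_\E(\nu)=p\log(5+2\mu)+(1-p)\Lambda_\E(\nu')\ge0\). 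Equality holds only if \(p=0\) and \(\nu'=\delta_0\), i.e.\ \(\nu=\delta_0\).

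\textbf{Main obstacle.} The main obstacle is Step 2's justification that the coboundary integrals vanish when \(\log|e\mp2|\) itself need not be integrable. If the Birkhoff-in-probability argument feels too heavy, I would instead truncate, replacing \(f_\pm\) by \(\max(f_\pm,-M)\), and let \(M\to\infty\) using dominated convergence on the integrable differences.
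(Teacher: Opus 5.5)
There is one concrete error in your endpoint bookkeeping. Your overall route is the paper's: you split $q_\mu=\tfrac{2+\mu}{4}(1+e)^2+\tfrac{2-\mu}{4}(1-e)^2$, apply weighted AM--GM (concavity of $\log$), and kill the right-hand side with $\int\log|1\pm e|\,\dd\nu=0$, derived from $2\mp C(e)=(2\mp e)(1\pm e)^2$.

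\textbf{The error.} $C(-2)=-8+6=-2$, not $2$. So $-2$ is a fixed point of $C$, just like $2$. Hence $\delta_{-2}$ is a genuine $C$-invariant measure, and it satisfies both hypotheses, since $\log|1+(-2)|=0$ and $\log|1-(-2)|=\log 3$. You therefore cannot conclude $\nu(\{-2\})=0$, and a $\nu'$ with no mass at $\pm2$ need not exist.

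For example, take $\nu=\tfrac12\delta_{-2}+\tfrac12\delta_0$. Then $p=0$ and your $\nu'$ is $\nu$ itself. The function $f_-=\log|e+2|$ equals $-\infty$ on a set of mass $\tfrac12$. The identity $\int\log|1-e|\,\dd\nu'=0$ you would derive is false, since the integral is $\tfrac12\log 3$.

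\textbf{The repair.} Because both endpoints are fixed, $\delta_{\pm2}$ are invariant. So $\nu-\nu(\{2\})\delta_2-\nu(\{-2\})\delta_{-2}$ is invariant with no further argument. Write $\nu=p\,\delta_2+p'\delta_{-2}+(1-p-p')\nu'$ with $\nu'(\{\pm2\})=0$. Note that $\nu'(\{\pm1\})=0$ already follows from integrability of $\log|1\pm e|$. Run your coboundary argument for both $f_\pm$ on $\nu'$. This gives $\Lambda_\E(\nu)=p\log(5+2\mu)+p'\log(5-2\mu)+(1-p-p')\Lambda_\E(\nu')\ge 0$, with equality iff $p=p'=0$ and $\nu'=\delta_0$. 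You should also drop the ``orbit-mass'' hedge about $\delta_{-2}$.

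\textbf{Comparison with the paper.} With this fix, your proof is more careful than the paper's. The paper obtains $\int\log|1+e|\,\dd\nu=0$ by writing $\int\log(2-C(e))\,\dd\nu=\int\log(2-e)\,\dd\nu$ and subtracting. That step tacitly assumes $\log(2\mp e)\in L^1(\nu)$, which the theorem does not assume. It also fails whenever $\nu$ charges an endpoint: for $\nu=\delta_2$ it would give $\int\log|1+e|\,\dd\delta_2=0$, whereas this integral equals $\log 3$. Splitting off the endpoint atoms and then using the integrable-coboundary lemma closes exactly this gap.

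Both of your proofs of that lemma are sound, and the truncation proof is the simplest:
\begin{itemize}
\item $f_M=\max(f_\pm,-M)$ is bounded, so invariance gives $\int(f_M\circ C-f_M)\,\dd\nu'=0$.
\item The map $x\mapsto\max(x,-M)$ is $1$-Lipschitz, so $|f_M\circ C-f_M|\le|h|$ wherever $f_\pm$ and $f_\pm\circ C$ are both finite. This holds $\nu'$-a.e.
\item Dominated convergence then yields $\int h\,\dd\nu'=0$.
\end{itemize}
No ergodic decomposition is needed.
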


Thus \(\E_\mu\) is a repelling chaotic separatrix: it organizes basin geometry but typical nearby points are not attracted to it. The balanced line has the opposite behavior. Transverse perturbations multiply by \(1+e\), and for an invariant measure \(\nu\) of \(F_\mu\), \(\Lambda_\Lcal(\nu)=\int\log|1+e|\,\dd\nu\).

\begin{theorem}[Transverse attraction of balanced invariant measures]
\label{thm:balanced-transverse-attraction}
Fix \(1<\mu<2\). Let \(\nu\) be an \(F_\mu\)-invariant probability measure on \([-\mu,2]\) such that \(\nu(\{2\})=0\), \(\log(2-e)\) is \(\nu\)-integrable, and \(\log|1+e|\) is \(\nu\)-integrable. Then \(\Lambda_\Lcal(\nu)=\int\log|1+e|\,\dd\nu(e)\le 0\), with equality only if \(\nu=\delta_0\).
\end{theorem}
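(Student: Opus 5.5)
The plan is to write $\log|1+e|$, up to a nonnegative remainder, as a combination of two coboundaries of $F_\mu$. The coboundaries integrate to zero against any invariant measure, and the remainder then gives the sign.

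\textbf{Two multiplicative identities on the balanced line.} Setting $w=0$ in the normal form of Theorem~\ref{thm:normal-form} gives $D=-(2-\mu)(2-e)$. The relation $D^+=q_\mu(e)D$ then becomes
\[
2-F_\mu(e)=q_\mu(e)\,(2-e).
\]
Setting $v=0$ in \eqref{eq:uv-map} gives $u=4(e+\mu)$. The relation $u^+=(1-e)^2u$ then becomes
\[
F_\mu(e)+\mu=(1-e)^2(e+\mu).
\]
Hence, with $\psi_1=\log(2-e)$ and $\psi_2=\log(e+\mu)$,
\[
\log q_\mu=\psi_1\circ F_\mu-\psi_1,\qquad \log(1-e)^2=\psi_2\circ F_\mu-\psi_2 .
\]
Both functions are well defined $\nu$-a.e. on $I_\mu$ (Proposition~\ref{prop:invariant-interval}), using $\nu(\{2\})=0$.

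By hypothesis $\psi_1$ is integrable, so invariance gives $\int\log q_\mu\,\dd\nu=0$. For $\psi_2$ I use only that it is bounded above by $\log(2+\mu)$. I will invoke the standard lemma that if $\psi$ is bounded above and $\psi\circ F-\psi\in L^1(\nu)$, then $\int(\psi\circ F-\psi)\,\dd\nu=0$. This requires $\log|1-e|\in L^1(\nu)$. If I cannot obtain that a priori, I will arrange the argument below so that this term enters with the sign that makes a possible value $-\infty$ harmless, or so that its integrability is forced by finiteness of the other integrals.

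\textbf{Pointwise inequality.} The core step is to find constants $\alpha,\beta$ with
\[
G(e):=\alpha\log q_\mu(e)+\beta\log(1-e)^2-\log(1+e)^2\ \ge\ 0\quad\text{on }[-\mu,2],
\]
with equality only at $e=0$. Expanding at $e=0$, the linear coefficients must cancel, which forces $\alpha\mu-2\beta=2$. This leaves a one-parameter family, and I would tune it so that the quadratic coefficient of $G$ at $0$ is positive and $G$ stays nonnegative on the rest of the interval. Near $e=1$ a negative $\beta$ makes $G\to+\infty$, which is harmless. The points that need checking are the endpoints $e=-\mu$ and $e=2$, and the point $e=-1$, where $G=+\infty$.

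The restriction $1<\mu<2$ should enter exactly here. For $\mu<1$ the fixed point $0$ is attracting and the claim would be false on a neighbourhood, consistent with $\alpha\mu-2\beta=2$ demanding larger $\alpha$. The natural first attempt is $\alpha=1$, $\beta=(\mu-2)/2$. I expect this verification, a one-variable calculus inequality uniform in $\mu\in(1,2)$, to be the main obstacle. If the fixed choice fails somewhere, I would let $\alpha$ depend on $\mu$.

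\textbf{Conclusion and equality case.} Integrating $G\ge0$ against $\nu$, and using $\int\log q_\mu\,\dd\nu=0$ and $\int\log(1-e)^2\,\dd\nu=0$, gives
\[
2\Lambda_\Lcal(\nu)=-\int G\,\dd\nu\le 0 .
\]
If equality holds, then $G=0$ $\nu$-a.e., so $\nu$ is supported on the zero set of $G$, which I aim to show is $\{0\}$. This gives $\nu=\delta_0$. If $G$ happened to have additional isolated zeros, I would check directly that no $F_\mu$-invariant measure is supported on them together with $0$, since they are not periodic points of $F_\mu$.
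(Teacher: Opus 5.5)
There is a genuine gap. Your first step matches the paper's: you get $\int\log q_\mu\,\dd\nu=0$ from $2-F_\mu(e)=(2-e)q_\mu(e)$ and the integrability of $\log(2-e)$. But the pointwise inequality you defer is all of the remaining content, and your explicit candidate for it is false.

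\textbf{The candidate fails.} With $\alpha=1$, $\beta=(\mu-2)/2$, the term $\log(1-e)^2$ vanishes at $e=2$, so $G(2)=\log\frac{5+2\mu}{9}<0$ for every $\mu<2$. For $\mu$ near $2$ one even has $G<0$ at the critical value $M_\mu=\max_{[-\mu,1]}F_\mu$. So the fact that invariant measures with $\nu(\{2\})=0$ live on $[-\mu,M_\mu]$ does not rescue this choice.

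\textbf{The admissible range and the choice that works.} More generally, $G\ge0$ near $e=1$ forces $\beta\le0$, i.e.\ $\alpha\le 2/\mu$. $G(2)\ge0$ forces $\alpha\ge\log 9/\log(5+2\mu)$. The choice that works is the endpoint $\alpha=2/\mu$, $\beta=0$, where $G\ge0$ is exactly Lemma~\ref{lem:balanced-ineq}: $|1+e|^\mu\le q_\mu(e)$ on $[-\mu,2]$, with equality iff $e=0$.
\begin{itemize}
\item Put $c=2-\mu\in(0,1)$ and $t=1+e$. For $e\ge-1$ the inequality reads $h(t)=t^2-ct+c-t^{2-c}\ge0$. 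This follows from $h(1)=h'(1)=0$ and the fact that $h''$ changes sign only once, at some $t_0\in(0,1)$, so $t=1$ is the unique minimiser on $[0,\infty)$.
\item For $-\mu\le e<-1$, put $t=-(1+e)$ and use $q_\mu(e)=t^2+ct+c>t^2-ct+c\ge t^{2-c}$.
\end{itemize}
Integrating $\mu\log|1+e|\le\log q_\mu$ then gives $\mu\Lambda_\Lcal(\nu)\le 0$. Equality forces $\nu(\{0\})=1$, with no need to examine extra zeros.

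\textbf{The second coboundary.} It is unnecessary, and as written it is also unjustified.
\begin{itemize}
\item Integrability of $\log|1-e|$ is not a hypothesis. Any nonzero admissible $\beta$ must be negative, so a value $\int\log|1-e|\,\dd\nu=-\infty$ enters with exactly the harmful sign: it makes $\beta\int\log(1-e)^2\,\dd\nu=+\infty$ in your bound for $2\Lambda_\Lcal(\nu)$. It cannot be arranged away.
\item The claim $\int\log(1-e)^2\,\dd\nu=0$ is false for admissible measures charging the fixed point $-\mu$. The measure $\nu=\delta_{-\mu}$ satisfies every hypothesis, yet $\int\log(1-e)^2\,\dd\delta_{-\mu}=2\log(1+\mu)\neq0$. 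The reason is that $\psi_2(-\mu)=-\infty$, and $F_\mu(e)+\mu=(1-e)^2(e+\mu)$ degenerates to $0=0$ there.
\end{itemize}
Both defects can be patched. Split off the atom at $-\mu$, then use that a coboundary of an a.e.-finite function whose positive part is integrable is integrable with zero mean (a Birkhoff-type lemma). With $\beta=0$ none of this arises. Only $\psi_1=\log(2-e)$ is used, and its a.e.\ finiteness and integrability are exactly the hypotheses $\nu(\{2\})=0$ and $\log(2-e)\in L^1(\nu)$.
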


\emph{Transformer interpretation.} Theorems~\ref{thm:ellipse-repelling} and~\ref{thm:balanced-transverse-attraction} explain why the period-two orbit born at \(\mu=1\) is attracting in the full system for \(1<\mu<\sqrt5-1\), and why later balanced invariant sets can attract nearby imbalanced initializations when their transverse exponent is negative. Theorem~\ref{thm:invariant-ellipse} nonetheless prevents any claim of global collapse to the balanced line: off-balanced invariant chaotic structure coexists with balanced dynamics. Thus the two-dimensional contribution is not an additional attracting chaotic set; it is the basin geometry that determines when the scalar large-step attractors are robust to factor imbalance. In empirical terms, this suggests that run-to-run variability at high learning rate should be associated with trajectories that pass near repelling separatrices or their preimages, where small perturbations can be amplified before the orbit returns toward a lower-dimensional attracting set.

\subsection{Landing sets and interior rigidity}

The exact formulas identify algebraic sets that land on special invariant submanifolds in one step.

\begin{proposition}[Exact one-step landing sets]
\label{prop:landing-sets}
(a) A point lands on \(w=0\) after one step iff \(w=0\) or \(e=-1\). (b) It lands on \(s=a+b=0\) iff \(s=0\) or \(e=1\). (c) It lands on \(e=0\) iff \(e=0\) or \(D=e^2-3\).
\end{proposition}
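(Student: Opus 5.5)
Each part reduces to a factorization of one of the exact update formulas derived earlier. The plan is to read off, in each case, when the relevant updated coordinate vanishes.

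For (a), I would use the imbalance recursion of Proposition~\ref{prop:error-imbalance-dynamics}, namely \(w^+=(1+e)w\). This product vanishes iff \(w=0\) or \(e=-1\).

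For (b), I would compute the symmetric coordinate directly from \(\Phi_\mu\):
\[
s^+=a+b-(ab-\mu)(a+b)=(1-e)s.
\]
This is consistent with \(u^+=(1-e)^2u\) in \eqref{eq:uv-map}. Hence \(s^+=0\) iff \(s=0\) or \(e=1\).

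For (c), I would use the exact normal form of Theorem~\ref{thm:normal-form}:
\[
e^+=C(e)-eD=e(e^2-3)-eD=e\bigl(e^2-3-D\bigr).
\]
Thus \(e^+=0\) iff \(e=0\) or \(D=e^2-3\).

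There is no real obstacle here. The only points needing care are bookkeeping. In (b), I would verify the one-line computation of \(s^+\) directly rather than taking square roots in \eqref{eq:uv-map}, so that no sign ambiguity enters. In (c), I would note that the factorization relies on \(C(e)\) having \(e\) as a factor. All three statements hold for every \(\mu\), since none of them uses positivity of \(q_\mu\).
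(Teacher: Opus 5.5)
Your proposal is correct and follows the paper's proof essentially line for line: you factor \(w^+=(1+e)w\), compute \(s^+=(1-e)s\) directly from the update, and write \(e^+=C(e)-eD=e(e^2-3-D)\) using the normal form. Your added remarks also hold: the direct computation of \(s^+\) avoids sign issues, and none of the three factorizations depends on the range of \(\mu\).
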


Basin statements must therefore exclude lower-dimensional exceptional sets and their preimages. A natural question is whether the strict interior \(\{D<0\}\) contains hidden off-balanced recurrent dynamics. Let \(\mathcal I_\mu=\{u>0,\ v>0,\ R<4\}\) with \(R=u/(2+\mu)+v/(2-\mu)\) be the genuinely two-dimensional part of the interior, excluding the balanced and anti-balanced axes.

\begin{theorem}[No recurrent off-balanced dynamics in the strict interior]
\label{thm:no-interior-recurrence}
Fix \(0<\mu<2\). Let \(\nu\) be a \(\Phi_\mu\)-invariant probability measure such that \(\nu(\mathcal I_\mu)=1\) and \(\log u,\log v,\log(4-R)\) are \(\nu\)-integrable. Then \(\nu(\{e=0\})=1\). Consequently, \(\mathcal I_\mu\) contains no periodic orbit of period larger than one; more generally, it contains no invariant measure with genuinely two-dimensional nonzero-error recurrence.
\end{theorem}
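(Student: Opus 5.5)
The plan is to convert invariance of \(\nu\) into three vanishing averages, one for each multiplicative cocycle, and then combine them into one function that is nonnegative with equality only at \(e=0\).

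\textbf{Step 1: \(4-R\) is a multiplicative cocycle.} First I will express the separatrix coordinate through \(R\). Substituting \(e=(u-v)/4-\mu\) and \(w^2=v\) into \(D=w^2-(2-\mu)(2-e)\), a short computation gives
\[
D=\tfrac{4-\mu^2}{4}\,(R-4).
\]
So \(\{D<0\}=\{R<4\}\), and Theorem~\ref{thm:normal-form} yields \((4-R)^+=q_\mu(e)(4-R)\). Together with \eqref{eq:uv-map}, on \(\mathcal I_\mu\) we have three multiplicative cocycles:
\[
u^+=(1-e)^2u,\qquad v^+=(1+e)^2v,\qquad (4-R)^+=q_\mu(e)(4-R).
\]
All three quantities are strictly positive on \(\mathcal I_\mu\).

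\textbf{Step 2: the error range.} On \(\mathcal I_\mu\) the error lies in \((-2,2)\). Indeed \(u<4(2+\mu)\) gives \(e<2\), and \(v<4(2-\mu)\) gives \(e>-2\).

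\textbf{Step 3: vanishing averages.} Since \(\nu\) is \(\Phi_\mu\)-invariant and \(\log u\in L^1(\nu)\), we get \(\int\log u\circ\Phi_\mu\,\dd\nu=\int\log u\,\dd\nu\). Hence \(\int\log(1-e)^2\,\dd\nu=0\). The same argument applied to \(v\) and to \(4-R\) gives
\[
\int\log|1-e|\,\dd\nu=\int\log|1+e|\,\dd\nu=\int\log q_\mu(e)\,\dd\nu=0 .
\]
Here \(\nu(\mathcal I_\mu)=1\) ensures that \(e\ne\pm1\) \(\nu\)-a.s. Otherwise \(u\) or \(v\) would vanish at the next step and the orbit would leave \(\mathcal I_\mu\), contradicting invariance. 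So all integrands are finite a.e.

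\textbf{Step 4: a nonnegative combination.} The core step is to find constants \(\alpha,\beta\) such that
\[
f(e)=\log q_\mu(e)+\alpha\log|1-e|+\beta\log|1+e|
\]
satisfies \(f\ge0\) on \((-2,2)\), with equality only at \(e=0\). Then \(\int f\,\dd\nu=0\) forces \(\nu(\{e=0\})=1\). Expanding at \(0\), the linear term vanishes iff \(\mu+\beta-\alpha=0\). I therefore set \(\beta=-\mu/2-t\) and \(\alpha=\mu/2-t\). The quadratic coefficient is then \(1-\mu^2/2+t\), which is positive once \(t>\mu^2/2-1\). With \(t>0\), both logarithmic terms enter with negative coefficient, so \(f\to+\infty\) at \(e=\pm1\), which is favorable.

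\textbf{Main obstacle.} The hard part is global positivity on \((-2,2)\), equivalently the polynomial-power inequality
\[
q_\mu(e)\ \ge\ |1-e|^{t-\mu/2}\,|1+e|^{t+\mu/2}.
\]
The danger zones are near \(e=\pm2\), where \(|1-e^2|\) can reach \(3\). My first attempt is to choose \(t\) as a function of \(\mu\), for instance \(t=\tfrac12\), and then verify the inequality by separate calculus on the intervals \((-2,-1)\), \((-1,1)\) and \((1,2)\). If no single \(t\) works for the whole range \(0<\mu<2\), I will add a fourth cocycle to obtain more freedom. Two natural candidates, both already covered by the integrability hypotheses, are \(R\) itself and the invariant \(uv\cdot(\cdots)\) via the normal form. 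Alternatively, I will split the range of \(\mu\) and choose \(t\) separately on each piece.

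\textbf{Consequences.} A periodic orbit of period \(p>1\) in \(\mathcal I_\mu\) carries the uniform measure on its points. All the logarithms are finite on \(\mathcal I_\mu\), so this measure satisfies the hypotheses, and hence every point of the orbit has \(e=0\). Points with \(e=0\) lie on \(\M_\mu\) and are fixed by \(\Phi_\mu\), so \(p=1\), a contradiction. The same argument excludes any invariant measure with nonzero-error recurrence.
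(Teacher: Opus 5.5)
Steps 1--3 are correct and match the paper's proof: the identity $D=\tfrac{4-\mu^2}{4}(R-4)$, the three cocycles, the error range, the three vanishing averages, and the exclusion of $e=\pm1$. Your handling of periodic orbits is also correct. The gap is Step 4, which carries the entire content of the theorem. You never prove a pointwise inequality that holds with equality only at $e=0$, and the choice you propose to try first fails.

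At $t=\tfrac12$ one has $\alpha=\tfrac{\mu-1}{2}>0$ for every $\mu\in(1,2)$. Then $\alpha\log|1-e|\to-\infty$ as $e\to1$, while $\log q_\mu$ and $\beta\log|1+e|$ stay bounded there. So $f<0$ near $e=1$, which lies inside the admissible range $(-2,2)$. Your remark that $t>0$ makes both logarithmic coefficients negative is incorrect, since $\alpha<0$ requires $t>\mu/2$. In addition, for $\mu>\sqrt3$ your own quadratic coefficient $\tfrac32-\tfrac{\mu^2}{2}$ is negative at $t=\tfrac12$.

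More generally, two constraints pin $t$ down: $t\ge\mu/2$ (from $e\to1$) and $3^{t+\mu/2}\le 5+2\mu$ (from $e\to2$). Together they force $t\to1$ as $\mu\uparrow2$. So splitting the $\mu$-range and choosing $t$ ad hoc can only succeed by landing on essentially $t=1$, and no fourth cocycle is needed.

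The missing idea is the identity $q_\mu(e)=\tfrac{2+\mu}{4}(1+e)^2+\tfrac{2-\mu}{4}(1-e)^2$. It holds for all real $e$ and is a convex combination precisely because $0<\mu<2$. Weighted AM--GM then gives $q_\mu(e)\ge|1+e|^{1+\mu/2}\,|1-e|^{1-\mu/2}$ for every real $e$, with equality iff $(1+e)^2=(1-e)^2$, i.e.\ $e=0$. This is exactly your inequality at $t=1$.

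To finish, take logarithms and integrate against $\nu$. Your three vanishing averages make the integrated difference $\log q_\mu-(1+\tfrac{\mu}{2})\log|1+e|-(1-\tfrac{\mu}{2})\log|1-e|$ equal to zero. Since this difference is pointwise nonnegative, it vanishes $\nu$-almost surely, and hence $\nu(\{e=0\})=1$. This is the paper's argument, and it needs no case analysis on $(-2,-1)$, $(-1,1)$, $(1,2)$ and no restriction to $(-2,2)$.
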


\begin{corollary}[Interior dynamics for \(1<\mu<2\)]
\label{cor:interior-large-step}
Assume \(1<\mu<2\). In the strict interior \(\mathcal I_\mu\), the zero-error curve \(\mathcal Z_\mu=\{(a,b):ab=\mu\}\) is normally repelling, and any orbit in \(\mathcal I_\mu\) that converges to \(\mathcal Z_\mu\) must land on it in finite time. Hence the set of initial conditions in \(\mathcal I_\mu\) that converge to \(\mathcal Z_\mu\) is contained in the algebraic landing web \(\bigcup_{j\ge 0}\Phi_\mu^{-j}(\mathcal Z_\mu)\), which has two-dimensional Lebesgue measure zero.
\end{corollary}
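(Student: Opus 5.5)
The plan is to reduce everything to the scalar error recursion of Proposition~\ref{prop:error-imbalance-dynamics}. There are three steps: show normal repulsion from the multiplier, turn it into a uniform one-step expansion of $|e|$ near $\mathcal Z_\mu$, and then show that the landing web is a countable union of real algebraic curves.

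\emph{Step 1 (normal repulsion).} By Proposition~\ref{prop:local-stability-zero-error}, the normal multiplier at $(a,b)\in\mathcal Z_\mu$ is $1-a^2-b^2$. On $\mathcal Z_\mu$ we have $a^2+b^2=w^2+2ab=w^2+2\mu\ge 2\mu>2$ when $\mu>1$. Hence $1-a^2-b^2<-1$, and the curve is normally repelling at every point, in particular at every point lying in $\mathcal I_\mu$. Equivalently, \eqref{eq:normal-attracting-condition-w} fails everywhere since $2\mu+w^2>2$.

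\emph{Step 2 (no asymptotic convergence without landing).} I will write \eqref{eq:ew-map} as
\[
e^+=e\,m(e,w),\qquad m(e,w)=(1-2\mu-w^2)+(\mu-2)e+e^2 .
\]
Since $1-2\mu-w^2\le 1-2\mu<-1$, we get
\[
|m(e,w)|\ge 2\mu-1-(2-\mu)|e|-e^2 .
\]
Choose $\delta>0$ so that the right-hand side is at least $c:=\mu>1$ whenever $|e|<\delta$; this is possible because $2\mu-1>\mu$ for $\mu>1$. The bound is uniform in $w$, so I do not need separate control of the imbalance.

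Now take an orbit in $\mathcal I_\mu$ converging to $\mathcal Z_\mu$. By Corollary~\ref{cor:sign-D} it stays in $\{D<0\}$, which is bounded (the interior of $\E_\mu$). On this bounded region, distance to $\mathcal Z_\mu$ tending to zero is equivalent to $e_k\to0$. So $|e_k|<\delta$ for all $k\ge K$. If $e_k\neq0$ for every $k$, then $|e_{k+1}|\ge \mu|e_k|$ for $k\ge K$, so $|e_k|\to\infty$, a contradiction. Therefore $e_j=0$ for some $j$, which means $\Phi_\mu^j(x_0)\in\mathcal Z_\mu$.

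\emph{Step 3 (measure zero).} By Step 2, the set of initial conditions that converge is contained in $\bigcup_{j\ge0}\Phi_\mu^{-j}(\mathcal Z_\mu)$. Each set $\Phi_\mu^{-j}(\mathcal Z_\mu)$ is the zero set of the polynomial $P_j=e\circ\Phi_\mu^{j}$, where $e(a,b)=ab-\mu$. This polynomial is not identically zero, because the origin is a fixed point with $P_j(0,0)=-\mu\neq0$. The zero set of a nonzero real polynomial on $\R^2$ has Lebesgue measure zero (for example by Fubini, since for all but finitely many values of $a$ the slice polynomial in $b$ is nonzero). A countable union of null sets is null, which gives the claim.

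\emph{Main obstacle.} The only delicate point is making the expansion in Step 2 uniform, so that convergence cannot happen through ever-slower approach. The explicit factorization $e^+=e\,m(e,w)$, together with $w^2\ge0$ pushing $m$ further below $-1$, settles this without invoking Theorem~\ref{thm:no-interior-recurrence}.
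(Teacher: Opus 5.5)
Your proposal is correct and follows essentially the same route as the paper. Both use the factorization $e^+=e\,M(e,w)$ with $M(0,w)=1-2\mu-w^2<-1$ for normal repulsion, a $\delta$-neighbourhood of $\{e=0\}$ on which $|M|>1$ uniformly in $w$ to force finite-time landing, and the nonzero polynomials $e\circ\Phi_\mu^j$ (certified by $P_j(0,0)=-\mu$) for the measure-zero landing web. Your departures are minor refinements: you extract a uniform expansion factor $\mu$ rather than mere strict increase of $|e_k|$, and you justify $\operatorname{dist}\to0\Rightarrow e_k\to0$ through forward invariance and boundedness of $\{D<0\}$, which the paper only asserts.
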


\emph{Transformer interpretation.} The strict interior \(D<0\) does not contain hidden off-balanced cycles or chaotic invariant measures. For \(1<\mu<2\), the only off-balanced invariant measures in the interior are supported on the zero-error fixed curve, which is normally repelling and reachable only through a countable algebraic landing web. Away from exceptional sets, nontrivial attracting behavior of finite-step training must therefore be inherited from the balanced line: the failure of convergence that appears for \(\mu>1\) cannot be ``hidden'' as a secret off-balanced cycle inside the ellipse. If a large-step training run fails to converge, the cause is visible in the scalar cubic phase diagram on \(w=0\), and basin-boundary behavior is determined by preimages of the three exceptional algebraic sets of Proposition~\ref{prop:landing-sets}.

\section{Consequences for large-step transformer training}
\label{sec:consequences}

\paragraph{Step-size thresholds.} With \(\mu=2\eta|y|\sqrt c\),
\begin{equation}
\label{eq:eta-thresholds}
    \eta_{\rm mono}=\tfrac{2\sqrt2-2}{2|y|\sqrt c},\qquad \eta_{\rm fit}=\tfrac{1}{2|y|\sqrt c},\qquad \eta_{\rm div}=\tfrac{1}{|y|\sqrt c}.
\end{equation}
For \(\eta\le\eta_{\rm mono}\) the balanced error contracts monotonically; for \(\eta_{\rm mono}<\eta\le\eta_{\rm fit}\) a catapult step can occur but balanced scalar dynamics still converge; at \(\eta_{\rm fit}\) the zero-error fixed point loses stability; for \(\eta>\eta_{\rm div}\) the scalar theory is divergent and the two-dimensional system has explicit escaping branches. All three thresholds scale inversely with \(|y|\sqrt c\), the effective product of prompt scale and curvature: larger prompt/curvature scales force proportionally smaller learning rates to keep training in the monotone or catapult regime.

\paragraph{Large learning rates change the attractor.} A large learning rate is not a coarser discretization of gradient flow; it can change the limiting object. Below \(\mu=1\) training can converge to the zero-error hyperbola; above \(\mu=1\) the same finite-step model can converge to a two-cycle, then to higher-period or chaotic bounded dynamics, and eventually diverge. Consecutive checkpoints may implement different predictors even when the loss remains bounded, and so running-average or exponential-moving-average checkpoints can be substantively different from the instantaneous iterate.

\paragraph{Balance matters.} The normal multiplier on \(\M_\mu\) is \(1-2\mu-w^2\), so local stability requires \(2\mu+w^2<2\). Imbalance between the two factors of the bilinear attention product reduces the maximum stable learning rate, consistent with empirical gains from query/key normalization, weight tying, or factor-balanced initialization schemes.

\paragraph{Near \(\mu=2\) the invariant region is thin.} As \(\mu\uparrow 2\), \(2-\mu\) shrinks and \(\E_\mu\) becomes thin in the imbalance direction: the compact invariant region becomes anisotropic with aspect ratio \(\sqrt{(2+\mu)/(2-\mu)}\), which diverges as \(\mu\uparrow 2\). Small perturbations can then move a trajectory across a basin boundary, which is consistent with instability events at high learning rate appearing as sudden jumps rather than gradual divergence.

\paragraph{Warmup, decay, and mode-wise instability.} Warmup keeps \(\mu_k\) below unstable thresholds while norms and imbalances organize; a schedule that uses a high learning rate early but later decays below \(\mu=1\) can exploit catapult acceleration while still converging. A constant learning rate with \(\mu>1\) is not expected to converge to a single fixed transformer in this model: the best one can hope for is a bounded but nonconvergent trajectory, and for \(\mu>\sqrt5-1\) the first stable two-cycle has already lost longitudinal stability. In a full transformer, each head and data direction has its own \(\mu_j\); a global \(\eta\) can place different modes in different phases. Instability may begin when a single high-curvature mode crosses \(2\mu_j+w_j^2\approx 2\), even if the average loss looks controlled---motivating diagnostics on attention-logit norms, query/key norms, and mode-wise products rather than loss alone.

\section{Mini-batch gradient descent as random switching between two-dimensional maps}
\label{sec:minibatch}

The preceding deterministic analysis studies the map \(\Phi_\mu(a,b)=(a-(ab-\mu)b,\ b-(ab-\mu)a)\), where \(\mu\) is proportional to the learning rate and to the signed prompt/mode correlation. Mini-batch gradient descent does not simply add small noise to this map: it randomly switches between maps of the same form, with a batch-dependent effective parameter. The deterministic invariant ellipse becomes a moving, batch-dependent separatrix.

Consider shared product-mode losses \(\ell_i(a,b)=\tfrac12(\gamma_iab-y_i)^2\), \(i=1,\ldots,n\), the same scalar product structure as the one-prompt reduction. To isolate the effect of random signed correlations, assume \(\gamma_i^2=C_0>0\) is constant. Let
\[
    h_i=\gamma_i y_i,\qquad \mu=\eta\,\tfrac1n\sum_{i=1}^n h_i,\qquad \nu_B=\eta\,\tfrac1m\sum_{i\in B}h_i \ \text{for a batch }B\text{ of size }m.
\]
After the fixed normalization \(A=\sqrt{\eta C_0}\,a\), \(B=\sqrt{\eta C_0}\,b\), one mini-batch update is \((A,B)\mapsto \Phi_{\nu_B}(A,B)\). Thus mini-batch gradient descent is the random dynamical system \((A_{k+1},B_{k+1})=\Phi_{\nu_k}(A_k,B_k)\), where \(\nu_k=\nu_{B_k}\) and \(B_0,B_1,\ldots\) are independent random mini-batches.

\begin{definition}[Batch-separatrix variables]
\label{def:batch-separatrix-variables}
For \(\nu\in(-2,2)\), define \(e_\nu=AB-\nu\), \(s=A+B\), \(w=A-B\), \(R_\nu=(A+B)^2/(2+\nu)+(A-B)^2/(2-\nu)\), and \(D_\nu=w^2-(2-\nu)(2-e_\nu)\). The batch Chebyshev ellipse is \(\E_\nu=\{R_\nu=4\}\), with interior \(\mathcal I_\nu=\{R_\nu<4\}\).
\end{definition}

\begin{theorem}[Mini-batch maps have moving Chebyshev separatrices]
\label{thm:minibatch-moving-ellipse}
Fix \(\nu\in(-2,2)\) and let \((A^+,B^+)=\Phi_\nu(A,B)\). Then (i) \(s^+=(1-e_\nu)s\) and \(w^+=(1+e_\nu)w\); (ii) with \(q_\nu(e)=e^2+\nu e+1\), \(D_\nu(A^+,B^+)=q_\nu(e_\nu)D_\nu(A,B)\), equivalently \(4-R_\nu(A^+,B^+)=q_\nu(e_\nu)(4-R_\nu(A,B))\); (iii) because \(\nu\in(-2,2)\), \(q_\nu>0\), so \(\Phi_\nu\) preserves the side of its own ellipse; (iv) on \(\E_\nu\), \(e_\nu^+=e_\nu^3-3e_\nu\).
\end{theorem}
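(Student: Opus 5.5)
The plan is to observe that nothing in the deterministic analysis used \(\mu>0\) beyond positivity of \(q_\mu\), so Theorem~\ref{thm:normal-form} and Theorem~\ref{thm:invariant-ellipse} should transfer verbatim with \(\mu\) replaced by \(\nu\in(-2,2)\). To keep the argument self-contained, I will recheck each identity as a polynomial identity in \((A,B,\nu)\).

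For (i), I will compute directly from \(\Phi_\nu\). Adding the two components gives \(A^+ + B^+ = A+B - e_\nu(A+B)\). Subtracting them gives \(A^+ - B^+ = A-B + e_\nu(A-B)\). These two lines are exactly the identities behind \eqref{eq:uv-map}, and they hold for any real \(\nu\).

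For (ii), the key is to use \(e_\nu=(s^2-w^2)/4-\nu\). Writing \(u=s^2\) and \(v=w^2\), part (i) gives \(u^+=(1-e)^2u\) and \(v^+=(1+e)^2v\). I will first show \(D_\nu = -\tfrac{2-\nu}{4}(4-R_\nu)\). Expanding, \((2-\nu)(2-e_\nu)=(2-\nu)(2+\nu)-(2-\nu)(u-v)/4\). Hence \(D_\nu = v - (4-\nu^2) + (2-\nu)(u-v)/4\), while \(-\tfrac{2-\nu}{4}(4-R_\nu)\) has \(u\)-coefficient \((2-\nu)/(4(2+\nu))\). These do not match, so the correct relation must involve the factor \(4-\nu^2\). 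I will check it as follows. Multiplying \(R_\nu-4\) by \((2+\nu)(2-\nu)/4\) gives \(\tfrac{(2-\nu)u+(2+\nu)v}{4}-(4-\nu^2)\). On the other side, \(D_\nu = v+\tfrac{(2-\nu)u-(2-\nu)v}{4}-(4-\nu^2)=\tfrac{(2-\nu)u+(2+\nu)v}{4}-(4-\nu^2)\). So \(D_\nu=\tfrac{4-\nu^2}{4}(R_\nu-4)\), which makes the two forms of (ii) equivalent.

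It then suffices to prove \(L^+ = q_\nu(e)L\), where \(L=(2-\nu)u+(2+\nu)v-4(4-\nu^2)\). Substituting the \(u^+,v^+\) formulas, I will express \(u,v\) through \(e\) and \(L\). Specifically, \(u-v=4(e+\nu)\) and \((2-\nu)u+(2+\nu)v=L+4(4-\nu^2)\) form a linear system, which I solve for \(u\) and \(v\). After substitution, \(L^+ - q_\nu(e)L\) becomes a polynomial in \(e,\nu\) that must vanish identically; I will expand it and confirm. This bookkeeping is the main obstacle, and I will cross-check it against Theorem~\ref{thm:normal-form}. That theorem already establishes \(D^+=q_\mu(e)D\) for \(\mu>0\), and since both sides are polynomials in \(\mu\), the identity extends to all real \(\nu\).

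For (iii), the discriminant of \(q_\nu\) is \(\nu^2-4<0\) when \(|\nu|<2\), so \(q_\nu>0\). Part (ii) then shows that the sign of \(D_\nu\), equivalently of \(R_\nu-4\), is preserved.

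For (iv), I will use the same polynomial-in-\(\nu\) argument applied to \(e^+=C(e)-eD\) from Theorem~\ref{thm:normal-form}. This gives \(e_\nu^+=C(e_\nu)-e_\nu D_\nu\), and on \(\E_\nu\) we have \(D_\nu=0\), so \(e_\nu^+=e_\nu^3-3e_\nu\).
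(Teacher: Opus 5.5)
Your proposal is correct. Its backbone is the same idea as the paper's proof: the normal-form algebra never uses the sign of \(\mu\), so it transfers to \(\nu\in(-2,2)\). The paper simply redoes that computation with \(\mu\) replaced by \(\nu\). It writes \(e^+=e^3+(\nu-2)e^2+(1-2\nu-v)e\), uses Lemma~\ref{lem:endpoint-factorizations} to get \(2-e^+=(2-e)q_\nu(e)+ev\), reads off \(D_\nu^+=q_\nu(e)D_\nu\), and only then converts to the \(R_\nu\) form via \(R_\nu-4=\tfrac{4}{4-\nu^2}D_\nu\). Your alternative justification, extending Theorem~\ref{thm:normal-form} as a polynomial identity in \((A,B,\mu)\) from \(\mu>0\) to all real \(\nu\), is valid and avoids recomputation entirely; it also covers (iv).

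Your explicit route via \(L=4D_\nu\) and the diagonal updates \(u^+=(1-e)^2u\), \(v^+=(1+e)^2v\) is a genuinely different computation. The expansion you deferred as the main obstacle closes in two lines. Solving your linear system gives \(u=\tfrac L4+(2+\nu)(2+e)\) and \(v=\tfrac L4+(2-\nu)(2-e)\). The coefficient of \(L\) in \(L^+\) is then \(\tfrac14\bigl[(2-\nu)(1-e)^2+(2+\nu)(1+e)^2\bigr]=q_\nu(e)\). The \(L\)-free term is \((4-\nu^2)\bigl[(2+e)(1-e)^2+(2-e)(1+e)^2-4\bigr]=0\), since \(2\pm C(e)=(2\pm e)(1\mp e)^2\). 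Note that this expression is affine in \(L\), not just a polynomial in \(e,\nu\), so you must check both pieces.

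What your route buys is a structural reading of the multiplier. It exhibits \(q_\nu\) as the weighted average \(\tfrac{2+\nu}{4}(1+e)^2+\tfrac{2-\nu}{4}(1-e)^2\) of the two diagonal multipliers, with weights set by the ellipse axes. This is exactly the convex-combination identity the paper only introduces later, for transverse repulsion of the ellipse. The paper's route instead ties \(q_\nu\) to the balanced cubic through the endpoint factorization.

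In a write-up, delete the false start \(D_\nu=-\tfrac{2-\nu}{4}(4-R_\nu)\). Keep only the verified relation \(D_\nu=\tfrac{4-\nu^2}{4}(R_\nu-4)\), which matches the paper's.
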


The next result measures exactly how a batch with parameter \(\nu\ne\mu\) acts on the full-batch separatrix.  Thus the phrase ``wrong side of the large-step threshold'' has a concrete two-dimensional meaning: a batch may preserve its own ellipse while moving the iterate across the full-batch ellipse.

\begin{theorem}[Exact perturbation of the full-batch separatrix]
\label{thm:full-separatrix-noise}
Fix a full-batch parameter \(0<\mu<2\), and let a mini-batch parameter be \(\nu=\mu+\xi\). Define
\[
    e=AB-\mu,
    \qquad
    w=A-B,
    \qquad
    D_\mu=w^2-(2-\mu)(2-e),
\]
and let \((A^+,B^+)=\Phi_\nu(A,B)\). Then
\begin{equation}
\label{eq:full-separatrix-noise}
    D_\mu(A^+,B^+)
    =q_\mu(e)D_\mu(A,B)
    -\xi\Big[(2e+\mu-\xi)D_\mu(A,B)+(4-\mu^2)(e-\xi)\Big].
\end{equation}
When \(\xi=0\), this is the deterministic identity \(D_\mu^+=q_\mu(e)D_\mu\). For \(\xi\ne0\), the additional term can change the sign of \(D_\mu\); hence a mini-batch update can move an iterate from the full-batch interior to the full-batch exterior.
\end{theorem}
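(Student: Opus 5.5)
The plan is to reduce the identity to the exact conjugacy of Theorem~\ref{thm:minibatch-moving-ellipse}, applied with the batch parameter \(\nu\) in place of \(\mu\), together with an affine change between the two separatrix coordinates \(D_\mu\) and \(D_\nu\). Since \(\nu=\mu+\xi\), the batch error is \(e_\nu=AB-\nu=e-\xi\), while \(w\) is the same for both parameters.

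\textbf{Step 1: relate \(D_\mu\) and \(D_\nu\).} A direct computation gives
\[
D_\mu-D_\nu=(2-\nu)(2-e_\nu)-(2-\mu)(2-e)=\xi(e_\nu-\mu)=\xi(e-\xi-\mu).
\]
This is a pointwise identity, so it holds at every point. In particular it holds at \((A^+,B^+)\), where \(e^+_\mu=e^+_\nu+\xi\).

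\textbf{Step 2: evolve with the batch map's own invariant.} Theorem~\ref{thm:minibatch-moving-ellipse}(ii) gives \(D_\nu^+=q_\nu(e_\nu)D_\nu\). The normal form of Theorem~\ref{thm:normal-form} is a polynomial identity in the parameter, so it applies verbatim with \(\nu\): \(e_\nu^+=C(e_\nu)-e_\nu D_\nu\). Combining these with Step 1 yields
\[
D_\mu^+=q_\nu(e-\xi)\bigl(D_\mu-\xi(e-\xi-\mu)\bigr)+\xi\bigl(C(e-\xi)-(e-\xi)(D_\mu-\xi(e-\xi-\mu))-\mu\bigr).
\]

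\textbf{Step 3: expand and match.} I would expand this in powers of \(\xi\) and compare with \eqref{eq:full-separatrix-noise}. The coefficient of \(D_\mu\) should come out as
\[
q_\nu(e-\xi)-\xi(e-\xi)=q_\mu(e)-\xi(2e+\mu-\xi).
\]
The remaining terms, which carry no \(D_\mu\), must collapse to \(-\xi(4-\mu^2)(e-\xi)\). This requires verifying that
\[
-q_\nu(e-\xi)(e-\xi-\mu)+C(e-\xi)+(e-\xi)^2(e-\xi-\mu)\cdot\xi/\xi\ \ (\text{suitably arranged})-\mu
\]
simplifies to \(-(4-\mu^2)(e-\xi)\). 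I would substitute \(t=e-\xi\) to organize this: the cubic terms in \(t\) cancel, and what is left is linear in \(t\). This polynomial bookkeeping is the main obstacle, and it is error-prone rather than conceptual. As a check, I would confirm the result at \(\xi=0\), where it must reduce to Theorem~\ref{thm:normal-form}, and at one numerical point.

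\textbf{Step 4: the sign-change claim.} Take a point on \(\E_\mu\), so \(D_\mu=0\), with \(e\neq\xi\). Then \(D_\mu^+=-\xi(4-\mu^2)(e-\xi)\), which is nonzero because \(4-\mu^2>0\). Its sign is chosen by the sign of \(\xi(e-\xi)\), and continuity in \(D_\mu\) extends this to points slightly inside \(\E_\mu\). So for suitable \(\xi\) and \(e\in(-2,2)\), an interior point is mapped to the exterior.
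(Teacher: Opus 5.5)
Your proof is correct, but it takes a different route from the paper's. The paper stays in the full-batch frame. It writes $w^+=(1+e-\xi)w$ and $e^+=e-(e-\xi)(w^2+2\mu+2e)+(e-\xi)^2(\mu+e)$, substitutes $w^2=D_\mu+(2-\mu)(2-e)$, and extracts the $D_\mu$-coefficient and the $D_\mu$-free remainder by direct expansion.

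You instead pass through the batch frame. You combine the affine change of separatrix coordinates $D_\mu=D_\nu+\xi(e_\nu-\mu)$, which checks out, with the batch map's own normal form $e_\nu^+=C(e_\nu)-e_\nu D_\nu$, $D_\nu^+=q_\nu(e_\nu)D_\nu$. What your route buys:
\begin{itemize}
\item It reuses invariant structure the paper has already established, so the only new ingredient is a one-line identity.
\item It explains the correction term conceptually: $\Phi_\nu$ rescales $D_\nu$, not $D_\mu$.
\item It produces the compact form $D_\mu^+=q_\mu(e_\nu)D_\mu-\xi(4-\mu^2)e_\nu$ naturally.
\end{itemize}
The paper's computation is self-contained but heavier. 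Your Step 4 is also more explicit than the paper, which only asserts the sign change and illustrates it through Corollary~\ref{cor:one-step-crossing}. Since $e$ ranges over all of $[-2,2]$ on $\E_\mu$, your argument in fact works for every $\xi\neq0$.

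Two small repairs are needed.

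First, the expression displayed in Step 3 is miscopied. With $t=e-\xi$, the $D_\mu$-free part of your Step 2 formula is $\xi\bigl[C(t)-\mu-q_\nu(t)(t-\mu)+\xi t(t-\mu)\bigr]$. So the term is $\xi t(t-\mu)$, not $t^2(t-\mu)$; with the latter, the cubic terms would not cancel. With the correct term the bookkeeping closes at once. The identity $q_\nu(t)-\xi t=t^2+\mu t+1=q_\mu(t)$ gives your $D_\mu$-coefficient. The remainder is $C(t)-\mu-q_\mu(t)(t-\mu)=t^3-3t-\mu-\bigl(t^3+(1-\mu^2)t-\mu\bigr)=-(4-\mu^2)t$.

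Second, Theorem~\ref{thm:minibatch-moving-ellipse} is stated only for $\nu\in(-2,2)$, while $\xi$ here is unrestricted. Cite both batch identities as Theorem~\ref{thm:normal-form} with parameter $\nu$ instead. Its proof is a polynomial identity valid for every real parameter, as you already note for $e_\nu^+$.
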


\begin{corollary}[One-step crossing from the full-batch solution]
\label{cor:one-step-crossing}
Assume \(0<\mu<1\) and let \(\theta_\mu=(\sqrt\mu,\sqrt\mu)\) be the full-batch balanced zero-error point. If \(\nu=\mu+\xi\), then
\[
    \Phi_\nu(\theta_\mu)\in\{D_\mu>0\}
    \quad\Longleftrightarrow\quad
    \mu(\xi^2+2\xi)>2.
\]
Thus a single sufficiently atypical mini-batch can push even the exact full-batch solution outside the full-batch Chebyshev ellipse.
\end{corollary}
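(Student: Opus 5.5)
The plan is to apply Theorem~\ref{thm:full-separatrix-noise} at the single point \(\theta_\mu=(\sqrt\mu,\sqrt\mu)\) and reduce the sign condition to one scalar inequality. That theorem is an exact polynomial identity, so it can be used for any real \(\xi\).

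First I evaluate the full-batch separatrix coordinates at \(\theta_\mu\). There \(e=AB-\mu=\mu-\mu=0\) and \(w=0\), so
\[
D_\mu(\theta_\mu)=0-(2-\mu)(2-0)=-2(2-\mu),
\]
which is negative because \(\mu<1<2\). Substituting \(e=0\) into \eqref{eq:full-separatrix-noise}, and using \(q_\mu(0)=1\), gives
\[
D_\mu^+=(1-\xi\mu+\xi^2)D_\mu+\xi^2(4-\mu^2).
\]
Inserting \(D_\mu=-2(2-\mu)\) and factoring out \(2-\mu\), the \(\xi^2\) terms combine. I expect
\[
D_\mu^+=(2-\mu)\bigl[-2+2\xi\mu-2\xi^2+(2+\mu)\xi^2\bigr]=(2-\mu)\bigl[\mu(\xi^2+2\xi)-2\bigr].
\]
Since \(2-\mu>0\), \(D_\mu^+>0\) holds if and only if \(\mu(\xi^2+2\xi)>2\). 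This is the claim.

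As an independent check, I would compute \(\Phi_\nu(\theta_\mu)\) directly from Theorem~\ref{thm:minibatch-moving-ellipse}(i). At \(\theta_\mu\) we have \(e_\nu=\mu-\nu=-\xi\), so \(w^+=0\) and \(s^+=(1+\xi)\,2\sqrt\mu\). The image is therefore the balanced point \(\sqrt\mu(1+\xi)(1,1)\), whose full-batch error is \(e^+=\mu(1+\xi)^2-\mu=\mu(\xi^2+2\xi)\). Since \(w^+=0\), this gives \(D_\mu^+=-(2-\mu)(2-e^+)\), which is positive exactly when \(e^+>2\). This reproduces the same criterion without the perturbation identity.

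There is no real obstacle beyond bookkeeping. The only step where an error could slip in is the sign and the cancellation of the \(\xi^2\) terms when expanding \(-\xi[(\mu-\xi)D_\mu-(4-\mu^2)\xi]\). The direct balanced-point computation above guards against exactly that.
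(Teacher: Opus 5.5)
Your proof is correct, and your ``independent check'' is exactly the paper's proof. The paper computes \(\Phi_\nu(\theta_\mu)=\sqrt\mu(1+\xi)(1,1)\) directly. It observes that the image stays on the balanced line with \(e^+=\mu(\xi^2+2\xi)\), and then uses \(D_\mu=-(2-\mu)(2-e)\) on \(w=0\) to reduce \(D_\mu^+>0\) to \(e^+>2\).

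Your primary route instead specializes the perturbation identity of Theorem~\ref{thm:full-separatrix-noise} at \(e=0\), \(D_\mu=-2(2-\mu)\). The algebra checks out and yields the same factorization \(D_\mu^+=(2-\mu)\bigl[\mu(\xi^2+2\xi)-2\bigr]\).

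\begin{itemize}
\item \emph{What the perturbation route buys:} it exhibits the statement literally as a consequence of Theorem~\ref{thm:full-separatrix-noise}.
\item \emph{What it costs:} it inherits that theorem's lengthy expansion. The direct route is self-contained and explains why the criterion is so clean: a balanced point maps to a balanced point, and on \(w=0\) the sign of \(D_\mu\) is just the sign of \(e-2\).
\end{itemize}

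One small point on your check. Theorem~\ref{thm:minibatch-moving-ellipse} is stated only for \(\nu\in(-2,2)\), but crossing typically requires an atypical batch. For example, with \(\mu=0.3\), crossing forces \(|\nu|>2\). The identities \(s^+=(1-e_\nu)s\) and \(w^+=(1+e_\nu)w\) are purely algebraic and hold for every real \(\nu\). So either say so, or compute \(A^+=B^+=\sqrt\mu(1+\xi)\) directly, as the paper does. Your main route has no such issue, since Theorem~\ref{thm:full-separatrix-noise} places no restriction on \(\xi\).

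Neither argument uses \(\mu<1\) beyond guaranteeing \(2-\mu>0\).
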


\begin{theorem}[Exact random transverse cocycle]
\label{thm:random-transverse-cocycle}
For an arbitrary sequence of mini-batch parameters \(\nu_k\), let \((A_{k+1},B_{k+1})=\Phi_{\nu_k}(A_k,B_k)\). With \(w_k=A_k-B_k\),
\[
    w_{k+1}=(1+A_kB_k-\nu_k)w_k.
\]
In particular, the balanced line \(A=B\) is invariant for every batch map. Along a balanced trajectory \(A_k=B_k=r_k\), where \(r_{k+1}=r_k(1+\nu_k-r_k^2)\), an infinitesimal transverse perturbation satisfies
\[
    \delta w_{k+1}=(1+r_k^2-\nu_k)\delta w_k.
\]
Therefore the finite-time transverse growth is exactly
\[
    |\delta w_n|=|\delta w_0|\prod_{k=0}^{n-1}|1+r_k^2-\nu_k|.
\]
If the corresponding Lyapunov exponent is positive, then balanced stochastic training is transversely unstable along that realization.
\end{theorem}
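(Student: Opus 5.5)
The plan is to compute the update of \(w=A-B\) directly from the definition of \(\Phi_{\nu_k}\), and then read off each of the remaining claims from that single identity. Write \(e_k=A_kB_k-\nu_k\). The batch map gives
\[
A_{k+1}=A_k-e_kB_k,\qquad B_{k+1}=B_k-e_kA_k .
\]
Subtracting the two components,
\[
w_{k+1}=(A_k-B_k)-e_k(B_k-A_k)=(1+e_k)w_k=(1+A_kB_k-\nu_k)w_k .
\]
This is item (i) of Theorem~\ref{thm:minibatch-moving-ellipse}, specialized to the parameter \(\nu_k\). Nothing about \(\nu_k\) is used: no sign or range assumption and no randomness. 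So the identity holds for an arbitrary parameter sequence.

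Invariance of the balanced line follows at once. If \(w_k=0\), then \(w_{k+1}=0\), whatever the value of \(\nu_k\).

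On the balanced line, set \(A_k=B_k=r_k\). Substituting into either component of the map gives
\[
r_{k+1}=r_k-(r_k^2-\nu_k)r_k=r_k(1+\nu_k-r_k^2).
\]

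For the transverse perturbation, I would differentiate the \(w\)-recursion at a balanced point.
\begin{itemize}[leftmargin=*]
\item Write \(A=r+\delta w/2\) and \(B=r-\delta w/2\) (plus any along-line perturbation).
\item Then \(A_kB_k=r_k^2+O(\delta w^2)\), so the factor \(1+A_kB_k-\nu_k\) equals \(1+r_k^2-\nu_k\) up to second order.
\item Since this factor multiplies \(w_k=\delta w_k\), the product \((1+A_kB_k-\nu_k)\,\delta w_k\) is \((1+r_k^2-\nu_k)\,\delta w_k\) plus terms of third order in the perturbation. Along-line perturbations enter the factor only at first order and so contribute only second-order terms to the product.
\end{itemize}
Hence the linearized transverse recursion is \(\delta w_{k+1}=(1+r_k^2-\nu_k)\,\delta w_k\). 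The same result comes from the Jacobian \eqref{eq:Jacobian-ab} with \(\mu\) replaced by \(\nu_k\). At \(a=b=r\) it is symmetric, with off-diagonal entry \(\nu_k-2r^2\) and diagonal entries \(1-r^2\). Its eigenvector \((1,-1)\) has eigenvalue \(1-r^2-(\nu_k-2r^2)=1+r^2-\nu_k\), as claimed.

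Iterating this scalar recursion and taking absolute values gives the product formula by induction on \(n\). The final remark is then just the definition of a Lyapunov exponent: let
\[
\Lambda=\lim_{n\to\infty}\frac1n\sum_{k=0}^{n-1}\log|1+r_k^2-\nu_k| .
\]
If \(\Lambda>0\), then \(|\delta w_n|\) grows exponentially along that realization.

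There is no real obstacle here; the whole statement is a direct algebraic identity. The only point needing care is making precise what "infinitesimal transverse perturbation" means. I would phrase it as the derivative cocycle restricted to the invariant normal direction \((1,-1)\). That direction is an eigenvector at every balanced point, so the cocycle is exactly scalar and needs no approximation argument.
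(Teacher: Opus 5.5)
Your proposal is correct and matches the paper's proof: subtract the two components to get the exact identity \(w_{k+1}=(1+A_kB_k-\nu_k)w_k\), then read off invariance of the balanced line and the recursion for \(r_k\), and linearize at \(w=0\) and iterate to get the product formula. Your extra check through the Jacobian is not in the paper but does not change the argument: the direction \((1,-1)\) is an eigenvector at every balanced point, with eigenvalue \(1+r_k^2-\nu_k\), so the transverse cocycle is exactly scalar.
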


\emph{Transformer interpretation.} Mini-batch gradient descent is not full-batch gradient descent plus small additive noise. Each mini-batch induces its own two-dimensional map \(\Phi_{\nu_B}\), with its own zero-error hyperbola and, when \(|\nu_B|<2\), its own Chebyshev separatrix. Stochasticity enters by randomly switching between members of the same large-step family, not by perturbing a single fixed map.

This distinction matters for understanding instability. The full empirical loss may satisfy \(0<\mu<1\) by cancellation among the signed \(h_i=\gamma_iy_i\), so the full-batch zero-error solution is locally attracting. A mini-batch sees only the partial average \(\nu_B\); when the batch is unbalanced, \(\nu_B\) can cross deterministic thresholds even if \(\mu\) does not. \(|\nu_B|>1\) means the batch's fitting hyperbola is locally repelling; \(|\nu_B|>2\) places the batch beyond the compact-ellipse regime. An ``unstable mini-batch'' therefore has a precise dynamical meaning: a batch whose own two-dimensional map lies on the unstable side of the same bifurcation thresholds that govern the deterministic system.

The theorem also explains why the deterministic invariant ellipse remains relevant under mini-batching. A full-batch trajectory inside \(\E_\mu\) can be pushed across the full-batch separatrix by a single atypical batch, because \(\Phi_{\nu_B}\) preserves \(\E_{\nu_B}\), not \(\E_\mu\). The transverse recursion \(w_{k+1}=(1+A_kB_k-\nu_{B_k})w_k\) has random multipliers, and a positive transverse Lyapunov exponent \(\sum_k\log|1+A_kB_k-\nu_{B_k}|\) amplifies small factor imbalance exponentially. In transformer terms, small batches can destroy the cancellation between mode contributions that stabilizes the population objective: training can become unstable not because every prompt is unstable on average, but because some mini-batches expose high-curvature or high-correlation modes whose effective parameters lie beyond the deterministic bifurcation thresholds.

\section{Limitations and Future work}
\label{sec:limitations}

The map \eqref{eq:main-map-intro} is deliberately reduced: no softmax attention, layer normalization, Adam/AdamW preconditioning, depth, residual interactions across heads, or language-model output logits. Exact thresholds should not be read as universal constants for modern transformers; their role is mechanistic, showing how finite-step training of a bilinear attention-like factorization can undergo bifurcations as the effective learning rate grows. The structural results also do not by themselves give a complete pointwise basin theorem. The interior rigidity theorem rules out hidden off-balanced invariant measures in the strict interior under its integrability assumptions, but it does not prove that Lebesgue-almost every interior orbit converges to a particular balanced attractor. The remaining basin problem is therefore sharper than a generic ``what happens in \(D<0\)?'' question: one must classify typical orbits after excluding the balanced line, anti-balanced line, zero-error curve, algebraic landing web, and the repelling Chebyshev separatrix. Two extensions are especially relevant for transformer training. First, adaptive optimizers such as Adam or AdamW should deform the Chebyshev ellipse into an adaptive-geometry separatrix whose axes depend on accumulated gradient statistics. Second, in a multi-mode reduction, different modes may occupy different phases of the scalar diagram simultaneously, producing mixed regimes in which some modes are monotone, some catapult, and others cyclic or divergent. Both extensions preserve the bilinear structure underlying the Chebyshev identity, so the core algebraic framework of this paper should transfer.

\subsection*{LLM usage statement}\label{Appendix_LLM_Statement}

ChatGPT 5.2 pro and ChatGPT 5.5 pro were used during manuscript preparation for brainstorming, drafting, editing, and formatting assistance, including preliminary drafts of some proof arguments. The LLM-generated material was not used without author review. All proof arguments were independently checked, corrected, and finalized by the author, who take full responsibility for the correctness, originality, and presentation of the paper.


\bibliography{ref}
\bibliographystyle{abbrvnat}

\appendix

\section{Dynamical-systems background}
\label{app:dynamical-systems-background}

This appendix recalls a few elementary notions from discrete dynamical systems
that are used throughout the paper.  The purpose is not to give a complete
overview, but to clarify the terminology behind the phase diagrams, invariant
sets, transverse stability calculations, and large-step-size instabilities that
appear in the finite-step training dynamics. We refer to standard textbooks including~\cite{devaney2003chaotic} and~\cite{wiggins2003nonlinear} for a detailed background.

\begin{definition}[Discrete dynamical system and orbit]
Let \(X\) be a state space and let \(F:X\to X\) be a map.  The discrete
dynamical system generated by \(F\) is the iteration
\[
    x_{k+1}=F(x_k),
    \qquad k=0,1,2,\ldots .
\]
Given an initial condition \(x_0\in X\), its forward orbit is
\[
    \mathcal O^+(x_0)
    =
    \{x_0,F(x_0),F^2(x_0),\ldots\}.
\]
\end{definition}

In this paper, \(F\) is the gradient descent update map.  Thus an orbit is a
training trajectory, and the initial condition is the initialization of the
active transformer parameters.  The learning rate enters \(F\) as a parameter.
Changing the learning rate changes the map itself, so a learning-rate sweep is
a parameterized family of dynamical systems rather than a single system with
different starting points.

\begin{definition}[Invariant and forward-invariant sets]
A set \(S\subseteq X\) is invariant under \(F\) if
\[
    F(S)=S.
\]
It is forward invariant if
\[
    F(S)\subseteq S.
\]
\end{definition}

Invariant sets describe regions of parameter space that are exactly preserved
by training.  Forward-invariant sets describe regions that training cannot
leave once it enters.  In the main text, the balanced line, the zero-error
hyperbola, and the Chebyshev ellipse are invariant or forward-invariant objects
for the reduced two-factor map.  These sets are useful because they restrict
the possible long-time behavior of gradient descent.

\begin{definition}[Fixed points and periodic orbits]
A point \(x_\star\in X\) is a fixed point of \(F\) if
\[
    F(x_\star)=x_\star.
\]
A point \(x_\star\) has period \(p\ge1\) if
\[
    F^p(x_\star)=x_\star
\]
and \(p\) is the smallest positive integer with this property.  The set
\[
    \{x_\star,F(x_\star),\ldots,F^{p-1}(x_\star)\}
\]
is called a period-\(p\) orbit.
\end{definition}

For gradient descent, a fixed point is a parameter value at which one update
makes no change.  A period-two orbit means that the optimizer alternates between
two parameter values forever.  Such a trajectory need not have diverging loss,
but it also does not converge to a single trained model.  In the large-step
regime of the reduced transformer dynamics, the zero-error fixed set can lose
stability and be replaced by attracting cycles.

\begin{definition}[Local stability of a fixed point]
Assume \(F:\mathbb R^d\to\mathbb R^d\) is differentiable and \(x_\star\) is a
fixed point.  The Jacobian matrix \(DF(x_\star)\) is the linearization of the
dynamics near \(x_\star\).  Its eigenvalues are called the local multipliers of
the fixed point.  If all multipliers have absolute value strictly less than
one, then \(x_\star\) is locally attracting.  If at least one multiplier has
absolute value strictly larger than one, then \(x_\star\) is linearly unstable
in the corresponding direction.
\end{definition}

This is the discrete-time analogue of checking whether a stationary point is
stable under infinitesimal perturbations.  For gradient flow, stability is
controlled by signs of eigenvalues of a continuous-time linearization.  For
gradient descent, stability is controlled by whether the discrete multipliers
lie inside the unit disk.  This distinction is central to large-step training:
a point can be stable for gradient flow but unstable for finite-step gradient
descent when the learning rate is too large.

\begin{definition}[Attractors and basins]
A compact invariant set \(A\subseteq X\) is an attracting set if there exists an
open set \(U\supseteq A\) such that
\[
    \operatorname{dist}(F^k(x),A)\to0
    \qquad\text{for every }x\in U.
\]
The basin of attraction of \(A\) is the set of all initial conditions whose
orbits converge to \(A\).
\end{definition}

The attractor of training is the object that the optimizer approaches in the
long run.  In small-step regimes, the relevant attractor is usually a zero-loss
or low-loss set.  In large-step regimes, the attractor may instead be a cycle,
a chaotic invariant set, or no bounded set at all.  This is why large learning
rates can change the qualitative outcome of training rather than merely
accelerating convergence.

\begin{definition}[Transverse stability]
Let \(S\subseteq\mathbb R^d\) be an invariant curve, surface, or manifold.  The
dynamics on \(S\) describe motion tangent to \(S\).  Perturbations normal to
\(S\) describe deviations away from \(S\).  The set \(S\) is transversely
attracting along an orbit if normal perturbations contract along that orbit,
and transversely repelling if normal perturbations expand.
\end{definition}

Transverse stability is important whenever a lower-dimensional invariant set
controls the observed dynamics.  In the reduced transformer map, the balanced
line \(w=0\) is invariant.  The scalar cubic dynamics on this line can be
periodic or chaotic.  The transverse multiplier determines whether a nearby
off-balanced initialization is pulled toward this scalar dynamics or pushed
away from it.  Thus transverse stability explains when the one-dimensional
phase diagram remains relevant for the full two-dimensional training problem.

\begin{definition}[Lyapunov exponent]
Let \(F:\mathbb R\to\mathbb R\) be differentiable and let \(x_{k+1}=F(x_k)\).
The Lyapunov exponent of the orbit, when the limit exists, is
\[
    \lambda(x_0)
    =
    \lim_{n\to\infty}
    \frac1n
    \sum_{k=0}^{n-1}\log |F'(x_k)|.
\]
More generally, a transverse Lyapunov exponent is obtained by replacing
\(|F'(x_k)|\) with the magnitude of the normal multiplier along an invariant
set.
\end{definition}

A negative Lyapunov exponent means that nearby perturbations contract
exponentially on average.  A positive Lyapunov exponent means that nearby
perturbations separate exponentially on average.  In training terms, positive
exponents indicate sensitivity to initialization, minibatch noise, or numerical
perturbations.  The paper uses transverse Lyapunov exponents to distinguish
between chaotic sets that are actually attracting and chaotic sets that are
repelling separatrices.

\begin{definition}[Bifurcation and flip bifurcation]
A bifurcation occurs when the qualitative behavior of a dynamical system changes
as a parameter varies.  A flip bifurcation, also called a period-doubling
bifurcation, occurs when a real multiplier crosses \(-1\).  In the simplest
case, a fixed point loses stability and a period-two orbit is created.
\end{definition}

The effective step-size \(\mu\) is the main bifurcation parameter in the reduced
training dynamics.  At small \(\mu\), the zero-error solution is stable.  When
the normal multiplier crosses \(-1\), finite-step gradient descent begins to
overshoot in an oscillatory way.  This is the dynamical origin of the transition
from convergence to cyclic or chaotic large-step behavior.

\begin{definition}[Separatrix]
A separatrix is an invariant set that separates regions with different
dynamical behavior.  In two-dimensional systems, a separatrix is often a curve
that forms a boundary between basins or between forward-invariant regions.
\end{definition}

In the two-factor map studied here, the invariant Chebyshev ellipse plays the
role of a separatrix.  It carries chaotic dynamics, but it is transversely
repelling rather than attracting.  Its importance is therefore geometric: it
marks a boundary in parameter space and explains why large-step training becomes
sensitive to factor imbalance as the effective step-size approaches the
divergence threshold.

\begin{definition}[Li--Yorke chaos]
A continuous map \(F\) on an interval is said to have Li--Yorke chaos if there
exists an uncountable scrambled set \(S\) such that, for any distinct
\(x,y\in S\),
\[
    \liminf_{k\to\infty}|F^k(x)-F^k(y)|=0,
    \qquad
    \limsup_{k\to\infty}|F^k(x)-F^k(y)|>0.
\]
\end{definition}

This notion captures deterministic unpredictability without requiring external
noise.  Two nearby initial conditions repeatedly come close and separate again.
For transformer training, such behavior means that instability can arise purely
from the finite-step optimization map.  It is not necessary to invoke data
noise, minibatch randomness, or floating-point effects to obtain bounded
nonconvergent training trajectories.

The main paper uses these notions in a concrete setting: the finite-step
linear-transformer reduction becomes a two-dimensional product map whose
balanced slice is a scalar cubic and whose full geometry contains invariant
curves, transverse multipliers, and separatrices.  The same dynamical objects
appear in the discrete ICL calculations that motivate the model. 

\section{Proofs for Section~\ref{sec:transformer-to-two-parameter-map}}
\label{app:reduction}

\begin{proof}[Proof of Proposition~\ref{prop:transformer-to-two-parameter-map}]
Since
\[
    H=X_{\rm q}\otimes G
\]
and \(X_{\rm q}\) is symmetric, \(H\) is symmetric. Define
\[
    E_k=u_k^\top H u_k-y_{\rm q}.
\]
Then
\[
    \nabla_u\mathcal L(u_k)=E_k\,\nabla_u(u_k^\top H u_k)=2E_k H u_k,
\]
so gradient descent with step size \(\eta\) gives
\begin{equation}
\label{eq:app-u-update}
    u_{k+1}=u_k-2\eta E_k H u_k.
\end{equation}

\medskip
\emph{Column form of \(Hu\).} Since \(u=\operatorname{vec}(U)\) and \(H=X_{\rm q}\otimes G\), the vector \(Hu\) corresponds to the matrix
\[
    G U X_{\rm q}.
\]
For \(1\le i\le d\), the \(i\)-th column of \(X_{\rm q}\) is \(x_{{\rm q},i}e_{d+1}\); therefore the \(i\)-th column of \(GUX_{\rm q}\) is
\[
    x_{{\rm q},i}\,G u_{d+1}.
\]
The last column of \(X_{\rm q}\) is \(\sum_{i=1}^d x_{{\rm q},i}e_i\); therefore the last column of \(GUX_{\rm q}\) is
\[
    \sum_{i=1}^d x_{{\rm q},i}\,G u_i.
\]
Combining these with \eqref{eq:app-u-update} yields the column-wise recurrences
\begin{align}
\label{eq:app-u-i-update}
    u_i^{(k+1)} &= u_i^{(k)}-2\eta E_k\,x_{{\rm q},i}\,G u_{d+1}^{(k)},\qquad 1\le i\le d,\\
\label{eq:app-u-dp1-update}
    u_{d+1}^{(k+1)} &= u_{d+1}^{(k)}-2\eta E_k\sum_{i=1}^d x_{{\rm q},i}\,G u_i^{(k)}.
\end{align}

\medskip
\emph{Scalar recurrence for \(b_k\).} On the scalar sector, \(u_{d+1}^{(k)}=b_k e_{d+1}\). Taking the \(e_{d+1}\)-coordinate of \eqref{eq:app-u-dp1-update}:
\begin{align*}
    b_{k+1}
    &= b_k-2\eta E_k\sum_{i=1}^d x_{{\rm q},i}\bigl\langle e_{d+1},\,G u_i^{(k)}\bigr\rangle\\
    &= b_k-2\eta E_k\sum_{i=1}^d x_{{\rm q},i}\bigl\langle G e_{d+1},\,u_i^{(k)}\bigr\rangle\qquad\text{(by }G^\top=G\text{)}\\
    &= b_k-2\eta E_k\sum_{i=1}^d x_{{\rm q},i}\bigl\langle g_{d+1},\,u_i^{(k)}\bigr\rangle\\
    &= b_k-2\eta E_k a_k.
\end{align*}

\medskip
\emph{Scalar recurrence for \(a_k\).} On the scalar sector,
\[
    G u_{d+1}^{(k)}=b_k G e_{d+1}=b_k g_{d+1}.
\]
Substituting into \eqref{eq:app-u-i-update},
\[
    u_i^{(k+1)}=u_i^{(k)}-2\eta E_k\,x_{{\rm q},i}\,b_k\,g_{d+1}.
\]
Take the inner product with \(g_{d+1}\), multiply by \(x_{{\rm q},i}\), and sum over \(i\):
\begin{align*}
    a_{k+1}
    &= \sum_{i=1}^d x_{{\rm q},i}\bigl\langle u_i^{(k+1)},\,g_{d+1}\bigr\rangle\\
    &= \sum_{i=1}^d x_{{\rm q},i}\bigl\langle u_i^{(k)},\,g_{d+1}\bigr\rangle
      -2\eta E_k b_k\sum_{i=1}^d x_{{\rm q},i}^2\|g_{d+1}\|^2\\
    &= a_k-2\eta\kappa E_k b_k.
\end{align*}

\medskip
\emph{Prediction error.} Using \(Hu\leftrightarrow GUX_{\rm q}\),
\begin{align*}
    u_k^\top H u_k
    &= \sum_{i=1}^d\bigl\langle u_i^{(k)},\,x_{{\rm q},i}\,G u_{d+1}^{(k)}\bigr\rangle
      +\bigl\langle u_{d+1}^{(k)},\,\textstyle\sum_{i=1}^d x_{{\rm q},i}\,G u_i^{(k)}\bigr\rangle.
\end{align*}
The first sum equals
\[
    \sum_{i=1}^d x_{{\rm q},i}\bigl\langle u_i^{(k)},\,b_k g_{d+1}\bigr\rangle=b_k a_k.
\]
For the second, using \(u_{d+1}^{(k)}=b_k e_{d+1}\) and \(G^\top=G\),
\begin{align*}
    \bigl\langle b_k e_{d+1},\,\textstyle\sum_i x_{{\rm q},i}\,G u_i^{(k)}\bigr\rangle
    =b_k\sum_{i=1}^d x_{{\rm q},i}\bigl\langle g_{d+1},\,u_i^{(k)}\bigr\rangle
    =b_k a_k.
\end{align*}
Hence \(u_k^\top H u_k=2a_kb_k\), and \(E_k=2a_kb_k-y_{\rm q}\). This proves \eqref{eq:scalar-transformer-recurrence}.

\medskip
\emph{Normalization.} Choose \(\alpha,\beta>0\) with \(\kappa=\alpha^2/\beta^2\), and set
\[
    a_k=\alpha\widetilde a_k,\qquad b_k=\beta\widetilde b_k,\qquad x=\frac{y_{\rm q}}{2\alpha\beta}.
\]
Then
\[
    E_k=2a_kb_k-y_{\rm q}=2\alpha\beta(\widetilde a_k\widetilde b_k-x).
\]
Substituting into the \(a\)-equation:
\begin{align*}
    \alpha\widetilde a_{k+1}
    &= \alpha\widetilde a_k-2\eta\kappa\,[2\alpha\beta(\widetilde a_k\widetilde b_k-x)]\,\beta\widetilde b_k\\
    &= \alpha\widetilde a_k-4\eta\kappa\alpha\beta^2(\widetilde a_k\widetilde b_k-x)\widetilde b_k.
\end{align*}
Since \(\kappa=\alpha^2/\beta^2\), we have \(\kappa\alpha\beta^2=\alpha^3\), so, dividing by \(\alpha>0\),
\[
    \widetilde a_{k+1}=\widetilde a_k-4\eta\alpha^2(\widetilde a_k\widetilde b_k-x)\widetilde b_k.
\]
Similarly,
\begin{align*}
    \beta\widetilde b_{k+1}
    &= \beta\widetilde b_k-2\eta[2\alpha\beta(\widetilde a_k\widetilde b_k-x)]\,\alpha\widetilde a_k\\
    &= \beta\widetilde b_k-4\eta\alpha^2\beta(\widetilde a_k\widetilde b_k-x)\widetilde a_k,
\end{align*}
and dividing by \(\beta>0\) yields
\[
    \widetilde b_{k+1}=\widetilde b_k-4\eta\alpha^2(\widetilde a_k\widetilde b_k-x)\widetilde a_k.
\]
Setting \(\theta=4\eta\alpha^2\) produces \eqref{eq:two-parameter-product-map}.
\end{proof}

\section{Proofs for Section~\ref{sec:geometry-balanced}}
\label{app:geometry-balanced}

\begin{proof}[Proof of Proposition~\ref{prop:local-stability-zero-error}]
The first coordinate of \(\Phi_\mu\) is
\[
    \Phi_{\mu,1}(a,b)=a-(ab-\mu)b=a-ab^2+\mu b,
\]
so
\[
    \frac{\partial \Phi_{\mu,1}}{\partial a}=1-b^2,\qquad
    \frac{\partial \Phi_{\mu,1}}{\partial b}=-2ab+\mu.
\]
The second coordinate is
\[
    \Phi_{\mu,2}(a,b)=b-(ab-\mu)a=b-a^2b+\mu a,
\]
so
\[
    \frac{\partial \Phi_{\mu,2}}{\partial a}=-2ab+\mu,\qquad
    \frac{\partial \Phi_{\mu,2}}{\partial b}=1-a^2.
\]
This gives
\begin{equation}
\label{eq:app-jacobian-general}
    D\Phi_\mu(a,b)=\begin{pmatrix} 1-b^2 & \mu-2ab\\ \mu-2ab & 1-a^2\end{pmatrix}.
\end{equation}

At \((a,b)\in\M_\mu\), \(ab=\mu\), so \(\mu-2ab=-\mu\) and
\begin{equation}
\label{eq:app-jacobian-on-M}
    D\Phi_\mu(a,b)=\begin{pmatrix} 1-b^2 & -\mu\\ -\mu & 1-a^2\end{pmatrix}.
\end{equation}
Apply \eqref{eq:app-jacobian-on-M} to the vector \((a,-b)\):
\begin{align*}
    \begin{pmatrix} 1-b^2 & -\mu\\ -\mu & 1-a^2\end{pmatrix}
    \begin{pmatrix} a\\ -b\end{pmatrix}
    &=\begin{pmatrix}(1-b^2)a+\mu b\\ -\mu a-(1-a^2)b\end{pmatrix}.
\end{align*}
Using \(\mu=ab\), the first coordinate is
\[
    (1-b^2)a+\mu b=a-ab^2+ab\cdot b=a,
\]
and the second is
\[
    -\mu a-(1-a^2)b=-a\cdot ab-b+a^2b=-a^2b-b+a^2b=-b.
\]
Hence \((a,-b)\) has multiplier \(1\). Since \(\nabla(ab)=(b,a)\) and \((b,a)\cdot(a,-b)=ab-ab=0\), \((a,-b)\) is tangent to \(\M_\mu\), so this is the tangential multiplier.

Apply \eqref{eq:app-jacobian-on-M} to \((b,a)\):
\begin{align*}
    \begin{pmatrix} 1-b^2 & -\mu\\ -\mu & 1-a^2\end{pmatrix}
    \begin{pmatrix} b\\ a\end{pmatrix}
    &=\begin{pmatrix} b-b^3-\mu a\\ -\mu b+a-a^3\end{pmatrix}.
\end{align*}
Using \(\mu a=a^2 b\) (from \(ab=\mu\)), the first coordinate is
\[
    b-b^3-a^2b=(1-a^2-b^2)b.
\]
Similarly,
\[
    -\mu b+a-a^3=-ab^2+a-a^3=(1-a^2-b^2)a.
\]
Hence the normal direction \((b,a)\) has multiplier \(1-a^2-b^2\).

Normal attraction is \(|1-a^2-b^2|<1\). On \(\M_\mu\),
\[
    a^2+b^2=(a-b)^2+2ab=w^2+2\mu,
\]
so
\[
    |1-a^2-b^2|<1\iff|1-w^2-2\mu|<1\iff0<w^2+2\mu<2.
\]
Since \(w^2+2\mu>0\) is automatic, the condition reduces to \(w^2+2\mu<2\). In particular, \(\M_\mu\) has some normally attracting segment iff the minimum value \(w=0\) satisfies \(2\mu<2\), i.e.\ \(\mu<1\).
\end{proof}

\begin{proof}[Proof of Proposition~\ref{prop:error-imbalance-dynamics}]
Write the normalized update as
\[
    a^+=a-eb,\qquad b^+=b-ea.
\]
Then
\begin{align*}
    a^+b^+ &= (a-eb)(b-ea)\\
           &= ab-ea^2-eb^2+e^2ab\\
           &= ab-e(a^2+b^2)+e^2 ab.
\end{align*}
Hence
\begin{align*}
    e^+ &= a^+b^+-\mu\\
        &= (ab-\mu)-e(a^2+b^2)+e^2 ab\\
        &= e-e(a^2+b^2)+e^2(e+\mu).
\end{align*}
Using \(a^2+b^2=(a-b)^2+2ab=w^2+2(e+\mu)\),
\begin{align*}
    e^+ &= e-e\bigl(w^2+2e+2\mu\bigr)+e^2(e+\mu)\\
        &= e-ew^2-2e^2-2\mu e+e^3+\mu e^2\\
        &= e^3+(\mu-2)e^2+(1-2\mu-w^2)e.
\end{align*}
For \(w\),
\begin{align*}
    w^+ &= a^+-b^+=(a-eb)-(b-ea)\\
        &= (a-b)+e(a-b)=(1+e)w.
\end{align*}
If \(w=0\), then \(w^+=0\), so the balanced line \(\Lcal=\{w=0\}\) is forward invariant.
\end{proof}

\begin{lemma}[Endpoint factorizations]
\label{lem:endpoint-factorizations}
For every real \(e\) and \(\mu\),
\begin{align}
\label{eq:app-F-plus-mu}
    F_\mu(e)+\mu &=(e-1)^2(e+\mu),\\
\label{eq:app-2-minus-F}
    2-F_\mu(e) &=(2-e)(e^2+\mu e+1).
\end{align}
\end{lemma}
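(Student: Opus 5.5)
Both identities are polynomial identities in \(e\) with parameter \(\mu\), so I will verify them by direct expansion using the expanded form of \(F_\mu\) from \eqref{eq:F-def}, namely
\[
F_\mu(e)=e^3+(\mu-2)e^2+(1-2\mu)e.
\]

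For \eqref{eq:app-F-plus-mu}, I will expand the right-hand side:
\[
(e-1)^2(e+\mu)=(e^2-2e+1)(e+\mu)=e^3+(\mu-2)e^2+(1-2\mu)e+\mu.
\]
This is exactly \(F_\mu(e)+\mu\).

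For \eqref{eq:app-2-minus-F}, I will expand
\[
(2-e)(e^2+\mu e+1)=2e^2+2\mu e+2-e^3-\mu e^2-e=-e^3+(2-\mu)e^2+(2\mu-1)e+2.
\]
This equals \(2-F_\mu(e)\) coefficient by coefficient.

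As a sanity check, these factorizations say that \(-\mu\) and \(2\) are fixed points, consistent with Proposition~\ref{prop:invariant-interval}. They also say that \(e=1\) is a double root of \(F_\mu+\mu\), i.e.\ a critical point of \(F_\mu\) with critical value \(-\mu\). Nothing in the computation is a genuine obstacle. The only care needed is tracking the signs in the \(e^2\) and \(e\) coefficients.
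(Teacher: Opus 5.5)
Your proof is correct and is essentially the paper's argument: both identities are checked by expanding the right-hand sides and matching coefficients against \(F_\mu(e)=e^3+(\mu-2)e^2+(1-2\mu)e\). Your side remarks, that \(-\mu\) and \(2\) are fixed points and that \(e=1\) is a critical point of \(F_\mu\) with critical value \(-\mu\), are also correct, though the lemma does not need them.
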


\begin{proof}
For \eqref{eq:app-F-plus-mu}, expand the right-hand side:
\begin{align*}
    (e-1)^2(e+\mu)
    &=(e^2-2e+1)(e+\mu)\\
    &=e^3+\mu e^2-2e^2-2\mu e+e+\mu\\
    &=e^3+(\mu-2)e^2+(1-2\mu)e+\mu\\
    &=F_\mu(e)+\mu.
\end{align*}
For \eqref{eq:app-2-minus-F}, expand:
\begin{align*}
    (2-e)(e^2+\mu e+1)
    &=2e^2+2\mu e+2-e^3-\mu e^2-e\\
    &=-e^3+(2-\mu)e^2+(2\mu-1)e+2\\
    &=2-\bigl(e^3+(\mu-2)e^2+(1-2\mu)e\bigr)\\
    &=2-F_\mu(e).
\end{align*}
\end{proof}

\begin{proof}[Proof of Proposition~\ref{prop:invariant-interval}]
Let \(e\in[-\mu,2]\). By \eqref{eq:app-F-plus-mu},
\[
    F_\mu(e)+\mu=(e-1)^2(e+\mu).
\]
Both factors on the right are nonnegative (since \((e-1)^2\ge 0\) and \(e+\mu\ge 0\)), so
\[
    F_\mu(e)\ge -\mu.
\]
By \eqref{eq:app-2-minus-F},
\[
    2-F_\mu(e)=(2-e)(e^2+\mu e+1).
\]
Because \(e\le 2\), \(2-e\ge 0\). The discriminant of \(e^2+\mu e+1\) is \(\mu^2-4\); for \(0<\mu<2\) this is negative and the leading coefficient is positive, so \(e^2+\mu e+1>0\) for every real \(e\). At the endpoint \(\mu=2\),
\[
    e^2+2e+1=(e+1)^2\ge 0.
\]
Either way, \(2-F_\mu(e)\ge 0\), so \(F_\mu(e)\le 2\). Combining both bounds gives \(F_\mu(I_\mu)\subseteq I_\mu\). The endpoints are fixed: from \eqref{eq:app-F-plus-mu}, \(F_\mu(-\mu)+\mu=0\), and from \eqref{eq:app-2-minus-F}, \(2-F_\mu(2)=0\); both are direct.
\end{proof}

\begin{proof}[Proof of Theorem~\ref{thm:monotone-threshold}]
Write \(F_\mu(e)=eQ_\mu(e)\) with
\[
    Q_\mu(e)=e^2+(\mu-2)e+1-2\mu.
\]
For \(e=0\) the inequality \(|F_\mu(e)|\le|e|\) holds with equality. For \(e\ne 0\), it is equivalent to \(|Q_\mu(e)|\le 1\). We analyze the upper and lower bounds on \(Q_\mu\) over \([-\mu,2]\) separately.

\medskip
\emph{Upper bound \(Q_\mu\le 1\).} Compute
\begin{align*}
    Q_\mu(e)-1
    &=e^2+(\mu-2)e-2\mu\\
    &=(e-2)(e+\mu).
\end{align*}
For \(e\in[-\mu,2]\), \(e-2\le 0\) and \(e+\mu\ge 0\), so \(Q_\mu(e)-1\le 0\); thus \(Q_\mu(e)\le 1\) on \([-\mu,2]\).

\medskip
\emph{Lower bound \(Q_\mu\ge -1\).} Consider
\[
    Q_\mu(e)+1=e^2+(\mu-2)e+2-2\mu.
\]
This is a convex quadratic in \(e\) with vertex
\[
    e_\star=-\frac{\mu-2}{2}=1-\frac{\mu}{2}.
\]
For \(0<\mu\le 2\), \(e_\star\in[0,1)\subset[-\mu,2]\). Its minimum value on \([-\mu,2]\) is therefore
\begin{align*}
    Q_\mu(e_\star)+1
    &=(e_\star)^2+(\mu-2)e_\star+2-2\mu\\
    &=\left(1-\frac{\mu}{2}\right)^2+(\mu-2)\left(1-\frac{\mu}{2}\right)+2-2\mu\\
    &=1-\mu+\frac{\mu^2}{4}+\mu-2-\frac{\mu^2}{2}+\mu+2-2\mu\\
    &=1-\mu-\frac{\mu^2}{4}.
\end{align*}
Hence \(Q_\mu\ge -1\) on \([-\mu,2]\) iff
\[
    1-\mu-\frac{\mu^2}{4}\ge 0,
\]
which, multiplying by \(4\), is equivalent to
\[
    \mu^2+4\mu-4\le 0.
\]
The positive root of \(\mu^2+4\mu-4=0\) is \(-2+2\sqrt2=2\sqrt2-2\). Therefore the condition holds iff
\[
    0<\mu\le 2\sqrt 2-2.
\]
Combined with the upper bound, this proves part (a).

\medskip
\emph{Failure for larger \(\mu\).} If \(2\sqrt2-2<\mu\) and \(\mu\ne2\), then \(1-\mu-\mu^2/4<0\), so \(Q_\mu(e_\star)<-1\). Moreover \(e_\star=1-\mu/2\in[-\mu,2]\) and \(e_\star\ne0\). Hence
\[
    |F_\mu(e_\star)|=|e_\star|\,|Q_\mu(e_\star)|>|e_\star|.
\]
If \(\mu=2\), then \(Q_2(e)=e^2-3\). Choose any \(e\in(0,1)\). Then \(e\in[-2,2]\) and \(|Q_2(e)|=3-e^2>1\), so \(|F_2(e)|>|e|\). This establishes the failure of uniform one-step contraction for every \(\mu>2\sqrt2-2\).
\end{proof}

\begin{proof}[Proof of Proposition~\ref{prop:period-two}]
Consider the quadratic
\[
    p_\mu(e)=e^2+(\mu-1)e+1-\mu.
\]
Its discriminant is
\[
    (\mu-1)^2-4(1-\mu)=\mu^2-2\mu+1-4+4\mu=\mu^2+2\mu-3.
\]
This is nonnegative iff \(\mu\ge 1\) or \(\mu\le -3\); since \(\mu>0\), the real roots exist exactly when \(\mu\ge 1\), and they are
\[
    e_\pm=\frac{1-\mu\pm\sqrt{\mu^2+2\mu-3}}{2}.
\]
By Vi\`ete's formulas,
\begin{equation}
\label{eq:app-p2-sum-prod}
    e_++e_-=1-\mu,\qquad e_+e_-=1-\mu.
\end{equation}

\medskip
\emph{Step 1: Each root maps to the other.} We show that for any root \(e\) of \(p_\mu\),
\begin{equation}
\label{eq:app-F-as-line}
    F_\mu(e)=1-\mu-e.
\end{equation}
Compute
\begin{align*}
    F_\mu(e)-(1-\mu-e)
    &=e^3+(\mu-2)e^2+(1-2\mu)e-1+\mu+e\\
    &=e^3+(\mu-2)e^2+(2-2\mu)e+(\mu-1).
\end{align*}
We claim this equals \((e-1)p_\mu(e)\). Expanding,
\begin{align*}
    (e-1)p_\mu(e)
    &=(e-1)\bigl(e^2+(\mu-1)e+1-\mu\bigr)\\
    &=e^3+(\mu-1)e^2+(1-\mu)e-e^2-(\mu-1)e-(1-\mu)\\
    &=e^3+(\mu-2)e^2+(2-2\mu)e+(\mu-1),
\end{align*}
confirming the identity. Since \(p_\mu(e)=0\) for \(e=e_\pm\), \eqref{eq:app-F-as-line} holds. Using \(e_++e_-=1-\mu\),
\[
    F_\mu(e_+)=1-\mu-e_+=e_-,\qquad F_\mu(e_-)=1-\mu-e_-=e_+.
\]

\medskip
\emph{Step 2: Longitudinal two-step multiplier.} Differentiate:
\[
    F_\mu'(e)=3e^2+2(\mu-2)e+(1-2\mu).
\]
For \(e\) with \(p_\mu(e)=0\), \(e^2=-(\mu-1)e-(1-\mu)=(1-\mu)e+(\mu-1)\). Substituting,
\begin{align*}
    F_\mu'(e)
    &=3\bigl[(1-\mu)e+(\mu-1)\bigr]+2(\mu-2)e+(1-2\mu)\\
    &=\bigl[3(1-\mu)+2(\mu-2)\bigr]e+\bigl[3(\mu-1)+(1-2\mu)\bigr]\\
    &=-(\mu+1)e+(\mu-2).
\end{align*}
Therefore
\begin{align*}
    F_\mu'(e_+)F_\mu'(e_-)
    &=\bigl[-(\mu+1)e_++(\mu-2)\bigr]\bigl[-(\mu+1)e_-+(\mu-2)\bigr]\\
    &=(\mu+1)^2 e_+e_--(\mu+1)(\mu-2)(e_++e_-)+(\mu-2)^2.
\end{align*}
Substituting \eqref{eq:app-p2-sum-prod},
\begin{align*}
    F_\mu'(e_+)F_\mu'(e_-)
    &=(\mu+1)^2(1-\mu)-(\mu+1)(\mu-2)(1-\mu)+(\mu-2)^2\\
    &=(1-\mu)(\mu+1)\bigl[(\mu+1)-(\mu-2)\bigr]+(\mu-2)^2\\
    &=3(1-\mu)(\mu+1)+(\mu-2)^2\\
    &=3(1-\mu^2)+\mu^2-4\mu+4\\
    &=7-4\mu-2\mu^2.
\end{align*}
By the chain rule, \((F_\mu^2)'(e_\pm)=F_\mu'(e_+)F_\mu'(e_-)\), which proves the stated longitudinal multiplier.

\medskip
\emph{Step 3: Transverse two-step multiplier.} On the balanced line, the Jacobian of the full map \eqref{eq:ew-map} at \((e,0)\) is
\[
    \begin{pmatrix} F_\mu'(e) & 0\\ 0 & 1+e\end{pmatrix},
\]
since the off-diagonal entry \(\partial_w[(1+e)w]\big|_{w=0}=1+e\) and \(\partial_e[(1+e)w]\big|_{w=0}=0\). The transverse two-step multiplier along the orbit \((e_+,0)\leftrightarrow(e_-,0)\) is therefore
\[
    (1+e_+)(1+e_-)=1+(e_++e_-)+e_+e_-=1+(1-\mu)+(1-\mu)=3-2\mu.
\]

\medskip
\emph{Step 4: Stability window.} Local attraction in both the balanced line and the full system requires both
\[
    |F_\mu'(e_+)F_\mu'(e_-)|<1,\qquad |(1+e_+)(1+e_-)|<1,
\]
i.e.\
\[
    |7-4\mu-2\mu^2|<1,\qquad |3-2\mu|<1.
\]
The second inequality is equivalent to \(1<\mu<2\). For the first, \(7-4\mu-2\mu^2<1\) gives \(2\mu^2+4\mu-6>0\), i.e.\ \(\mu^2+2\mu-3>0\), which for \(\mu>0\) gives \(\mu>1\). And \(-1<7-4\mu-2\mu^2\) gives \(2\mu^2+4\mu-8<0\), i.e.\ \(\mu^2+2\mu-4<0\), whose positive root is \(-1+\sqrt 5\), giving \(\mu<\sqrt5-1\). The intersection of all constraints is \(1<\mu<\sqrt5-1\), which is inside \((1,2)\) since \(\sqrt5-1<2\). Hence local attraction holds on this window.
\end{proof}

\section{Proofs for Section~\ref{sec:genuine-2d}: normal form and ellipse}
\label{app:2d}

\begin{proof}[Proof of Theorem~\ref{thm:normal-form}]
Recall
\[
    D=w^2-(2-\mu)(2-e)=v-(2-\mu)(2-e),\qquad v=w^2.
\]
Solving for \(v\),
\begin{equation}
\label{eq:app-v-in-D}
    v=(2-\mu)(2-e)+D.
\end{equation}
From Proposition~\ref{prop:error-imbalance-dynamics},
\[
    e^+=e^3+(\mu-2)e^2+(1-2\mu)e-ev.
\]
Substitute \eqref{eq:app-v-in-D} into the last term:
\begin{align*}
    e^+ &=e^3+(\mu-2)e^2+(1-2\mu)e-e\bigl[(2-\mu)(2-e)+D\bigr]\\
        &=e^3+(\mu-2)e^2+(1-2\mu)e-2(2-\mu)e+(2-\mu)e^2-eD.
\end{align*}
The \(e^2\)-coefficient is \((\mu-2)+(2-\mu)=0\). The \(e\)-coefficient is \((1-2\mu)-2(2-\mu)=-3\). Hence
\[
    e^+=e^3-3e-eD=C(e)-eD.
\]

For \(D^+\), note that \(w^+=(1+e)w\) gives
\begin{equation}
\label{eq:app-vplus}
    v^+=(w^+)^2=(1+e)^2 v.
\end{equation}
We need the identity
\begin{equation}
\label{eq:app-2-minus-C}
    2-C(e)=(2-e)(1+e)^2.
\end{equation}
Expanding the right-hand side,
\begin{align*}
    (2-e)(1+e)^2
    &=(2-e)(1+2e+e^2)\\
    &=2+4e+2e^2-e-2e^2-e^3\\
    &=2+3e-e^3=2-(e^3-3e)=2-C(e),
\end{align*}
which proves \eqref{eq:app-2-minus-C}. Now compute \(D^+\):
\begin{align*}
    D^+ &= v^+-(2-\mu)(2-e^+)\\
        &= (1+e)^2 v-(2-\mu)\bigl[2-C(e)+eD\bigr]\\
        &= (1+e)^2 v-(2-\mu)(2-e)(1+e)^2-(2-\mu)eD,
\end{align*}
where we used \(2-e^+=2-C(e)+eD\) (from \(e^+=C(e)-eD\)) and \eqref{eq:app-2-minus-C}. Substituting \eqref{eq:app-v-in-D} for \(v\),
\begin{align*}
    D^+ &= (1+e)^2\bigl[(2-\mu)(2-e)+D\bigr]-(2-\mu)(2-e)(1+e)^2-(2-\mu)eD\\
        &= (1+e)^2 D-(2-\mu)eD\\
        &= \bigl[(1+e)^2-(2-\mu)e\bigr]D.
\end{align*}
Finally,
\begin{align*}
    (1+e)^2-(2-\mu)e
    &=1+2e+e^2-2e+\mu e=e^2+\mu e+1=q_\mu(e),
\end{align*}
so \(D^+=q_\mu(e)D\).
\end{proof}

\begin{proof}[Proof of Corollary~\ref{cor:sign-D}]
The quadratic \(q_\mu(e)=e^2+\mu e+1\) has discriminant \(\Delta=\mu^2-4\). For \(0<\mu<2\), \(\Delta<0\); since the leading coefficient is positive, \(q_\mu(e)>0\) for all \(e\in\R\). By Theorem~\ref{thm:normal-form},
\[
    D^+=q_\mu(e)D.
\]
Multiplication by a strictly positive number preserves the sign of \(D\); hence each of the three sets \(\{D<0\}\), \(\{D=0\}\), \(\{D>0\}\) is forward invariant under \(\Phi_\mu\).
\end{proof}

\begin{proof}[Proof of Theorem~\ref{thm:invariant-ellipse}]
Let \(s=a+b\), \(w=a-b\), \(u=s^2\), \(v=w^2\). Then
\[
    ab=\frac{s^2-w^2}{4}=\frac{u-v}{4},\qquad e=ab-\mu=\frac{u-v}{4}-\mu.
\]
The equation \(D=0\) reads
\[
    v=(2-\mu)(2-e).
\]
Substituting the expression for \(e\),
\begin{align*}
    v &=(2-\mu)\left(2-\frac{u-v}{4}+\mu\right)\\
      &=(2-\mu)\left(2+\mu-\frac{u}{4}+\frac{v}{4}\right).
\end{align*}
Collect the \(v\)-terms on the left:
\[
    v-\frac{2-\mu}{4}v=(2-\mu)(2+\mu)-\frac{2-\mu}{4}u,
\]
i.e.
\[
    \frac{2+\mu}{4}v=(4-\mu^2)-\frac{2-\mu}{4}u.
\]
Multiplying by \(4\),
\[
    (2-\mu)u+(2+\mu)v=4(4-\mu^2).
\]
Since \(4-\mu^2=(2-\mu)(2+\mu)>0\) for \(0<\mu<2\), divide:
\[
    \frac{u}{2+\mu}+\frac{v}{2-\mu}=4.
\]
Replacing \(u=(a+b)^2\) and \(v=(a-b)^2\) yields \eqref{eq:ellipse}. Invariance of \(\{D=0\}\) follows immediately from \(D^+=q_\mu(e)D\) (Theorem~\ref{thm:normal-form}).

On \(D=0\), \(e^+=C(e)=e^3-3e\) by Theorem~\ref{thm:normal-form}. To identify the physical error range, note that \(v=(2-\mu)(2-e)\) and \(v\ge 0\), \(2-\mu>0\), so
\[
    e\le 2.
\]
Also, using \(u-v=4(e+\mu)\),
\[
    u=v+4(e+\mu)=(2-\mu)(2-e)+4(e+\mu).
\]
Expand:
\begin{align*}
    (2-\mu)(2-e)+4(e+\mu)
    &=4-2e-2\mu+\mu e+4e+4\mu\\
    &=(2+\mu)e+4+2\mu\\
    &=(2+\mu)(e+2).
\end{align*}
Hence
\begin{equation}
\label{eq:app-u-on-ellipse}
    u=(2+\mu)(e+2).
\end{equation}
Since \(u\ge 0\) and \(2+\mu>0\), \(e\ge -2\). Conversely, for any \(e\in[-2,2]\), the formulas
\[
    u=(2+\mu)(e+2)\ge 0,\qquad v=(2-\mu)(2-e)\ge 0
\]
produce admissible \((u,v)\), hence points on \(\E_\mu\). Therefore the physical range of \(e\) on \(\E_\mu\) is exactly \([-2,2]\).

For \(e=2\cos\theta\),
\begin{align*}
    C(2\cos\theta)
    &=8\cos^3\theta-6\cos\theta\\
    &=2(4\cos^3\theta-3\cos\theta)\\
    &=2\cos(3\theta),
\end{align*}
by the triple-angle identity. This completes the proof.
\end{proof}

\section{Proofs of transverse Lyapunov exponent statements}
\label{app:transverse}

\begin{proof}[Proof of Theorem~\ref{thm:ellipse-repelling}]
Fix \(0<\mu<2\) and let \(\nu\) be a \(C\)-invariant probability measure on \([-2,2]\) for which \(\log|1+e|\) and \(\log|1-e|\) are \(\nu\)-integrable. Since \(q_\mu(e)=e^2+\mu e+1\) is continuous and strictly positive on \([-2,2]\), \(\log q_\mu\) is bounded and integrable. By \eqref{eq:normal-form}, the transverse multiplier in the \(D\)-direction is \(q_\mu(e)\), so
\[
    \Lambda_\E(\nu)=\int\log q_\mu(e)\,\dd\nu(e).
\]

\medskip
\emph{Step 1: \(q_\mu\) as a convex combination.} Set
\[
    A=\frac{2+\mu}{4},\qquad B=\frac{2-\mu}{4}.
\]
Since \(0<\mu<2\), \(A>0\), \(B>0\), and \(A+B=1\). Compute
\begin{align*}
    A(1+e)^2+B(1-e)^2
    &=A(1+2e+e^2)+B(1-2e+e^2)\\
    &=(A+B)(1+e^2)+2(A-B)e.
\end{align*}
Since \(A+B=1\) and
\[
    A-B=\frac{(2+\mu)-(2-\mu)}{4}=\frac{\mu}{2},
\]
we obtain
\begin{equation}
\label{eq:app-qmu-convex}
    A(1+e)^2+B(1-e)^2=1+e^2+\mu e=q_\mu(e).
\end{equation}

\medskip
\emph{Step 2: Weighted AM--GM.} For \(x,y\ge 0\) and weights \(A,B>0\) with \(A+B=1\), the weighted AM--GM inequality states
\[
    Ax+By\ge x^A y^B,
\]
with equality iff \(x=y\) (when \(A,B>0\)). Applying this to \(x=(1+e)^2\), \(y=(1-e)^2\) and using \eqref{eq:app-qmu-convex},
\[
    q_\mu(e)
    \ge \bigl((1+e)^2\bigr)^A\bigl((1-e)^2\bigr)^B
    =|1+e|^{2A}|1-e|^{2B}.
\]
Taking logarithms,
\begin{equation}
\label{eq:app-log-qmu-bound}
    \log q_\mu(e)\ge 2A\log|1+e|+2B\log|1-e|
    =\left(1+\frac{\mu}{2}\right)\log|1+e|+\left(1-\frac{\mu}{2}\right)\log|1-e|.
\end{equation}

\medskip
\emph{Step 3: Two invariance identities for \(\nu\).} Using the factorizations
\[
    2-C(e)=(2-e)(1+e)^2,\qquad 2+C(e)=(2+e)(1-e)^2
\]
(the first was proved in \eqref{eq:app-2-minus-C}; the second follows by expanding \((2+e)(1-e)^2=(2+e)(1-2e+e^2)=2-4e+2e^2+e-2e^2+e^3=2-3e+e^3=2+C(e)\)). For a \(C\)-invariant \(\nu\),
\[
    \int\log\bigl(2-C(e)\bigr)\,\dd\nu(e)=\int\log(2-e)\,\dd\nu(e).
\]
But also
\[
    \int\log\bigl(2-C(e)\bigr)\,\dd\nu(e)=\int\log(2-e)\,\dd\nu(e)+2\int\log|1+e|\,\dd\nu(e).
\]
Subtracting,
\begin{equation}
\label{eq:app-log-1pe-zero}
    \int\log|1+e|\,\dd\nu(e)=0.
\end{equation}
Analogously, \(C\)-invariance applied to \(\log(2+C(e))=\log(2+e)+2\log|1-e|\) and \(\int\log(2+C(e))\,\dd\nu=\int\log(2+e)\,\dd\nu\) gives
\begin{equation}
\label{eq:app-log-1me-zero}
    \int\log|1-e|\,\dd\nu(e)=0.
\end{equation}

\medskip
\emph{Step 4: Conclusion.} Integrate \eqref{eq:app-log-qmu-bound} against \(\nu\) and use \eqref{eq:app-log-1pe-zero}, \eqref{eq:app-log-1me-zero}:
\[
    \Lambda_\E(\nu)=\int\log q_\mu\,\dd\nu\ge\left(1+\frac{\mu}{2}\right)\cdot 0+\left(1-\frac{\mu}{2}\right)\cdot 0=0.
\]
Equality after integration forces equality in the pointwise inequality \eqref{eq:app-log-qmu-bound} for \(\nu\)-almost every \(e\). Since \(A,B>0\), this equality holds iff \((1+e)^2=(1-e)^2\), which expands to \(4e=0\), i.e.\ \(e=0\). Hence \(\Lambda_\E(\nu)=0\) forces \(\nu(\{e=0\})=1\), and \(C\)-invariance combined with \(C(0)=0\) gives \(\nu=\delta_0\).

For the endpoint measures,
\[
    q_\mu(2)=4+2\mu+1=5+2\mu>1,\qquad q_\mu(-2)=4-2\mu+1=5-2\mu>1
\]
(the last since \(\mu<2\)), so \(\log q_\mu(\pm 2)>0\).
\end{proof}

\begin{lemma}[Scalar inequality for the balanced line]
\label{lem:balanced-ineq}
For \(1<\mu<2\) and \(e\in[-\mu,2]\),
\begin{equation}
\label{eq:app-balanced-ineq}
    |1+e|^\mu\le e^2+\mu e+1,
\end{equation}
with equality iff \(e=0\).
\end{lemma}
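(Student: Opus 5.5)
The plan is to substitute \(t=|1+e|\ge 0\) and compare \(t^\mu\) with a quadratic in \(t\) by weighted AM--GM. The key observation is that \(\mu\) is a convex combination of the exponents \(2\) and \(1\): \(\mu=2(\mu-1)+1\cdot(2-\mu)\), and for \(1<\mu<2\) both weights \(\mu-1\) and \(2-\mu\) are positive and sum to \(1\). Hence for every \(t\ge0\),
\[
    t^\mu=(t^2)^{\mu-1}\,t^{\,2-\mu}\le(\mu-1)t^2+(2-\mu)t ,
\]
with equality iff \(t^2=t\), i.e.\ \(t\in\{0,1\}\). It then remains to show that this upper bound lies below \(q_\mu(e)=e^2+\mu e+1\). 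I split by the sign of \(1+e\), since \(q_\mu\) is a different quadratic in \(t\) on each side.

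\emph{Case \(e\ge-1\), so \(t=1+e\).} Writing \(q_\mu\) in \(t\) gives \(q_\mu(e)=t^2+(\mu-2)t+(2-\mu)\). Subtracting the AM--GM bound gives
\[
    q_\mu(e)-\bigl[(\mu-1)t^2+(2-\mu)t\bigr]=(2-\mu)t^2-2(2-\mu)t+(2-\mu)=(2-\mu)(t-1)^2\ge0 .
\]
Combining, \(|1+e|^\mu\le q_\mu(e)\). For equality we need both \(t\in\{0,1\}\) and \(t=1\), i.e.\ \(t=1\), which is \(e=0\).

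\emph{Case \(e<-1\), so \(t=-(1+e)>0\).} Now \(q_\mu(e)=t^2+(2-\mu)t+(2-\mu)\). This strictly exceeds \((\mu-1)t^2+(2-\mu)t\), because the difference is \((2-\mu)t^2+(2-\mu)>0\). So the inequality is strict here.

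Together the two cases prove \eqref{eq:app-balanced-ineq}, with equality exactly at \(e=0\). The argument never uses \(e\in[-\mu,2]\), so it in fact holds for all real \(e\). The only real obstacle is finding the right interpolation: the convex-combination identity used in Theorem~\ref{thm:ellipse-repelling} produces the exponent \(1+\mu/2\) rather than \(\mu\). Interpolating between the powers \(t^2\) and \(t^1\) instead is what makes the residual a perfect square \((2-\mu)(t-1)^2\). Everything after that choice is a routine expansion.
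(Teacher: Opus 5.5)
Your proof is correct, but it takes a different route from the paper's. The paper sets $c=2-\mu$, $t=1+e$ and proves $h(t)=t^2-ct+c-t^{2-c}\ge 0$ on $[0,\infty)$ by calculus. It uses $h(1)=h'(1)=0$ and the fact that $h''(t)=2-(2-c)(1-c)t^{-c}$ changes sign exactly once, at some $t_0\in(0,1)$. Hence $h'$ first decreases then increases, so it is negative on $(0,1)$ and positive on $(1,\infty)$. The case $e<-1$ is then reduced to the case $e\ge -1$ via $t^2-ct+c<t^2+ct+c$.

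You instead interpolate $t^\mu$ between $t^2$ and $t$ by weighted AM--GM with weights $\mu-1$ and $2-\mu$. This reduces the problem to the polynomial identity $q_\mu(e)-\bigl[(\mu-1)t^2+(2-\mu)t\bigr]=(2-\mu)(t-1)^2$, plus a visibly positive remainder for $e<-1$. Your version has three advantages:
\begin{itemize}
\item It needs no derivative bookkeeping. The calculus proof must track the sign of $h'$ on both sides of $t_0$, and on $(0,t_0)$ that relies on $h'(0^+)=-c<0$, which the paper leaves implicit.
\item The equality case $e=0$ falls out immediately.
\item It gives a quantitative gap, $q_\mu(e)-|1+e|^\mu\ge(2-\mu)(|1+e|-1)^2$ for $e\ge-1$.
\end{itemize}

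You are also right that the hypothesis $e\in[-\mu,2]$ is never needed. The paper's argument does not really use it either; the bound $t\le 1-c$ it records in its second case plays no role. The calculus approach is more mechanical and would adapt to comparisons where no clean interpolation of exponents exists. For this lemma, though, your proof is the shorter and more transparent one.
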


\begin{proof}
Set \(c=2-\mu\in(0,1)\), so \(\mu=2-c\). We distinguish two cases.

\medskip
\emph{Case 1: \(e\ge-1\).} Let \(t=1+e\ge 0\). Then
\begin{align*}
    e^2+\mu e+1
    &=(t-1)^2+(2-c)(t-1)+1\\
    &=t^2-2t+1+(2-c)t-(2-c)+1\\
    &=t^2-ct+c,
\end{align*}
and \(|1+e|^\mu=t^{2-c}\). So \eqref{eq:app-balanced-ineq} becomes
\begin{equation}
\label{eq:app-h-ineq}
    h(t):=t^2-ct+c-t^{2-c}\ge 0,\qquad t\ge 0.
\end{equation}
We have \(h(0)=c>0\), \(h(1)=1-c+c-1=0\), and
\[
    h'(t)=2t-c-(2-c)t^{1-c},\qquad h''(t)=2-(2-c)(1-c)t^{-c}\quad(t>0).
\]
Note \(h'(1)=2-c-(2-c)=0\). The equation \(h''(t)=0\) gives
\[
    t^{-c}=\frac{2}{(2-c)(1-c)},\qquad\text{i.e.}\qquad t=t_0:=\left(\frac{(2-c)(1-c)}{2}\right)^{1/c}.
\]
Because \(0<c<1\),
\[
    0<(2-c)(1-c)<2\cdot 1=2,
\]
so \(0<(2-c)(1-c)/2<1\), hence \(0<t_0<1\). For \(0<t<t_0\), \(t^c<t_0^c\), hence \(t^{-c}>t_0^{-c}\), hence \(h''(t)<0\); for \(t>t_0\), \(h''(t)>0\). Thus \(h'\) decreases on \((0,t_0)\) and increases on \((t_0,\infty)\). Combined with \(h'(1)=0\) and \(t_0<1\), it follows that
\[
    h'(t)<0\text{ on }(0,1),\qquad h'(t)>0\text{ on }(1,\infty).
\]
(Indeed, if \(h'\) were nonnegative somewhere on \((0,1)\), then by monotonicity of \(h'\) on \((t_0,1)\) we would have \(h'(1)>0\), contradicting \(h'(1)=0\).) Therefore \(h\) is strictly decreasing on \((0,1)\) and strictly increasing on \((1,\infty)\); \(t=1\) is the global minimum on \([0,\infty)\) and
\[
    h(t)\ge h(1)=0,
\]
with equality iff \(t=1\), i.e.\ \(e=0\).

\medskip
\emph{Case 2: \(-\mu\le e<-1\).} Let \(t=-(1+e)>0\). Since \(e\ge-\mu=-(2-c)\), \(-1-t\ge-2+c\), i.e.\
\[
    t\le 1-c.
\]
Now
\begin{align*}
    e^2+\mu e+1
    &=(-1-t)^2+(2-c)(-1-t)+1\\
    &=1+2t+t^2-(2-c)-(2-c)t+1\\
    &=t^2+ct+c.
\end{align*}
By Case~1 applied to the same \(t>0\), \(t^{2-c}\le t^2-ct+c\). Since \(c>0\) and \(t>0\),
\[
    t^2-ct+c<t^2+ct+c,
\]
hence
\[
    t^{2-c}<t^2+ct+c=e^2+\mu e+1.
\]
Since \(|1+e|=t\), this is the desired strict inequality.
\end{proof}

\begin{proof}[Proof of Theorem~\ref{thm:balanced-transverse-attraction}]
Let \(\nu\) be as in the theorem. Since \(q_\mu\) is continuous and strictly positive on \([-\mu,2]\), \(\log q_\mu\) is bounded and integrable. By Lemma~\ref{lem:endpoint-factorizations},
\[
    2-F_\mu(e)=(2-e)q_\mu(e).
\]
\(F_\mu\)-invariance of \(\nu\) yields
\[
    \int\log\bigl(2-F_\mu(e)\bigr)\,\dd\nu(e)=\int\log(2-e)\,\dd\nu(e).
\]
The integrability assumption on \(\log(2-e)\) makes both sides finite after using the factorization below. On the other hand,
\[
    \int\log\bigl(2-F_\mu(e)\bigr)\,\dd\nu(e)=\int\log(2-e)\,\dd\nu(e)+\int\log q_\mu(e)\,\dd\nu(e).
\]
Subtracting,
\begin{equation}
\label{eq:app-qmu-zero-balanced}
    \int\log q_\mu(e)\,\dd\nu(e)=0.
\end{equation}

By Lemma~\ref{lem:balanced-ineq}, for \(e\in[-\mu,2]\),
\[
    \mu\log|1+e|\le\log q_\mu(e).
\]
Integrating with respect to \(\nu\) and applying \eqref{eq:app-qmu-zero-balanced},
\[
    \mu\int\log|1+e|\,\dd\nu\le\int\log q_\mu(e)\,\dd\nu=0.
\]
Since \(\mu>0\),
\[
    \Lambda_\Lcal(\nu)=\int\log|1+e|\,\dd\nu\le 0.
\]
Equality forces equality \(\nu\)-almost surely in Lemma~\ref{lem:balanced-ineq}, which happens only at \(e=0\); hence \(\nu(\{0\})=1\). Since \(F_\mu(0)=0\), \(\nu=\delta_0\).
\end{proof}

\section{Proofs for landing sets and the interior rigidity theorem}
\label{app:landing}

\begin{proof}[Proof of Proposition~\ref{prop:landing-sets}]
(a) From Proposition~\ref{prop:error-imbalance-dynamics}, \(w^+=(1+e)w\). Hence \(w^+=0\) iff \(w=0\) or \(1+e=0\).

(b) Using \(a^+=a-eb\), \(b^+=b-ea\),
\[
    s^+=a^++b^+=(a+b)-e(a+b)=(1-e)s.
\]
Therefore \(s^+=0\) iff \(s=0\) or \(1-e=0\).

(c) By Theorem~\ref{thm:normal-form},
\[
    e^+=C(e)-eD=e^3-3e-eD=e(e^2-3-D).
\]
Hence \(e^+=0\) iff \(e=0\) or \(D=e^2-3\).
\end{proof}

\begin{proof}[Proof of Theorem~\ref{thm:no-interior-recurrence}]
Let \(\nu\) be a \(\Phi_\mu\)-invariant probability with \(\nu(\mathcal I_\mu)=1\) and \(\log u\), \(\log v\), \(\log(4-R)\) \(\nu\)-integrable. We establish three invariance identities, use them together with the AM--GM inequality of the proof of Theorem~\ref{thm:ellipse-repelling}, and conclude that \(\nu\) concentrates on \(\{e=0\}\).

\medskip
\emph{Step 1: Updates for \(u\), \(v\), and \(R\).} From \(a^+=a-eb\), \(b^+=b-ea\), we obtain (as in the proof of Prop.~\ref{prop:landing-sets})
\[
    s^+=(1-e)s,\qquad w^+=(1+e)w,
\]
hence
\[
    u^+=(1-e)^2 u,\qquad v^+=(1+e)^2 v.
\]
From the proof of Theorem~\ref{thm:normal-form}, \(D^+=q_\mu(e)D\). Since
\[
    R=\frac{u}{2+\mu}+\frac{v}{2-\mu},
\]
and (using \(u-v=4(e+\mu)\))
\[
    R-4=\frac{v+4(e+\mu)}{2+\mu}+\frac{v}{2-\mu}-4
        =v\left(\frac{1}{2+\mu}+\frac{1}{2-\mu}\right)+\frac{4(e+\mu)}{2+\mu}-4,
\]
we simplify \(\tfrac{1}{2+\mu}+\tfrac{1}{2-\mu}=\tfrac{4}{4-\mu^2}\) and \(\tfrac{4(e+\mu)}{2+\mu}-4=\tfrac{4(e-2)}{2+\mu}=\tfrac{4(2-\mu)(e-2)}{4-\mu^2}\), giving
\[
    R-4=\frac{4}{4-\mu^2}\bigl[v+(2-\mu)(e-2)\bigr]=\frac{4}{4-\mu^2}D.
\]
Since \(4-\mu^2>0\), \(R^+-4=\tfrac{4}{4-\mu^2}D^+=\tfrac{4}{4-\mu^2}q_\mu(e)D=q_\mu(e)(R-4)\), so
\begin{equation}
\label{eq:app-4-minus-R}
    4-R^+=q_\mu(e)(4-R).
\end{equation}

\medskip
\emph{Step 2: Three zero integrals.} Because \(\nu\) is \(\Phi_\mu\)-invariant and \(\log u\) is integrable,
\[
    \int\log u^+\,\dd\nu=\int\log u\,\dd\nu.
\]
The identity \(u^+=(1-e)^2u\) and the support condition \(u>0\) imply that \(\log|1-e|=(\log u^+-\log u)/2\) is integrable; in particular \(\nu\{e=1\}=0\). Therefore we may write
\[
    \int\log u\,\dd\nu+2\int\log|1-e|\,\dd\nu=\int\log u\,\dd\nu,
\]
which gives
\begin{equation}
\label{eq:app-int-log-1me-zero}
    \int\log|1-e|\,\dd\nu=0.
\end{equation}
Similarly, integrability of \(\log v\) and \(v^+=(1+e)^2 v\) imply that \(\log|1+e|\) is integrable and yield
\begin{equation}
\label{eq:app-int-log-1pe-zero}
    \int\log|1+e|\,\dd\nu=0.
\end{equation}
From \eqref{eq:app-4-minus-R} and integrability of \(\log(4-R)\) (noting \(\nu(\mathcal I_\mu)=1\) ensures \(4-R>0\ \nu\)-a.s.), the difference \(\log(4-R^+)-\log(4-R)\) is integrable and
\begin{equation}
\label{eq:app-int-log-qmu-zero}
    \int\log q_\mu(e)\,\dd\nu=0.
\end{equation}

\medskip
\emph{Step 3: AM--GM bound.} From the proof of Theorem~\ref{thm:ellipse-repelling} (using \(A=(2+\mu)/4\), \(B=(2-\mu)/4\) with \(A+B=1\) and identity \eqref{eq:app-qmu-convex}), and the weighted AM--GM inequality \(Ax+By\ge x^A y^B\) (equality iff \(x=y\)),
\[
    q_\mu(e)=A(1+e)^2+B(1-e)^2\ge|1+e|^{2A}|1-e|^{2B},
\]
hence
\[
    \log q_\mu(e)\ge 2A\log|1+e|+2B\log|1-e|,
\]
with strict inequality unless \((1+e)^2=(1-e)^2\), i.e.\ \(e=0\).

\medskip
\emph{Step 4: Conclusion.} Integrate against \(\nu\):
\[
    \int\log q_\mu\,\dd\nu\ge 2A\int\log|1+e|\,\dd\nu+2B\int\log|1-e|\,\dd\nu.
\]
By \eqref{eq:app-int-log-1pe-zero}, \eqref{eq:app-int-log-1me-zero}, \eqref{eq:app-int-log-qmu-zero}, both sides equal \(0\). Therefore the pointwise inequality is in fact an equality \(\nu\)-almost surely, which forces \(e=0\) \(\nu\)-a.s. Hence \(\nu(\{e=0\})=1\).

\medskip
\emph{Step 5: No periodic orbits of period \(>1\).} Let \(\mathcal O\subset\mathcal I_\mu\) be a periodic orbit of \(\Phi_\mu\). The uniform probability measure on \(\mathcal O\) is \(\Phi_\mu\)-invariant and supported in \(\mathcal I_\mu\); by Step 4 it assigns full mass to \(\{e=0\}\), so every point of \(\mathcal O\) satisfies \(e=0\), i.e.\ \(ab=\mu\). If \(e=0\), the update is \(a^+=a\), \(b^+=b\), so every point of \(\mathcal O\) is fixed. Hence \(\mathcal I_\mu\) contains no periodic orbit of period \(>1\). More generally, any \(\Phi_\mu\)-invariant probability supported in \(\mathcal I_\mu\) and satisfying the integrability hypothesis is supported on the zero-error fixed curve, so it exhibits no nontrivial nonzero-error recurrence.
\end{proof}

\begin{proof}[Proof of Corollary~\ref{cor:interior-large-step}]
Write the error/imbalance update as
\[
    e^+=e\,M(e,w),\qquad M(e,w)=e^2+(\mu-2)e+(1-2\mu-w^2).
\]
At \(e=0\), \(M(0,w)=1-2\mu-w^2\). For \(1<\mu<2\) and \(w\in\R\),
\[
    1-2\mu-w^2\le 1-2\mu<-1.
\]
Hence the normal multiplier of \(\mathcal Z_\mu=\{e=0\}\) satisfies \(|M(0,w)|>1\), so \(\mathcal Z_\mu\) is normally repelling.

\medskip
\emph{Finite-time landing.} We show that any orbit with \(e_k\to 0\) must hit \(e=0\) in finite time. For \(1<\mu<2\), choose \(\delta>0\) so that
\begin{equation}
\label{eq:app-delta-choice}
    \delta^2+(2-\mu)\delta+1-2\mu<-1;
\end{equation}
this is possible because the left-hand side tends to \(1-2\mu<-1\) as \(\delta\downarrow 0\).

If \(|e|\le\delta\) and \(w\in\R\), then
\begin{align*}
    M(e,w)
    &=e^2+(\mu-2)e+(1-2\mu)-w^2\\
    &\le e^2+|\mu-2|\,|e|+(1-2\mu)\\
    &=e^2+(2-\mu)|e|+(1-2\mu)\\
    &\le \delta^2+(2-\mu)\delta+(1-2\mu).
\end{align*}
By \eqref{eq:app-delta-choice}, this is \(<-1\), so in particular \(|M(e,w)|>1\). Therefore for \(0<|e|\le\delta\),
\[
    |e^+|=|e|\,|M(e,w)|>|e|.
\]
Suppose an orbit satisfies \(\mathrm{dist}((a_k,b_k),\mathcal Z_\mu)\to 0\), equivalently \(e_k\to 0\). For all large enough \(k\), \(|e_k|\le\delta\). If \(e_k\ne 0\) for all such \(k\), the inequality \(|e_{k+1}|>|e_k|\) contradicts \(e_k\to 0\). Hence there exists a finite \(N\) with \(e_N=0\). Once \(e_N=0\), \(a^+=a\), \(b^+=b\), so the orbit remains on \(\mathcal Z_\mu\).

\medskip
\emph{The landing web has measure zero.} The finite-landing set is contained in
\[
    W:=\bigcup_{j=0}^\infty \Phi_\mu^{-j}(\mathcal Z_\mu).
\]
For each \(j\), \(\Phi_\mu^{-j}(\mathcal Z_\mu)\) is the zero set of the polynomial
\[
    P_j(a,b):=e\bigl(\Phi_\mu^j(a,b)\bigr),
\]
which is a polynomial in \((a,b)\). It is not identically zero, because \(\Phi_\mu^j(0,0)=(0,0)\) (the origin is fixed) and \(e(0,0)=0\cdot 0-\mu=-\mu\ne 0\), so \(P_j(0,0)\ne 0\). The zero set of a nonzero polynomial in two variables has two-dimensional Lebesgue measure zero. Thus \(W\) is a countable union of measure-zero sets, hence measure zero.
\end{proof}

\section{Proof for Section~\ref{sec:minibatch}}
\label{app:minibatch}

\begin{proof}[Proof of Theorem~\ref{thm:minibatch-moving-ellipse}]
Write \(e=AB-\nu\), \(s=A+B\), \(w=A-B\). The update \((A^+,B^+)=\Phi_\nu(A,B)\) gives
\[
    A^+=A-eB,\qquad B^+=B-eA.
\]
Exactly as in the proof of Prop.~\ref{prop:error-imbalance-dynamics},
\[
    s^+=(1-e)s,\qquad w^+=(1+e)w,
\]
which proves (i). In particular, \(v^+=(w^+)^2=(1+e)^2 v\).

Next, by the same computation as in Prop.~\ref{prop:error-imbalance-dynamics} (with \(\mu\) replaced by \(\nu\)),
\begin{align*}
    A^+B^+
    &=(A-eB)(B-eA)=AB-e(A^2+B^2)+e^2AB,\\
    A^2+B^2 &=(A-B)^2+2AB=v+2(e+\nu),
\end{align*}
giving
\[
    e^+=A^+B^+-\nu=e^3+(\nu-2)e^2+(1-2\nu-v)e.
\]
Using the identity
\[
    (2-e)q_\nu(e)=(2-e)(e^2+\nu e+1)=2-e^3-(\nu-2)e^2-e+2\nu e
\]
(which is Lemma~\ref{lem:endpoint-factorizations} with \(\mu\) replaced by \(\nu\)), we get
\[
    2-e^+=(2-e)q_\nu(e)+ev.
\]
Therefore
\begin{align*}
    D_\nu^+
    &=v^+-(2-\nu)(2-e^+)\\
    &=(1+e)^2 v-(2-\nu)\bigl[(2-e)q_\nu(e)+ev\bigr]\\
    &=\bigl[(1+e)^2-(2-\nu)e\bigr]v-(2-\nu)(2-e)q_\nu(e).
\end{align*}
Now
\[
    (1+e)^2-(2-\nu)e=1+2e+e^2-2e+\nu e=e^2+\nu e+1=q_\nu(e),
\]
so
\[
    D_\nu^+=q_\nu(e)v-(2-\nu)(2-e)q_\nu(e)=q_\nu(e)\bigl[v-(2-\nu)(2-e)\bigr]=q_\nu(e)D_\nu.
\]
This proves the first identity in (ii). For the second, by the same calculation as in the proof of Theorem~\ref{thm:no-interior-recurrence},
\[
    R_\nu-4=\frac{4}{4-\nu^2}D_\nu,
\]
and since \(4-\nu^2>0\) for \(\nu\in(-2,2)\),
\[
    4-R_\nu^+=q_\nu(e)(4-R_\nu).
\]

For (iii), the discriminant of \(q_\nu(e)=e^2+\nu e+1\) is \(\nu^2-4<0\) for \(\nu\in(-2,2)\), and the leading coefficient is positive, so \(q_\nu>0\). Multiplying \(D_\nu\) (equivalently \(4-R_\nu\)) by a positive number preserves its sign, so \(\mathcal I_\nu=\{R_\nu<4\}\) and its complement \(\{R_\nu>4\}\) are each forward-invariant under \(\Phi_\nu\), and their common boundary \(\E_\nu=\{R_\nu=4\}\) is forward-invariant.

For (iv), on \(\E_\nu\) we have \(D_\nu=0\), equivalently \(v=(2-\nu)(2-e)\). Substituting into the error update,
\begin{align*}
    e^+
    &=e^3+(\nu-2)e^2+(1-2\nu)e-e\,(2-\nu)(2-e)\\
    &=e^3+(\nu-2)e^2+(1-2\nu)e-2(2-\nu)e+(2-\nu)e^2.
\end{align*}
The \(e^2\)-coefficient is \((\nu-2)+(2-\nu)=0\). The \(e\)-coefficient is \((1-2\nu)-2(2-\nu)=-3\). Hence
\[
    e^+=e^3-3e.
\]
\end{proof}

\begin{proof}[Proof of Theorem~\ref{thm:full-separatrix-noise}]
Let
\[
    e=AB-\mu,
    \qquad
    \nu=\mu+\xi,
    \qquad
    e_\nu=AB-\nu=e-\xi.
\]
The batch update is
\[
    A^+=A-e_\nu B,
    \qquad
    B^+=B-e_\nu A.
\]
Hence
\[
    w^+=A^+-B^+=(1+e_\nu)w=(1+e-\xi)w.
\]
Moreover,
\begin{align*}
    A^+B^+
    &=(A-e_\nu B)(B-e_\nu A)\\
    &=AB-e_\nu(A^2+B^2)+e_\nu^2AB.
\end{align*}
Since \(AB=\mu+e\) and \(A^2+B^2=w^2+2(\mu+e)\), the full-batch error after the batch step is
\[
    e^+=A^+B^+-\mu
    =e-(e-\xi)\bigl(w^2+2\mu+2e\bigr)+(e-\xi)^2(\mu+e).
\]
Now compute
\[
    D_\mu^+=(w^+)^2-(2-\mu)(2-e^+).
\]
Substitute \(w^+=(1+e-\xi)w\), the expression for \(e^+\), and
\[
    w^2=D_\mu+(2-\mu)(2-e).
\]
Expanding gives
\begin{align*}
    D_\mu^+
    &= (1+e-\xi)^2\bigl[D_\mu+(2-\mu)(2-e)\bigr] \\
    &\quad -(2-\mu)\Bigl(2-e+(e-\xi)\bigl[D_\mu+(2-\mu)(2-e)+2\mu+2e\bigr]
    -(e-\xi)^2(\mu+e)\Bigr).
\end{align*}
Collect the terms that multiply \(D_\mu\):
\begin{align*}
    (1+e-\xi)^2-(2-\mu)(e-\xi)
    &=1+2(e-\xi)+(e-\xi)^2-(2-\mu)(e-\xi)\\
    &=1+\mu(e-\xi)+(e-\xi)^2.
\end{align*}
Expanding this coefficient around \(e\) gives
\[
    1+\mu(e-\xi)+(e-\xi)^2
    =q_\mu(e)-\xi(2e+\mu-\xi).
\]
It remains to compute the terms independent of \(D_\mu\).  Write \(c=2-\mu\) and \(r=e-\xi\).  The independent contribution is
\begin{align*}
    R_0
    &=c\Bigl[(1+r)^2(2-e)-(2-e)-r\bigl(c(2-e)+2\mu+2e\bigr)+r^2(\mu+e)\Bigr]\\
    &=c\Bigl[(2r+r^2)(2-e)-r c(2-e)-2\mu r-2er+r^2(\mu+e)\Bigr]\\
    &=c\Bigl[r\bigl((2-c)(2-e)-2\mu-2e\bigr)+r^2\bigl((2-e)+(\mu+e)\bigr)\Bigr].
\end{align*}
Since \(c=2-\mu\), we have \(2-c=\mu\). Therefore
\[
    (2-c)(2-e)-2\mu-2e
    =\mu(2-e)-2\mu-2e
    =-(\mu+2)e,
\]
and
\[
    (2-e)+(\mu+e)=2+\mu.
\]
Hence
\begin{align*}
    R_0
    &=c\Bigl[-(\mu+2)er+(\mu+2)r^2\Bigr]\\
    &=c(\mu+2)r(r-e)\\
    &=(2-\mu)(2+\mu)(e-\xi)(-\xi)\\
    &=-\xi(4-\mu^2)(e-\xi).
\end{align*}
Therefore
\[
    D_\mu^+
    =q_\mu(e)D_\mu
    -\xi\Big[(2e+\mu-\xi)D_\mu+(4-\mu^2)(e-\xi)\Big],
\]
which is \eqref{eq:full-separatrix-noise}.  When \(\xi=0\), the second term is zero and the deterministic identity is recovered.
\end{proof}

\begin{proof}[Proof of Corollary~\ref{cor:one-step-crossing}]
At \(\theta_\mu=(\sqrt\mu,\sqrt\mu)\), we have \(AB=\mu\), \(e=0\), and \(w=0\). A batch with \(\nu=\mu+\xi\) gives
\[
    A^+=\sqrt\mu-(\mu-\nu)\sqrt\mu=\sqrt\mu(1+\xi),
    \qquad
    B^+=\sqrt\mu(1+\xi).
\]
Thus \(w^+=0\), and the full-batch error after the batch step is
\[
    e^+=A^+B^+-\mu=\mu(1+\xi)^2-\mu=\mu(\xi^2+2\xi).
\]
On the balanced line \(w=0\),
\[
    D_\mu=-(2-\mu)(2-e).
\]
Because \(0<\mu<1\), the factor \(2-\mu\) is positive. Hence \(D_\mu>0\) is equivalent to \(e>2\). Therefore \(\Phi_\nu(\theta_\mu)\in\{D_\mu>0\}\) if and only if
\[
    \mu(\xi^2+2\xi)>2.
\]
\end{proof}

\begin{proof}[Proof of Theorem~\ref{thm:random-transverse-cocycle}]
For a single batch parameter \(\nu\), write the update as
\[
    A^+=A-(AB-\nu)B,
    \qquad
    B^+=B-(AB-\nu)A.
\]
Subtracting the two coordinates gives
\begin{align*}
    A^+-B^+
    &=A-B-(AB-\nu)(B-A)\\
    &=(1+AB-\nu)(A-B).
\end{align*}
Applying this identity at time \(k\) gives
\[
    w_{k+1}=(1+A_kB_k-\nu_k)w_k.
\]
If \(A_k=B_k=r_k\), then \(w_k=0\) and the update preserves equality of the coordinates:
\[
    r_{k+1}=r_k-(r_k^2-\nu_k)r_k=r_k(1+\nu_k-r_k^2).
\]
Linearizing the exact formula for \(w_{k+1}\) along this balanced trajectory gives
\[
    \delta w_{k+1}=(1+r_k^2-\nu_k)\delta w_k.
\]
Iterating the scalar linear recurrence yields
\[
    |\delta w_n|=|\delta w_0|\prod_{k=0}^{n-1}|1+r_k^2-\nu_k|.
\]
If the logarithmic average of the multipliers is positive, this product grows exponentially along the realization, which is the claimed transverse instability.
\end{proof}

\section{Note on the scalar phase  Theorem~\ref{thm:chen-balanced}}
\label{app:chen-note}

The only result not proved here is the full one-dimensional Li--Yorke and divergence classification in Theorem~\ref{thm:chen-balanced}, as the balanced map \(F_\mu\) is exactly the cubic family \(f_a(z)=z(z^2+(a-2)z+1-2a)\) with \(a=\mu\) and \(z=e\), so the phase diagram of \cite{chen2023stability} applies directly.

\section{Numerical experiments}
\label{app:experiments}

This section describes numerical experiments that illustrate and check
the main theoretical results. All experiments use only gradient descent on a
loss function derived from the linear self-attention model: either the
one-prompt LSA loss \eqref{eq:one-prompt-transformer-loss}, its rescaled
one-parameter form \(\ell_\mu(a,b)=\tfrac12(ab-\mu)^2\), or the multi-prompt
version \(\tfrac1{|B|}\sum_{i\in B}\mathcal L_i\) used in
Section~\ref{sec:minibatch}. Figures
\ref{fig:oneprompt-equivalence}--\ref{fig:minibatch-lsa} are produced by a
single standalone script, \texttt{generate\_figures.py}, which we attach
alongside the paper as supplementary material.

The experiments are organized to follow the logical structure of the paper.
Section~\ref{app:exp-equivalence} verifies the reduction of
Proposition~\ref{prop:transformer-to-two-parameter-map}. Sections
\ref{app:exp-bifurcation}--\ref{app:exp-phase-portrait} visualize the
deterministic one-prompt dynamics of Sections~\ref{sec:geometry-balanced}
and~\ref{sec:genuine-2d}. Sections
\ref{app:exp-separatrix}--\ref{app:exp-transverse-lyapunov} visualize the
mini-batch dynamics of Section~\ref{sec:minibatch}, and
Section~\ref{app:exp-multiprompt-lsa} runs the full multi-prompt linear
self-attention layer to show that the predicted effects appear in a model
that has not been reduced to the scalar sector.

\subsection{Numerical verification of the one-prompt reduction}
\label{app:exp-equivalence}

\paragraph{Goal.} Verify Proposition~\ref{prop:transformer-to-two-parameter-map}
numerically: gradient descent on the full one-prompt LSA loss
\eqref{eq:one-prompt-transformer-loss}, when restricted to the active scalar
sector, is algebraically identical to iterations of the reduced two-dimensional
map \(\Phi_\mu\).

\paragraph{Setup.} We sample one in-context linear regression prompt
\[
    E\in\R^{(d+1)\times(N+1)},\qquad d=3,\ N=20,
\]
with \(x_i,x_{\rm q}\stackrel{\text{iid}}{\sim}\mathcal N(0,I_d)\),
\(y_i=\langle w_\star,x_i\rangle\), and a random planted regressor
\(w_\star\sim\mathcal N(0,I_d)\). From this prompt we compute
\(G=\tfrac{1}{2N}EE^\top\), \(g_{d+1}=Ge_{d+1}\), and the geometry constant
\(\kappa=\|x_{\rm q}\|^2\|g_{d+1}\|^2\) used in
Proposition~\ref{prop:transformer-to-two-parameter-map}.

For each target \(\mu\in\{0.5,0.9,1.3\}\) we set
\(\eta=\mu/(2|y_{\rm q}|\sqrt\kappa)\), so that the rescaled step-size parameter
from Remark~\ref{rem:two-to-one-parameter} equals \(\mu\). The initial active
matrix \(U_0\) is built in the scalar sector with prescribed \((a_0,b_0)\)
satisfying \(A_0B_0=\mu\cdot 0.9775\), i.e.\ slightly off the zero-error
hyperbola.

We run two trajectories from the same \(U_0\) for the same number of steps:
\begin{enumerate}[leftmargin=*, itemsep=2pt]
    \item \emph{Reduced map}: iterate the scalar-sector recurrence
          \eqref{eq:scalar-transformer-recurrence} in \((a,b)\), then rescale
          to \((A,B)\) via Remark~\ref{rem:two-to-one-parameter}.
    \item \emph{Sector-constrained full LSA GD}: starting from \(U_0\), apply
          one full gradient step
          \(U_{k+1}=U_k-\eta\,\nabla_U\mathcal L(U_k)\) on the full
          \((d+1)^2\)-dimensional LSA loss, then project \(U_{k+1}\) onto the
          active scalar sector (zero out the components of each column that
          live outside \(\mathrm{span}(e_{d+1})\) or
          \(\mathrm{span}(g_{d+1})\)). This projection is needed because
          Proposition~\ref{prop:transformer-to-two-parameter-map}'s hypothesis
          that \(u_{d+1}^{(k)}\in\mathrm{span}(e_{d+1})\) is a statement about
          gradient flow; in discrete time, each full-LSA gradient step
          introduces a term of order \(O(\eta^2|E_k|)\) transverse to the
          sector. Projecting after each step is the discrete analogue of the
          invariance of the sector under gradient flow.
\end{enumerate}
At every step we extract
\(a_k=\sum_i x_{{\rm q},i}\langle u_i^{(k)},g_{d+1}\rangle\) and \(b_k=u_{d+1,d+1}^{(k)}\)
and convert to \((A_k,B_k)=(2\sqrt\eta\,a_k,\,2\sqrt{\eta\kappa}\,b_k)\).

\paragraph{Results.} Figure~\ref{fig:oneprompt-equivalence} shows the two
trajectories in three panels per regime. In the phase plane (left), the
reduced and sector-constrained full-LSA orbits coincide to plotting
precision at every \(\mu\). The normalized prediction error
\(|A_kB_k-\mu|\) (middle) tracks identical curves over three hundred steps.
The explicit discrepancy in the \((A,B)\) plane (right),
\(\|(A,B)_{\rm LSA}-(A,B)_{\rm red}\|_2\), remains at machine precision
(\(\sim 10^{-15}\)) throughout, in all three regimes.

\begin{figure}[h]
    \centering
    \includegraphics[width=\linewidth]{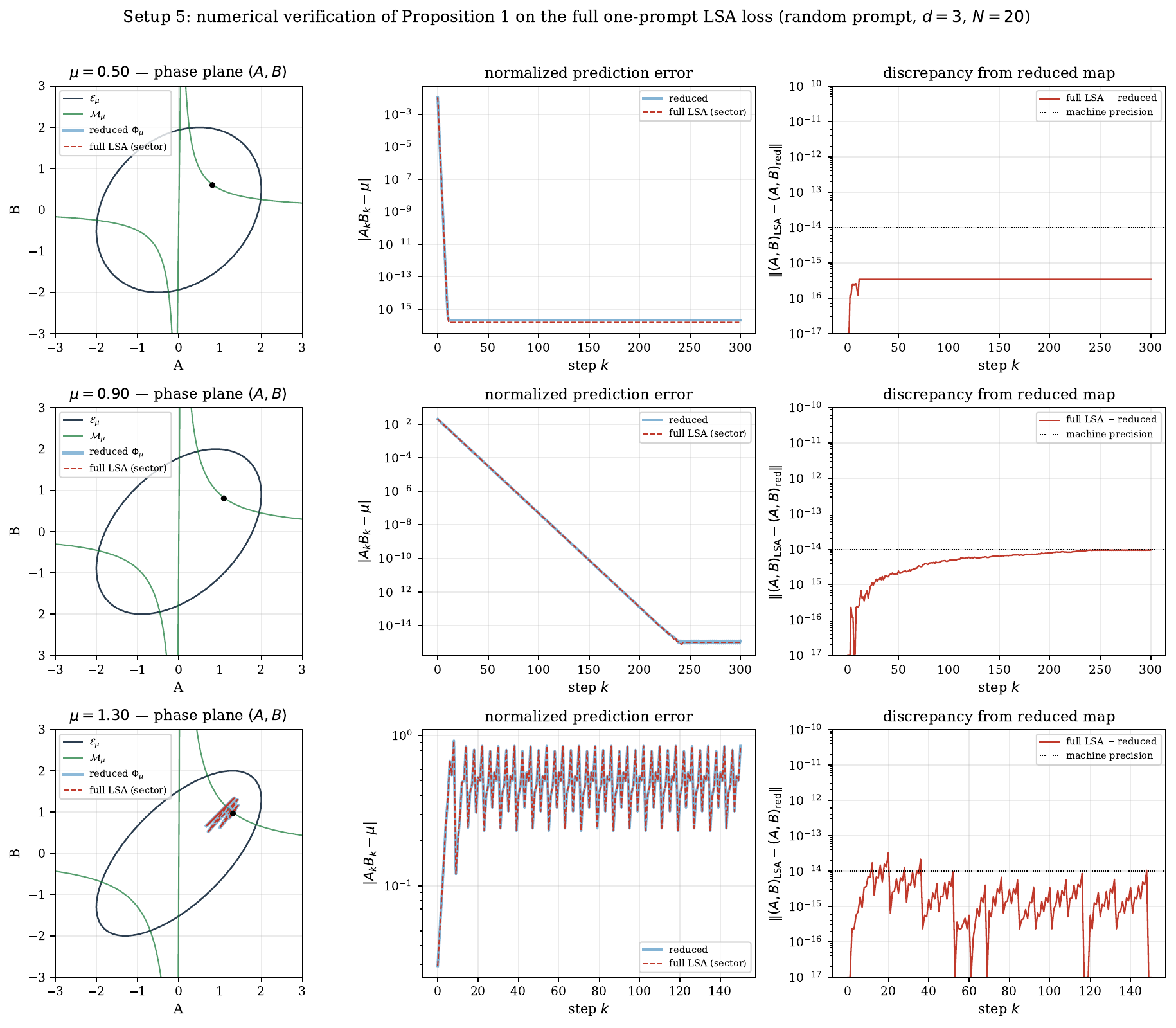}
    \caption{Numerical verification of
    Proposition~\ref{prop:transformer-to-two-parameter-map}. For each
    \(\mu\in\{0.5,0.9,1.3\}\) (rows), the reduced two-parameter map
    \(\Phi_\mu\) (blue) and sector-constrained full one-prompt LSA GD (red
    dashed) coincide in the phase plane (left), in the prediction error
    (middle, log scale), and in their explicit distance (right, log scale),
    which stays at machine precision.}
    \label{fig:oneprompt-equivalence}
\end{figure}

\paragraph{What this confirms.} The reduction is exact, not asymptotic: the
\((A,B)\) coordinates are faithful functionals of the full \((d+1)^2\)-dimensional
parameter matrix \(U\) when the dynamics preserve the sector, and the reduced
map \(\Phi_\mu\) captures gradient descent on the original LSA loss without
approximation. This means every structural statement about \(\Phi_\mu\) in
the paper is a statement about the finite-step LSA training trajectory on
the invariant submanifold described by Proposition
\ref{prop:transformer-to-two-parameter-map}.

\subsection{Balanced-line bifurcation diagram}
\label{app:exp-bifurcation}

\paragraph{Goal.} Visualize the balanced scalar phase diagram
(Theorem~\ref{thm:chen-balanced}) of \(F_\mu\)
\eqref{eq:F-def}.

\paragraph{Setup.} We sweep \(\mu\) on a uniform grid of \(1400\) values in
\([0,2.2]\). For each \(\mu\), the initial condition is balanced:
\(a_0=b_0=r_0\) with a small \(r_0=0.4\sqrt\mu+0.15\). Because the balanced
line is invariant under \(\Phi_\mu\) (Proposition~\ref{prop:error-imbalance-dynamics}),
the trajectory satisfies \(w_k\equiv 0\) and the error obeys the scalar cubic
\(F_\mu\). We iterate \(\Phi_\mu\) for a burn-in of \(K_{\text{burn}}=2500\)
steps, then record the next \(K_{\text{keep}}=600\) values of the error
\(e_k=a_kb_k-\mu\). If \(|e_k|\) ever exceeds a saturation threshold during
burn-in or recording, the \(\mu\) column is marked divergent.

\paragraph{Results.} Figure~\ref{fig:balanced-bifurcation} plots the
recorded error values. Five regimes are visible and the transitions line up
with the theoretical thresholds:
\begin{itemize}[leftmargin=*, itemsep=1pt]
    \item \textbf{Monotone convergence} for \(\mu\le 2\sqrt2-2\): all
          recorded error values collapse to \(e=0\) (Theorem~\ref{thm:monotone-threshold}).
    \item \textbf{Catapult convergence} for \(2\sqrt2-2<\mu\le 1\): the tail
          still collapses to \(e=0\); the one-step error bound fails inside
          the invariant interval but the orbit recovers.
    \item \textbf{Stable two-cycle} for \(1<\mu<\sqrt5-1\): the flip
          bifurcation at \(\mu=1\) opens a period-two orbit with longitudinal
          multiplier \(7-4\mu-2\mu^2\) (Proposition~\ref{prop:period-two}).
    \item \textbf{Period-doubling cascade and chaotic windows} for
          \(\sqrt5-1<\mu<2\): the numerical attractor thickens; periodic windows
          correspond to gaps in the recorded band.
    \item \textbf{Divergence} for \(\mu>2\): the invariant interval
          \(I_\mu=[-\mu,2]\) is lost and the orbit exits every compact set.
\end{itemize}
The plot overlays the four phase-boundary values \(2\sqrt2-2\), \(1\),
\(\sqrt5-1\), \(2\) as vertical dashed lines, and the bounding curves of the
invariant interval \(I_\mu\) (\(e=-\mu\) and \(e=2\)) as dotted lines.

\begin{figure}[h]
    \centering
    \includegraphics[width=\linewidth]{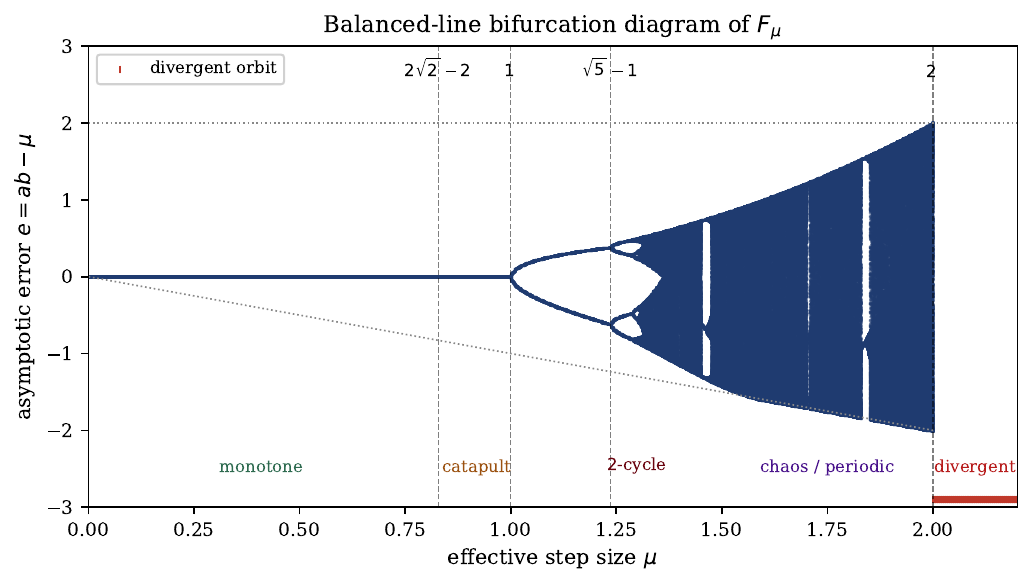}
    \caption{Balanced-line bifurcation diagram for \(F_\mu\). Each column is
    a single GD run on \(\ell_\mu\) from a balanced initial condition; points
    are the asymptotic error after burn-in. Vertical dashed lines are the
    four analytic thresholds
    \(2\sqrt2-2\approx 0.83\), \(1\), \(\sqrt5-1\approx 1.24\), \(2\).
    Divergent \(\mu\) values are marked as red ticks at the bottom.}
    \label{fig:balanced-bifurcation}
\end{figure}

\paragraph{What this confirms.} The balanced reduced transformer dynamics
realize exactly the scalar large-step bifurcation structure of
\cite{chen2023stability}, and all four quantitative thresholds of
Theorem~\ref{thm:chen-balanced} agree with the numerically observed
regime boundaries.

\subsection{Two-dimensional phase portrait and the Chebyshev separatrix}
\label{app:exp-phase-portrait}

\paragraph{Goal.} Show that the invariant Chebyshev ellipse
\(\E_\mu\) of Theorem~\ref{thm:invariant-ellipse} is a repelling separatrix
that cleanly divides basin behavior.

\paragraph{Setup.} Two values of \(\mu\) are chosen, one in the stable
regime and one in the two-cycle regime: \(\mu=0.7\) and \(\mu=1.3\). For each
panel we draw the zero-error hyperbola \(\M_\mu=\{ab=\mu\}\) and the ellipse
\(\E_\mu\) analytically, and we pick four initial conditions by hand:
\begin{itemize}[leftmargin=*, itemsep=1pt]
    \item Two points strictly inside \(\E_\mu\) (initial \(D<0\)), at
          separated locations in the plane.
    \item Two points strictly outside \(\E_\mu\) (initial \(D>0\)), one near
          the first quadrant and one near the second.
\end{itemize}
From each initial condition we iterate \(\Phi_\mu\) for \(K=200\) steps and
draw the trajectory with arrowheads indicating iteration direction.

\paragraph{Results.} Figure~\ref{fig:phase-portrait} shows the two portraits.
At \(\mu=0.7\), the two interior orbits spiral onto \(\M_\mu\) (zero training
loss) and the two exterior orbits escape to infinity along the branch
\(|a|+|b|\to\infty\). At \(\mu=1.3\), the hyperbola is no longer locally
attracting; the two interior orbits instead settle onto the period-two orbit
of Proposition~\ref{prop:period-two}, while the exterior orbits still
escape. The ellipse is the boundary between convergence (or bounded
nonconvergence) and divergence in both cases.

\begin{figure}[t]
    \centering
    \includegraphics[width=\linewidth]{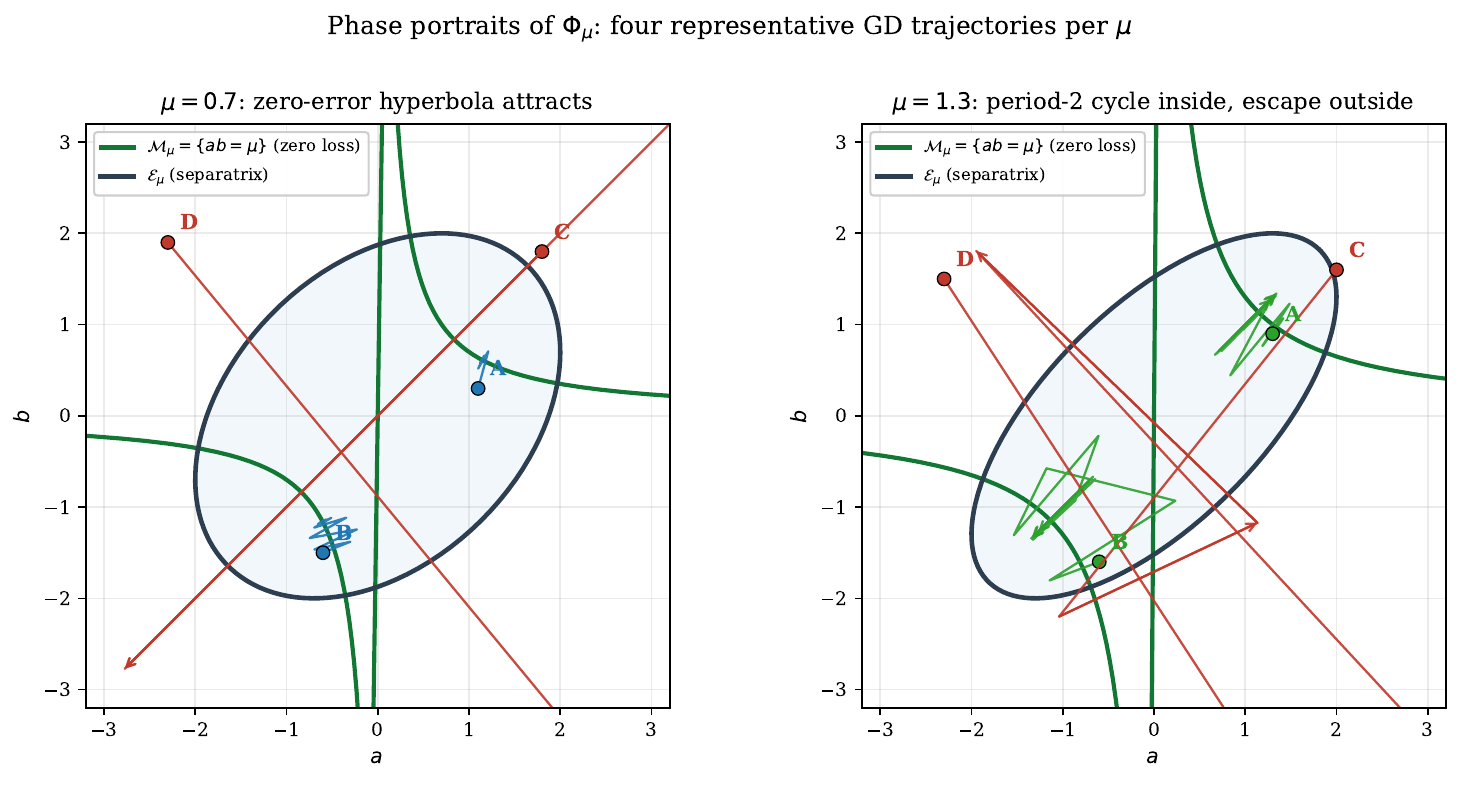}
    \caption{Phase portraits of \(\Phi_\mu\). Four initial conditions per
    panel (black-bordered dots labelled A--D), two inside and two outside
    the Chebyshev ellipse \(\E_\mu\). Arrowheads show iteration direction.
    Interior orbits converge to the zero-error hyperbola \(\M_\mu\) at
    \(\mu=0.7\) (left) and to a period-two cycle at \(\mu=1.3\) (right);
    exterior orbits diverge in both panels.}
    \label{fig:phase-portrait}
\end{figure}

\paragraph{What this confirms.} The ellipse \(\E_\mu\) is a global
separatrix of deterministic GD on \(\ell_\mu\): iterates on opposite sides
of \(\E_\mu\) never cross it (forward invariance of
Corollary~\ref{cor:sign-D}), and they have qualitatively different
fates. Combined with Theorem~\ref{thm:ellipse-repelling}, this shows that
\(\E_\mu\) is a repelling organizer of basin geometry rather than an
attracting set.

\subsection{One-step separatrix crossing by atypical mini-batches}
\label{app:exp-separatrix}

\paragraph{Goal.} Illustrate the one-step crossing inequality of
Corollary~\ref{cor:one-step-crossing} and the perturbation identity of
Theorem~\ref{thm:full-separatrix-noise}: a single mini-batch drawn from the
same population can send an iterate from strictly inside the full-batch
ellipse to strictly outside it, even when the iterate starts at the
full-batch zero-error solution.

\paragraph{Setup.} We choose the mini-batch model of
Section~\ref{sec:minibatch} with \(n=64\) shared-product losses. The scalar
correlations \(h_i=\gamma_i y_i\) are drawn heavy-tailed from a truncated
Cauchy distribution and then shifted so that
\(\mu=\eta\bar h=0.6\): the full-batch parameter is inside the locally
stable regime. One outlier is boosted so that at least one single-example
batch has \(|\eta h_i|>2\): the worst mini-batch is beyond the divergence
threshold. Left panel: three trajectories of length \(K=500\) from the
perturbed full-batch solution \((\sqrt\mu,\sqrt\mu)+\varepsilon\), with
\(\varepsilon=10^{-4}\), at batch sizes \(m\in\{n,8,1\}\) and a fixed seed.
Right panel: starting from the exact fixed point
\((\sqrt\mu,\sqrt\mu)\), we independently sample \(N=3000\) mini-batches for
each of \(m\in\{1,8\}\), compute one GD step \(\Phi_{\nu_B}(\sqrt\mu,\sqrt\mu)\),
and classify the landing point as inside (\(D_\mu<0\)) or outside
(\(D_\mu>0\)) the full-batch ellipse.

\paragraph{Results.} Figure~\ref{fig:separatrix-crossing} shows both panels.
\begin{itemize}[leftmargin=*, itemsep=1pt]
    \item Left: the full-batch trajectory is pinned at the fixed point. The
          \(m=8\) trajectory wanders in a tight cloud around \(\M_\mu\)
          and stays inside \(\E_\mu\). The \(m=1\) trajectory has many
          excursions outside \(\E_\mu\).
    \item Right: every dot is a one-step landing. For \(m=8\), almost all
          landings are blue (inside); a small fraction cross. For \(m=1\),
          the fraction of red landings (outside) is substantially larger,
          and the red landings fan out far beyond the ellipse boundary,
          including landings near the divergent escape branches.
\end{itemize}
The empirical exit fractions are printed in the legend. They agree, within
sample size, with the probability predicted by
Corollary~\ref{cor:one-step-crossing}: for a batch with
\(\nu=\mu+\xi\),
\[
    \Phi_\nu(\sqrt\mu,\sqrt\mu)\in\{D_\mu>0\}\iff\mu(\xi^2+2\xi)>2.
\]

\begin{figure}[t]
    \centering
    \includegraphics[width=\linewidth]{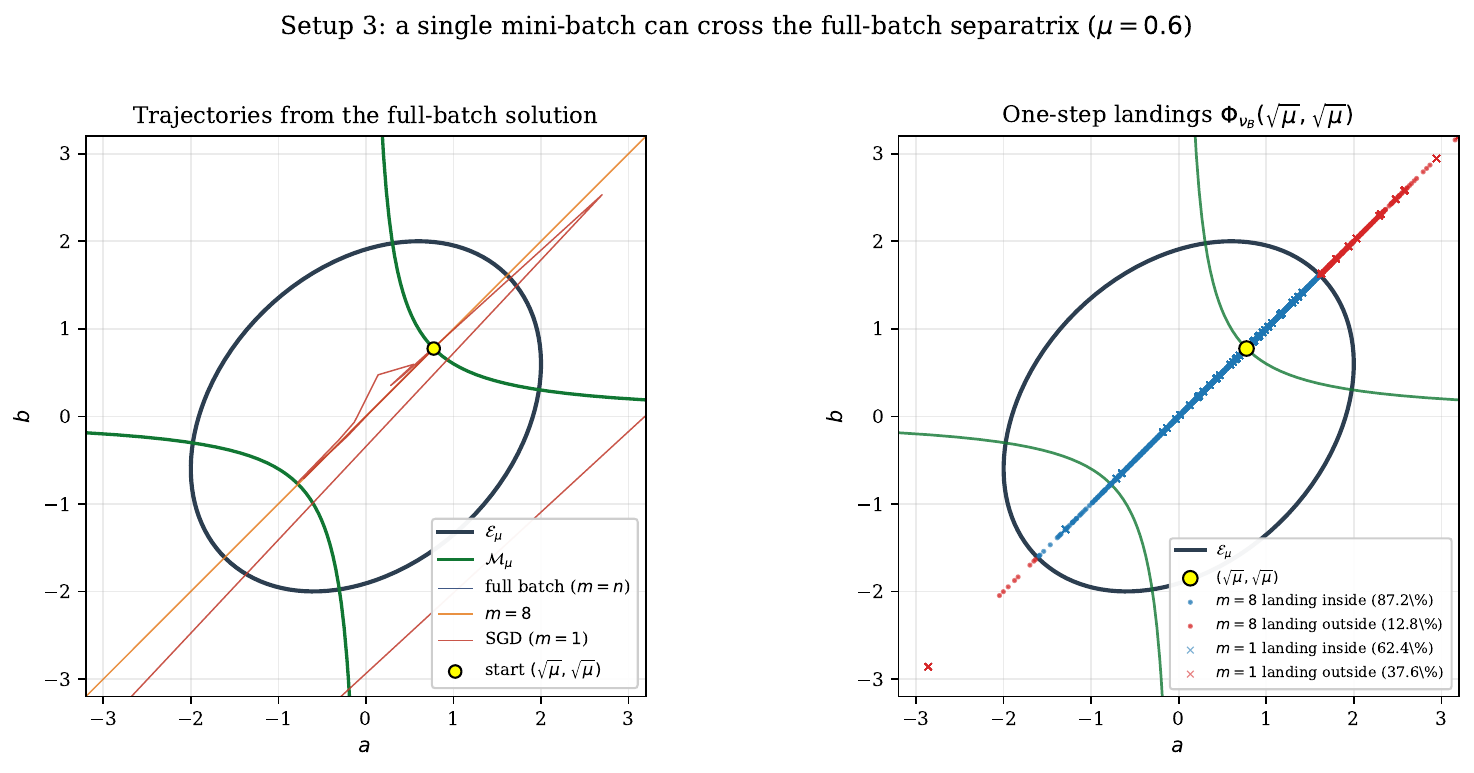}
    \caption{A single mini-batch can cross the full-batch separatrix.
    Left: \(500\)-step trajectories at batch sizes \(m\in\{n,8,1\}\) from
    \((\sqrt\mu,\sqrt\mu)+\varepsilon\). Right: one-step landings
    \(\Phi_{\nu_B}(\sqrt\mu,\sqrt\mu)\) of \(N=3000\) independent mini-batches
    for \(m=8\) (dots) and \(m=1\) (crosses), colored by the sign of
    \(D_\mu\). The fractions of landings with \(D_\mu>0\) are reported in the
    legend.}
    \label{fig:separatrix-crossing}
\end{figure}

\paragraph{What this confirms.} Mini-batching is not small additive noise on
top of full-batch GD. Each mini-batch chooses a map \(\Phi_{\nu_B}\) from the
deterministic one-parameter family, and atypical batches choose maps whose
\emph{own} separatrix lies elsewhere. From the full-batch fixed point, the
one-step image under such a batch can cross the full-batch ellipse outright.

\subsection{Transverse Lyapunov exponent versus batch size}
\label{app:exp-transverse-lyapunov}

\paragraph{Goal.} Measure the transverse Lyapunov exponent of the balanced
line under stochastic batch switching, and show that it depends monotonically
and nontrivially on batch size. Theorem~\ref{thm:random-transverse-cocycle}
gives the exact cocycle along a balanced stochastic trajectory.

\paragraph{Setup.} We use the same mini-batch model as
Section~\ref{app:exp-separatrix}, with a slightly milder distribution of
\(h_i\) so that balanced scalar dynamics stay finite for all realizations.
Along the balanced trajectory \(A_k=B_k=r_k\),
Theorem~\ref{thm:random-transverse-cocycle} gives the exact cocycle
\begin{equation}
    r_{k+1}=r_k(1+\nu_k-r_k^2),\qquad
    \delta w_{k+1}=(1+r_k^2-\nu_k)\delta w_k.
\label{eq:exp-cocycle}
\end{equation}
We iterate \eqref{eq:exp-cocycle} for \(K=1500\) steps over \(150\)
realizations per batch size, with \(r_0=\sqrt\mu\) (the balanced full-batch
fixed point), \(\delta w_0\) normalized to one, and \(m\in\{1,2,4,8,16,32,n\}\).
At each step we draw an independent mini-batch \(B_k\) from the \(n\)
prompts and set \(\nu_k=\eta\bar h_{B_k}\). Per realization we record
\(\log|\delta w_k/\delta w_0|=\sum_{j<k}\log|1+r_j^2-\nu_j|\) and fit the
Lyapunov slope \(\hat\Lambda(m)\) on the stationary segment of each
realization.

\paragraph{Results.} Figure~\ref{fig:transverse-instability} shows, in the
left panel, the median and 10--90\% band of \(\log|\delta w_k|\) for each
\(m\). Small \(m\) gives positive slope; large \(m\) gives negative slope;
intermediate \(m\) passes through zero. The right panel plots
\(\hat\Lambda(m)\) with 10--90\% bars. The deterministic full-batch value at
\(m=n\) equals \(\log|1-\mu|<0\) at this \(\mu=0.6\), recovering the
contraction predicted by Proposition~\ref{prop:local-stability-zero-error}
and Proposition~\ref{prop:period-two}.

\begin{figure}[t]
    \centering
    \includegraphics[width=\linewidth]{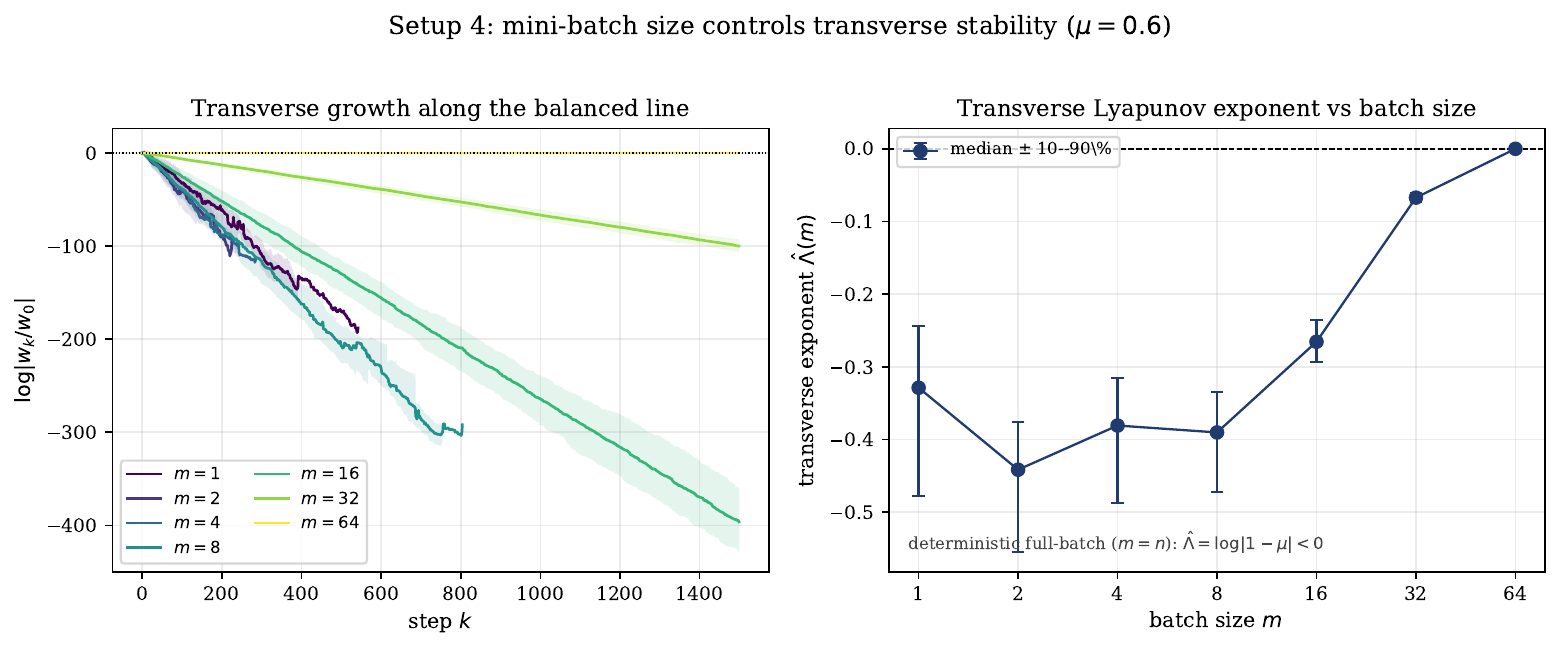}
    \caption{Transverse Lyapunov exponent of the balanced line under
    stochastic batch switching, computed via the exact cocycle
    \eqref{eq:exp-cocycle} from
    Theorem~\ref{thm:random-transverse-cocycle}. Left: median and 10--90\%
    band of \(\log|\delta w_k/\delta w_0|\) vs step, colored by batch size
    \(m\). Right: estimated \(\hat\Lambda(m)\) with error bars, showing the
    transition from transversely unstable (\(\hat\Lambda>0\), small \(m\))
    to transversely attracting (\(\hat\Lambda<0\), large \(m\)).}
    \label{fig:transverse-instability}
\end{figure}

\paragraph{What this confirms.} Mini-batch noise can change the sign of the
transverse Lyapunov exponent of a deterministically stable invariant line.
For this population, batch sizes below a sharp threshold make the balanced
line transversely repelling, even though full-batch GD converges
deterministically. This is a stochastic-bifurcation analogue of the
deterministic stability criterion \(|1+e|<1\) in
Proposition~\ref{prop:local-stability-zero-error} and
Proposition~\ref{prop:period-two}, driven by variance in \(\nu_k\) rather
than by changes in \(\mu\).

\subsection{Full-LSA multi-prompt mini-batch training}
\label{app:exp-multiprompt-lsa}

\paragraph{Goal.} Reproduce the mini-batch instability predicted by
Section~\ref{sec:minibatch} on the real linear self-attention layer
trained by SGD on many random in-context linear regression prompts,
without any reduction to the scalar sector.

\paragraph{Setup.} We draw \(n=128\) independent in-context regression
prompts \(E_i\in\R^{(d+1)\times(N+1)}\) with \(d=3\), \(N=10\), Gaussian
covariates, and a shared planted regressor \(w_\star\). Each prompt has its
own loss
\(\mathcal L_i(U)=\tfrac12\bigl(u^\top H_i u-y_{{\rm q},i}\bigr)^2\) from
\eqref{eq:one-prompt-transformer-loss}. We first compute an approximate
population minimum \(U^\star\approx\arg\min_U\tfrac1n\sum_{i=1}^n\mathcal L_i(U)\)
by \(8000\) steps of small-step full-batch gradient descent.

We then pick the learning rate \(\eta\) so that a designated reference prompt
has effective parameter
\(\mu_\star=2\eta|y_{{\rm q},\star}|\sqrt{\kappa_\star}=0.6\). At this
\(\eta\), the distribution of per-prompt effective parameters
\(\mu_i=2\eta|y_{{\rm q},i}|\sqrt{\kappa_i}\) is heavy-tailed; in this random
problem, about \(32\%\) of single-prompt batches have \(|\mu_i|>1\) and
about \(11\%\) have \(|\mu_i|>2\). We start each SGD run at
\(U_0=U^\star+\delta\xi\) with \(\|\delta\xi\|_F=5\times 10^{-3}\) and run
for \(K=400\) steps at batch sizes \(m\in\{n,8,1\}\).

\paragraph{Results.} Figure~\ref{fig:minibatch-lsa} reports three diagnostics.
\begin{itemize}[leftmargin=*, itemsep=1pt]
    \item Left: population loss \(\bar{\mathcal L}(U_k)=\tfrac1n\sum_i\mathcal L_i(U_k)\).
          Full-batch GD relaxes smoothly to \(\bar{\mathcal L}(U^\star)\).
          \(m=8\) plateaus at a higher level. \(m=1\) has large
          instability events.
    \item Middle: instantaneous residual \(u_k^\top H_\star u_k-y_{{\rm q},\star}\)
          on the reference prompt. The SGD residual crosses zero repeatedly
          and occasionally reaches large magnitude, despite the reference
          prompt being at the stable population parameter \(\mu_\star=0.6\).
    \item Right: parameter distance \(\|U_k-U^\star\|_F\), which grows
          monotonically for SGD and is approximately flat for full-batch GD.
          The inset shows the histogram of per-prompt \(\mu_i\), with the
          bifurcation thresholds \(|\mu|=1,2\) marked. Most mass is below
          the threshold but the tail is clearly past it.
\end{itemize}

\begin{figure}[t]
    \centering
    \includegraphics[width=\linewidth]{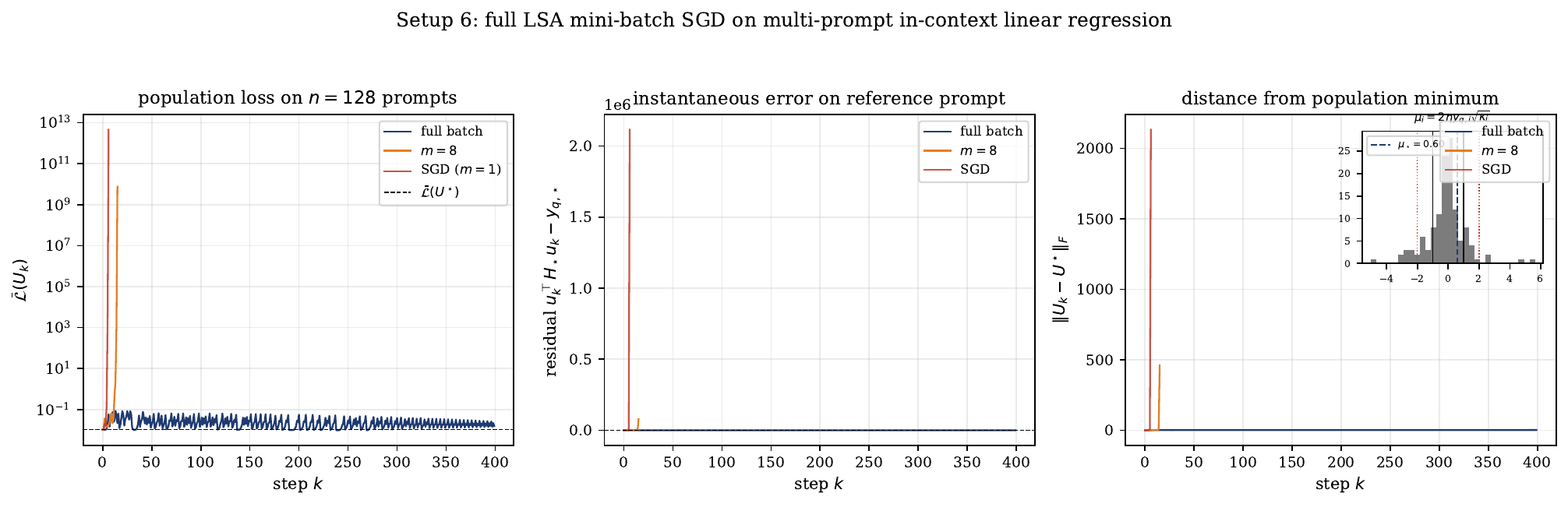}
    \caption{Full-LSA multi-prompt mini-batch training. Left: population
    loss. Middle: instantaneous reference-prompt residual. Right: distance
    to the population minimum \(U^\star\); inset shows the empirical
    distribution of per-prompt effective parameters \(\mu_i\) at the chosen
    learning rate, with the bifurcation thresholds \(|\mu|=1,2\) marked.
    The full-batch parameter at the reference prompt is
    \(\mu_\star=0.6<1\), yet SGD exhibits large instability events driven
    by the tail of the \(\mu_i\) distribution.}
    \label{fig:minibatch-lsa}
\end{figure}

\paragraph{What this confirms.} The mechanism predicted by the reduced
shared-product analysis of Section~\ref{sec:minibatch} is present in the
unreduced linear self-attention layer: stability of the population objective
at a given learning rate does not imply stability of individual mini-batches,
and a tail of per-prompt effective parameters past \(|\mu|=1\) or
\(|\mu|=2\) drives SGD instability that full-batch GD does not see. Figure
\ref{fig:minibatch-lsa} is the multi-prompt counterpart of the single-example
geometric statement in Figure~\ref{fig:separatrix-crossing}.

\end{document}